\def\hpsi{\hat\psi}
\def\Pip{\Pi_\perp}
\def\ncR{\nabla\cR}
\title{Characterizing the implicit bias via a primal-dual analysis}
\author{Ziwei Ji\qquad Matus Telgarsky\\
\tt{\{ziweiji2,mjt\}@illinois.edu}\\
University of Illinois, Urbana-Champaign}
\date{}
\begin{document}

\maketitle

\begin{abstract}
  This paper shows that the implicit bias of gradient descent on linearly
  separable data is exactly characterized by the optimal solution of a dual
  optimization problem given by a smoothed margin, even for general losses.
  This is in contrast to prior results, which are often tailored to
  exponentially-tailed losses.
For the exponential loss specifically, with $n$ training examples and $t$
  gradient descent steps, our dual analysis further allows us to prove an
  $O\del{\ln(n)/\ln(t)}$ convergence rate to the $\ell_2$ maximum margin
  direction, when a constant step size is used.
  This rate is tight in both $n$ and $t$, which has not been presented by prior
  work.
  On the other hand, with a properly chosen but aggressive step size schedule,
  we prove $O(1/t)$ rates for both $\ell_2$ margin maximization and implicit bias,
  whereas prior work (including all first-order methods for the general hard-margin
  linear SVM problem) proved $\widetilde{O}(1/\sqrt{t})$ margin rates, or $O(1/t)$
  margin rates to a suboptimal margin, with an implied (slower) bias rate.
Our key observations include that gradient descent on the primal variable
  naturally induces a mirror descent update on the dual variable, and that the
  dual objective in this setting is smooth enough to give a faster rate.
\end{abstract}

\section{Introduction}

Recent work has shown that in deep learning, the solution found by gradient
descent not only gives low training error, but also has low complexity and
thus generalizes well \citep{zhang_gen,spec}.
This motivates the study of the implicit bias of gradient descent: amongst all
choices with low training error, which is preferred by gradient descent?

This topic has been extensively studied recently: specifically on linear
classifiers, \citet{nati_logistic} show that with linearly separable data and
exponentially-tailed losses (such as the exponential loss and the logistic
loss), gradient descent converges to the $\ell_2$ maximum margin direction.
\citet{min_norm} further characterize the implicit bias in the nonseparable
setting, while \citet{GLSS18} consider generic optimization algorithms such as
steepest descent and adaptive gradient descent.

However, as detailed below, most prior results rely on exponentially-tailed
losses, and do not prove tight rates for a range of step size schedules.
In this work, we focus on linear classifiers and linearly separable data, and
make contributions along all these directions:
\begin{itemize}
  \item We prove that for a broad class of losses that asymptote to $0$,
  including exponentially-tailed losses, polynomially-tailed losses and others,
  the gradient descent iterates grow unboundedly, but their directions (i.e.,
  the normalized gradient descent iterates) converge to some point given by the
  dual optimization problem corresponding to a specific smoothed margin function
  with an $O(1/\sqrt{t})$ rate (cf. \Cref{fact:bias_main}).
Previously, \citet{gd_reg} also handle general losses, and they prove that the
  gradient descent iterates converge to the same direction as regularized
  solutions.
  However, they do not further give a closed-form characterization of the
  implicit bias, and their convergence result is asymptotic.

  \item For the exponential/logistic loss, we can use a much more aggressive
  step size schedule, with which we
  prove an $O(\ln(n)/t)$ rate for $\ell_2$ margin maximization (cf.
  \Cref{fact:margin_t}).
  For the exponential loss, we can also prove an $O(\ln(n)/t)$ rate for convergence
  to the implicit bias (cf. \Cref{fact:tight}).
  Such a step size schedule is also used in AdaBoost
  \citep{freund_schapire_adaboost}; however, it does not always maximize the
  corresponding $\ell_1$ margin, with counterexamples given in
  \citep{rudin2004dynamics}.
  To maximize the $\ell_1$ margin, we need to shrink the step sizes, and prior
  work has shown either an $O(1/t)$ convergence rate to a suboptimal margin
  \citep{mjt_margins}, or an $\widetilde{O}(1/\sqrt{t})$ convergence rate to the
  maximum margin \citep{NLGSS18}.
  Their proof ideas can be applied to generic steepest descent, but cannot
  prove our $O(\ln(n)/t)$ margin maximization rate.

  \item On the other hand, with a constant step size that is more widely used,
  for the exponential loss we prove a tight
  $O\del{\ln(n)/\ln(t)}$ rate for the directional convergence of gradient
  descent to the $\ell_2$ maximum margin direction (cf.
  \Cref{fact:tight}).
  Previously, \citet{nati_logistic} prove an $O\del{1/\ln(t)}$ rate, but the
  dependency on $n$ is not specified; it should be carefully handled since the
  denominator only grows at a rate of $\ln(t)$.
  \citet{min_norm} consider general nonseparable data, but their convergence
  rate for the implicit bias in the separable setting is
  $O\del{\sqrt{\ln(n)/\ln(t)}}$, which is quadratically slower than our rate.
\end{itemize}

All of our results are based on a primal-dual analysis of gradient descent.
One key observation is that gradient descent on the primal variable induces
exactly a mirror descent update on the dual variable.
This perspective has been studied in \citep{boosting_mirror} for boosting /
coordinate descent.
However, they only prove an $\widetilde{O}(1/\sqrt{t})$ dual convergence rate,
while we can further prove an $O(1/t)$ dual rate by exploiting the smoothness of
the dual objective (cf. \Cref{fact:dual_main}).
More surprisingly, our dual analysis further gives rise to a faster primal
convergence guarantee (cf. \Cref{fact:dual_main}), which allows us to prove the
$O(1/t)$ margin maximization and implicit bias rates.

\subsection{Related work}\label{sec:related_work}

Margin maximization and implicit bias are heavily studied in the context of
boosting methods
\citep{boosting_margin,schapire_freund_book_final,shai_singer_weaklearn_linsep}.
Boosting methods are themselves a form of coordinate descent, one whose
convergence is difficult to analyze \citep{schapire_adaboost_convergence_rate}; interestingly, the original proof of AdaBoost's empirical risk convergence also
uses an analysis in the dual \citep{collins_schapire_singer_adaboost_bregman},
though without any rate.
This same dual analysis, and also work by \citet{kivinen_warmuth_bregman}, point
out that AdaBoost, in the dual, performs iterative Bregman projection.

As mentioned above, prior work did not achieve margin maximization rates better than
$O(1/\sqrt{t})$, possibly owing to the nonsmoothness of the problem.
This topic is discussed in \citep[Section 3]{NLGSS18},
where it is stated that the current best rate
is $O(1/\sqrt{t})$ for the general hard-margin linear SVM problem
via first-order methods, that is, not merely restricting to the framework in this present
work, which applies gradient descent to smooth losses which asymptote to zero.

Regarding lower bounds,
\citet{margin_lb} prove that, under a few conditions including
$\epsilon^{-2}=O\del{\min\{n,d\}}$ where $n$ is the number of data examples and
$d$ is the input dimension, to maximize the margin up to an additive error of
$\epsilon$, the optimization algorithm has to read
$\Omega\del{\epsilon^{-2}(n+d)}$ entries of the data matrix.
Due to the required condition, this lower bound is basically
$\Omega\del{(n+d)\min\{n,d\}}=\Omega(nd)$.
On the other hand, in this paper we analyze full-batch gradient descent, which
reads $nd$ entries of the data matrix at each step, and therefore does not
violate this lower bound.
More generally, the lower bound for general nonsmooth optimization is
$1/\sqrt{t}$ \citep[Theorem 3.2.1]{nesterov}, which is arguably one source of difficulty,
as the limiting hard-margin problem is nonsmooth.

The implicit bias of gradient descent has also been studied in more complicated
models, such as deep linear networks and homogeneous networks
\citep{nati_lnn,align,kaifeng_jian_margin,chizat_bach_imp,kernel_rich,dir_align}.

\subsection{Preliminaries and assumptions}

In this paper, $\|\cdot\|$ denotes the $\ell_2$-norm.
The dataset is denoted by $\{(x_i,y_i)\}_{i=1}^n$, where $x_i\in\R^d$ satisfies
$\|x_i\|\le1$, and $y_i\in\{-1,+1\}$.
We assume the data examples are linearly separable.
\begin{assumption}\label{cond:sep}
  There exists $u\in\R^d$ such that $y_i \langle u,x_i\rangle>0$ for all $i$.
\end{assumption}

We consider the unbounded, unregularized empirical risk minimization problem:
\begin{align*}
  \min_{w\in\R^d}\cR(w):=\frac{1}{n}\sum_{i=1}^{n}\ell\del{-y_i\langle w,x_i\rangle}=\frac{1}{n}\sum_{i=1}^{n}\ell\del{\langle w,z_i\rangle},
\end{align*}
where $z_i:=-y_ix_i$, and $\ell:\R\to\R$ denotes an increasing loss function.
We collect $z_i$ into a matrix $Z\in\R^{n\times d}$ whose $i$-th row is
$z_i^\top$.
Moreover, given $\xi\in\R^n$, define
\begin{align*}
  \cL(\xi):=\sum_{i=1}^{n}\ell(\xi_i),\quad\textrm{and}\quad\psi(\xi):=\ell^{-1}\del{\cL(\xi)}.
\end{align*}
Consequently, $\cR(w)=\cL(Zw)/n$.
The function $\psi$ is called a ``generalized sum'' \citep{ineqs} or a
``smoothed margin'' \citep{kaifeng_jian_margin}, and will be crucial in our
analysis.
It ensures that the dual variable is properly normalized; without $\ell^{-1}$
the dual variable would converge to $0$ in the separable case.
We make the following assumption on the loss $\ell$.
\begin{assumption}\label{cond:loss}
  The loss function $\ell$ satisfies:
  \begin{enumerate}
    \item $\ell,\ell',\ell''>0$, and $\lim_{z\to-\infty}\ell(z)=0$;

    \item $z\ell'(z)/\ell(z)$ is increasing on $(-\infty,0)$, and
    $\lim_{z\to-\infty}z\ell'(z)=0$;

    \item for all $b\ge1$, there exists $c>0$ (which can depend on $b$), such
    that for all $a>0$, we have
    $\ell'\del{\ell^{-1}(a)}/\ell'\del{\ell^{-1}(ab)}\ge c$;

    \item the function $\psi$ is convex and $\beta$-smooth with respect to the
    $\ell_\infty$ norm.
  \end{enumerate}
\end{assumption}
Examples satisfying \Cref{cond:loss} include the exponential loss
$\lexp(z):=e^z$, the logistic loss $\llog(z):=\ln(1+e^z)$, and
polynomially-tailed losses (cf. \Cref{fact:loss_examples}).
Note that the smoothness constant $\beta$ will affect the convergence rate, and
although $\beta=1$ for the exponential loss, it can be as large as $n$ for other
losses such as the logistic loss (cf. \Cref{fact:psi_smooth}).
In such settings, we can make a finer analysis which uses the smoothness
constant on sublevel sets: for example, for the logistic loss, $\psi$ is
$2$-smooth on $\cbr{\xi\middle|\psi(\xi)\le0}$ (cf. \Cref{fact:warm_start}).
We still assume global smoothness in \Cref{cond:loss} since it can simplify the
analysis a lot and highlight the key ideas.

\section{A primal-dual convergence analysis}\label{sec:dual}

In this section we prove a primal-dual convergence result (cf.
\Cref{fact:bias_main}), which will be used in the following sections to
characterize a general implicit bias and prove a few fast convergence rates.

We analyze gradient descent on the (primal) risk $\cR(w)$.
Gradient descent starts from some initialization $w_0$, and sets
$w_{t+1}:=w_t-\eta_t\nR(w_t)$ for $t\ge0$.
For each gradient descent iterate $w_t$, let $p_t:=Zw_t$ and
$q_t:=\nabla\psi(p_t)$; we call $q_t$ the corresponding dual variable of $w_t$.
Note that
\begin{align*}
  q_{t,i}=\frac{\ell'(p_{t,i})}{\ell'\del{\psi(p_t)}}=\frac{\ell'\del{\langle w_t,z_i\rangle}}{\ell'\del{\psi(Zw_t)}},
\end{align*}
and thus
\begin{align*}
    w_{t+1}=w_t-\eta_t\nR(w_t)=w_t-\heta_tZ^\top q_t,
\end{align*}
where $\heta_t:=\eta_t\ell'\del{\psi(Zw_t)}/n$, which will be convenient
in our analysis.

The key observation is that the induced update on the dual variable $q_t$ is
actually a mirror descent (more exactly, a dual averaging) update:
\begin{align}\label{eq:md}
  p_{t+1}=p_t-\heta_tZZ^\top q_t=p_t-\heta_t\nabla f(q_t),\quad\textrm{and}\quad q_{t+1}=\nabla\psi(p_{t+1}),
\end{align}
where $f(q):=\enVert{Z^\top q}^2/2$.
Therefore we can use a mirror descent analysis to prove a dual convergence
result, which will later help us characterize the implicit bias for general
losses and improve various rates.

Let $\psi^*$ denote the convex conjugate of $\psi$.
Given $q\in\dom\,\psi^*$, the generalized Bregman distance \citep{gordon_mdp}
between $q$ and $q_t$ is defined as
\begin{align*}
  D_{\psi^*}(q,q_t):=\psi^*(q)-\psi^*(q_t)-\langle p_t,q-q_t\rangle.
\end{align*}
It is a generalization of the Bregman distance to the nondifferentiable setting,
since $\psi^*$ may not be differentiable at $q_t$; instead we just use $p_t$ in
the definition.
Here is our main convergence result.
\begin{theorem}\label{fact:dual_main}
  Under Assumptions \ref{cond:sep} and \ref{cond:loss}, for any $t\ge0$ and
  $q\in\dom\,\psi^*$, if $\heta_t\le1/\beta$, then
  \begin{align*}
    f(q_{t+1})\le f(q_t),\quad\textrm{and}\quad \heta_t\del{f(q_{t+1})-f(q)}\le D_{\psi^*}(q,q_t)-D_{\psi^*}(q,q_{t+1}).
  \end{align*}
  As a result,
  \begin{align*}
    f(q_t)-f(q)\le \frac{D_{\psi^*}(q,q_0)-D_{\psi^*}(q,q_t)}{\sum_{j<t}^{}\heta_j}\le \frac{D_{\psi^*}(q,q_0)}{\sum_{j<t}^{}\heta_j}.
  \end{align*}
  Moreover,
  \begin{align*}
    \psi(p_t)-\psi(p_{t+1})\ge\heta_t\del{f(q_t)+f(q_{t+1})}=\frac{\heta_t}{2}\enVert{Z^\top q_t}^2+\frac{\heta_t}{2}\enVert{Z^\top q_{t+1}}^2,
  \end{align*}
  and thus if $\heta_t$ is nonincreasing, then
  \begin{align*}
    \psi(p_0)-\psi(p_t)\ge \sum_{j<t}^{}\heta_j\enVert{Z^\top q_j}^2-\frac{\heta_0}{2}\enVert{Z^\top q_0}^2.
  \end{align*}
\end{theorem}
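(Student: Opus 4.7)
The plan is to view the update in \eqref{eq:md} as mirror descent (equivalently, dual averaging) applied to $f(q)=\enVert{Z^\top q}^2/2$ with mirror map $\psi^*$, and then extract the primal progress via the standard identity $D_\psi(p_{t+1},p_t)=D_{\psi^*}(q_t,q_{t+1})$ that holds whenever $q_\bullet=\nabla\psi(p_\bullet)$. The remaining ingredients are standard duality facts: $\psi$ being $\beta$-smooth in $\ell_\infty$ is equivalent to $\psi^*$ being $(1/\beta)$-strongly convex in $\ell_1$; and since each $\enVert{z_i}\le 1$ we have $\enVert{Z^\top q}\le\enVert{q}_1$, so $f$ is $1$-smooth with respect to $\ell_1$.

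For the mirror-descent inequality I would first derive the three-point identity
\begin{align*}
\heta_t\langle\nabla f(q_t),q_{t+1}-q\rangle=D_{\psi^*}(q,q_t)-D_{\psi^*}(q,q_{t+1})-D_{\psi^*}(q_{t+1},q_t),
\end{align*}
which follows by substituting $p_{t+1}-p_t=-\heta_t\nabla f(q_t)$ into the definition of $D_{\psi^*}$ and collecting terms. Combining this with the convexity bound $f(q_{t+1})-f(q)\le\langle\nabla f(q_t),q_{t+1}-q\rangle+D_f(q_{t+1},q_t)$, it suffices to absorb the quadratic error: $\heta_t D_f(q_{t+1},q_t)\le D_{\psi^*}(q_{t+1},q_t)$. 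The $\ell_1$-smoothness of $f$ gives $D_f(q_{t+1},q_t)\le\tfrac{1}{2}\enVert{q_{t+1}-q_t}_1^2$, while the $\ell_1$-strong convexity of $\psi^*$ gives $D_{\psi^*}(q_{t+1},q_t)\ge\tfrac{1}{2\beta}\enVert{q_{t+1}-q_t}_1^2$, so $\heta_t\le 1/\beta$ is exactly what is needed. Setting $q=q_t$ collapses the right-hand side to $-D_{\psi^*}(q_t,q_{t+1})\le 0$, giving $f(q_{t+1})\le f(q_t)$; telescoping while dropping the nonnegative $D_{\psi^*}(q,q_t)$ produces the third displayed inequality.

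For the primal progress the key identity is
\begin{align*}
\psi(p_t)-\psi(p_{t+1})=2\heta_t f(q_t)-D_\psi(p_{t+1},p_t),
\end{align*}
obtained by expanding $D_\psi(p_{t+1},p_t)=\psi(p_{t+1})-\psi(p_t)-\langle q_t,p_{t+1}-p_t\rangle$ and noting $\langle q_t,p_t-p_{t+1}\rangle=\heta_t\enVert{Z^\top q_t}^2=2\heta_t f(q_t)$. Now the exchange identity converts $D_\psi(p_{t+1},p_t)$ into $D_{\psi^*}(q_t,q_{t+1})$, and the just-proved mirror-descent inequality with $q=q_t$ yields $D_{\psi^*}(q_t,q_{t+1})\le\heta_t(f(q_t)-f(q_{t+1}))$; substituting gives exactly the stated drop $\heta_t(f(q_t)+f(q_{t+1}))$.

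Finally, summing $\psi(p_j)-\psi(p_{j+1})\ge(\heta_j/2)(\enVert{Z^\top q_j}^2+\enVert{Z^\top q_{j+1}}^2)$ over $j=0,\ldots,t-1$ assigns each interior index $j\in\{1,\ldots,t-1\}$ the coefficient $(\heta_{j-1}+\heta_j)/2\ge\heta_j$ by monotonicity of $\heta$, while the left-endpoint term $\enVert{Z^\top q_0}^2$ appears only with weight $\heta_0/2$ and the leftover right-endpoint term $(\heta_{t-1}/2)\enVert{Z^\top q_t}^2$ is simply discarded; collecting yields $\sum_{j<t}\heta_j\enVert{Z^\top q_j}^2-(\heta_0/2)\enVert{Z^\top q_0}^2$ as claimed. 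I expect the main obstacle to be conceptual rather than technical: spotting that the apparently stronger drop $\heta_t(f(q_t)+f(q_{t+1}))$ is essentially free once one recognizes the exchange identity and feeds the mirror-descent bound on $D_{\psi^*}(q_t,q_{t+1})$ back into the primal progress expression, since a naive descent-lemma argument alone gives only the weaker $\heta_t f(q_t)$ drop.
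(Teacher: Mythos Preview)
Your proposal is correct and follows essentially the same approach as the paper: both use the three-point identity for mirror descent, the $\ell_1$-smoothness of $f$, and the $\ell_1$-strong convexity of $\psi^*$ to get the dual iteration bound, and then rewrite the primal drop in terms of a Bregman divergence to extract the $\heta_t(f(q_t)+f(q_{t+1}))$ guarantee. The only cosmetic difference is that for the primal part the paper expands $D_{\psi^*}(q_{t+1},q_t)$ directly and reuses the intermediate inequality \eqref{eq:f_psi_conj}, whereas you pass through the exchange identity $D_\psi(p_{t+1},p_t)=D_{\psi^*}(q_t,q_{t+1})$ and feed back the mirror-descent bound with $q=q_t$; both routes are equivalent.
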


Here are some comments on \Cref{fact:dual_main}.
\begin{itemize}
  \item If we let $\heta_t=1/\beta$, then we get an $O(1/t)$ dual convergence
  rate.
  By contrast, \citet{boosting_mirror} consider boosting, and can only handle
  step size $\heta_t\propto1/\sqrt{t+1}$ and give an $\widetilde{O}(1/\sqrt{t})$
  dual rate.
  This is because the dual objective $f(q):=\enVert{Z^\top q}^2/2$ for gradient
  descent is smooth, while for boosting the dual objective is given by
  $\enVert{Z^\top q}_\infty^2/2$, which is nonsmooth.
  In some sense, we can handle a constant $\heta_t$ and prove a faster rate
  because \emph{both the primal objective $\psi$ and the dual objective $f$ are
  smooth}.

  \item Moreover, the primal and dual smoothness allow us to prove a faster
  primal convergence rate for $\psi$: note that in addition to the main term
  $\sum_{j<t}^{}\heta_j\enVert{Z^\top q_j}^2$, \Cref{fact:dual_main} only has a
  bounded error term.
  By contrast, if we use a standard smoothness guarantee and $\heta_t=1/\beta$,
  then the error term can be as large as
  $\sum_{j<t}^{}\heta_j\enVert{Z^\top q_j}^2/2$ (cf.
  \Cref{fact:standard_smooth}).
  In fact, the rate for $\psi$ given by \Cref{fact:dual_main} \emph{only differs
  from the natural lower bound by an additive constant}: after $t$ applications of
  the convexity of $\psi$,
  then $\psi(p_0)-\psi(p_t)\le \sum_{j<t}^{}\heta_j\enVert{Z^\top q_j}^2$.

  While a multiplicative constant factor does not hurt the risk bound too much, it can stop us
  from proving an $O(1/t)$ margin maximization rate with a constant step size
  for the exponential loss (cf. \Cref{sec:refine}).

  \item For the exponential loss (and other exponentially-tailed losses),
  \citet{nati_logistic} prove that $w_t$ converges to the maximum margin
  direction.
  This is called an ``implicit bias'' result since it does not follow from
  classical results such as risk minimization, and requires a nontrivial proof
  tailored to the exponential function.
  By contrast, \Cref{fact:dual_main} explicitly shows that the dual iterates
  minimize the dual objective $f$, and the minimum of $f$ is given exactly by the
  maximum margin (cf. \cref{eq:margin_dual}).
  Moreover, this dual perspective and \Cref{fact:dual_main} can help us
  characterize the implicit bias of a general loss function (cf.
  \Cref{fact:bias_main}).
\end{itemize}

\subsection{Proof of \Cref{fact:dual_main}}

Here are some standard results we need; a proof is given in \Cref{app_sec:dual}
for completeness.
Some refined results are given in \Cref{fact:warm_start}, which use the
smoothness constants over sublevel sets that could be much better.
\begin{lemma}\label[lemma]{fact:standard_smooth}
  We have
  \begin{align*}
    \psi(p_{t+1})-\psi(p_t)\le-\heta_t\enVert{Z^\top q_t}^2+\frac{\beta\heta_t^2}{2}\enVert{Z^\top q_t}^2\qquad\textrm{and}\qquad{}D_{\psi^*}(q_{t+1},q_t)\ge \frac{1}{2\beta}\|q_{t+1}-q_t\|_1^2.
  \end{align*}
\end{lemma}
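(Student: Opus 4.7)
The plan is to dispatch the two inequalities separately: the first is a direct consequence of the $\ell_\infty$-smoothness of $\psi$ together with the row-norm bound $\|z_i\|\le 1$, while the second follows from the standard convex-duality fact that $\beta$-smoothness of $\psi$ with respect to $\|\cdot\|_\infty$ is equivalent to $(1/\beta)$-strong convexity of $\psi^*$ with respect to the dual norm $\|\cdot\|_1$.

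For the first inequality, I would start from the $\beta$-smoothness upper bound
\begin{align*}
\psi(p_{t+1})-\psi(p_t)\le \langle \nabla\psi(p_t),\, p_{t+1}-p_t\rangle+\frac{\beta}{2}\|p_{t+1}-p_t\|_\infty^2.
\end{align*}
Substituting $\nabla\psi(p_t)=q_t$ and the update rule $p_{t+1}-p_t=-\heta_t ZZ^\top q_t$ turns the linear term into $-\heta_t\langle q_t,ZZ^\top q_t\rangle=-\heta_t\|Z^\top q_t\|^2$. For the quadratic term, observe that the $i$-th coordinate of $ZZ^\top q_t$ equals $\langle z_i,Z^\top q_t\rangle$, which by Cauchy--Schwarz and $\|z_i\|\le 1$ is bounded in absolute value by $\|Z^\top q_t\|$; hence $\|p_{t+1}-p_t\|_\infty^2\le \heta_t^2\|Z^\top q_t\|^2$, and combining the two terms yields the claim.

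For the second inequality, I would invoke the Kakade--Shalev-Shwartz--Tewari duality between smoothness and strong convexity: since $\psi$ is convex and $\beta$-smooth with respect to $\|\cdot\|_\infty$, its conjugate $\psi^*$ is $(1/\beta)$-strongly convex with respect to $\|\cdot\|_1$. The only care required is that $\psi^*$ need not be differentiable at $q_t$, so strong convexity must be formulated via subgradients. This is why the statement uses the generalized Bregman divergence with $p_t$ in place of a gradient: Fenchel duality gives $p_t\in \partial\psi^*(q_t)$ because $q_t=\nabla\psi(p_t)$, and the subgradient form of strong convexity then yields
\begin{align*}
\psi^*(q_{t+1})\ge \psi^*(q_t)+\langle p_t,\,q_{t+1}-q_t\rangle+\frac{1}{2\beta}\|q_{t+1}-q_t\|_1^2,
\end{align*}
which is exactly $D_{\psi^*}(q_{t+1},q_t)\ge \|q_{t+1}-q_t\|_1^2/(2\beta)$.

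The only mildly delicate step is the second one: one must justify the smoothness/strong-convexity duality in the nondifferentiable setting and correctly identify $p_t$ as a subgradient of $\psi^*$ at $q_t$ via the Fenchel--Young equality. Everything else is a direct computation, so I expect the proof to be short once these two standard facts are cited.
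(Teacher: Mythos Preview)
Your proposal is correct and matches the paper's proof essentially line for line: the first inequality is obtained exactly via the $\ell_\infty$-smoothness bound, the substitution $p_{t+1}-p_t=-\heta_t ZZ^\top q_t$, and the Cauchy--Schwarz estimate $\|ZZ^\top q_t\|_\infty\le\|Z^\top q_t\|$; the second inequality is dispatched by citing the smoothness/strong-convexity duality (the paper points to \citep[Lemma 2.19]{shalev_online}). Your extra remark that $p_t\in\partial\psi^*(q_t)$ is what makes the generalized Bregman bound valid is a helpful clarification the paper leaves implicit.
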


Next is a standard result for mirror descent; a proof is also included in
\Cref{app_sec:dual}.
\begin{lemma}\label[lemma]{fact:md_step}
  For any $t\ge0$ and $q\in\dom\,\psi^*$, it holds that
  \begin{align*}
    \heta_t\del{f(q_t)-f(q)}\le\ip{\heta_t\nabla f(q_t)}{q_t-q_{t+1}}-D_{\psi^*}(q_{t+1},q_t)+D_{\psi^*}(q,q_t)-D_{\psi^*}(q,q_{t+1}).
  \end{align*}
  Moreover, $q_{t+1}$ is the unique minimizer of
  \begin{align*}
    h(q):=f(q_t)+\ip{\nf(q_t)}{q-q_t}+\frac{1}{\heta_t}D_{\psi^*}(q,q_t),
  \end{align*}
  and specifically $h(q_{t+1})\le h(q_t)=f(q_t)$.
\end{lemma}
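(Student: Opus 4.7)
\textbf{Proof plan for \Cref{fact:md_step}.}
The plan is to obtain the first (key) inequality from just three ingredients: convexity of the dual objective $f$, the primal update identity $\heta_t\nabla f(q_t)=p_t-p_{t+1}$ coming from \cref{eq:md}, and a Bregman three-point identity for $\psi^*$. First I would write convexity of $f$ at $q_t$ as $f(q_t)-f(q)\le\ip{\nabla f(q_t)}{q_t-q}$, multiply by $\heta_t$, and split the right-hand side as $\ip{\heta_t\nabla f(q_t)}{q_t-q_{t+1}}+\ip{\heta_t\nabla f(q_t)}{q_{t+1}-q}$. The second piece is where I substitute $\heta_t\nabla f(q_t)=p_t-p_{t+1}$ to turn it into $\ip{p_t-p_{t+1}}{q_{t+1}-q}$, moving the analysis entirely into primal quantities.

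Next I would derive the Bregman three-point identity. Plugging the definition $D_{\psi^*}(q,q_s)=\psi^*(q)-\psi^*(q_s)-\ip{p_s}{q-q_s}$ into the difference $D_{\psi^*}(q,q_t)-D_{\psi^*}(q,q_{t+1})$ and regrouping, the $\psi^*(q)$ terms cancel and what remains rearranges into $D_{\psi^*}(q_{t+1},q_t)+\ip{p_{t+1}-p_t}{q-q_{t+1}}$. This is exactly $D_{\psi^*}(q_{t+1},q_t)-\ip{p_t-p_{t+1}}{q_{t+1}-q}$, so the mixed inner product from step one becomes $D_{\psi^*}(q,q_t)-D_{\psi^*}(q,q_{t+1})-D_{\psi^*}(q_{t+1},q_t)$. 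Substituting back yields the claimed inequality.

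For the ``moreover'' half I would read off $h(q_t)=f(q_t)$ from the fact that both the linear term and $D_{\psi^*}(q_t,q_t)$ vanish, and then identify $q_{t+1}$ as the unique minimizer via Fenchel duality. The subgradient optimality condition for $h$ is $\nabla f(q_t)+\frac{1}{\heta_t}(p-p_t)=0$ for some $p\in\partial\psi^*(q)$; solving gives $p=p_t-\heta_t\nabla f(q_t)=p_{t+1}$, and since $q_{t+1}=\nabla\psi(p_{t+1})$ we get $p_{t+1}\in\partial\psi^*(q_{t+1})$ by the Fenchel inversion $q=\nabla\psi(p)\iff p\in\partial\psi^*(q)$, so $q=q_{t+1}$ is an optimizer; uniqueness follows because $\psi$ being $\beta$-smooth with respect to $\|\cdot\|_\infty$ makes $\psi^*$ strictly (indeed strongly, with modulus $1/\beta$ in $\|\cdot\|_1$) convex on its domain, hence $h$ is strictly convex.

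The main obstacle I anticipate is cleanly handling the nondifferentiability of $\psi^*$ at $q_t$: the ``generalized'' Bregman distance uses the specific subgradient $p_t$ rather than $\nabla\psi^*(q_t)$, so I must be careful that every step (the three-point identity, the first-order condition for $h$) uses $p_t\in\partial\psi^*(q_t)$ and $p_{t+1}\in\partial\psi^*(q_{t+1})$, not gradients of $\psi^*$. Once the Fenchel pairing $(p_s,q_s)$ is used consistently, the algebra goes through exactly as in the standard smooth-mirror-descent case.
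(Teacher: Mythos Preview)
Your proposal is correct and follows essentially the same route as the paper's proof: convexity of $f$, split the inner product through $q_{t+1}$, substitute $\heta_t\nabla f(q_t)=p_t-p_{t+1}$, and apply the Bregman three-point identity; for the minimizer claim, both you and the paper use the first-order condition $p_{t+1}\in\partial\psi^*(q)$ together with the Fenchel inversion $q=\nabla\psi(p)\iff p\in\partial\psi^*(q)$ (the paper cites \citep[Theorem 23.5]{rockafellar}). The only cosmetic difference is that the paper obtains uniqueness directly from that equivalence (since $\nabla\psi(p_{t+1})$ is a single point), whereas you add a separate strict/strong convexity argument for $\psi^*$; either works.
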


On the other hand, we note the $\ell_1$ smoothness of $f$.
\begin{lemma}\label[lemma]{fact:dual_smooth}
  The function $f:\R^n\to\R$ given by $f(q):=\enVert{Z^\top q}^2/2$ is $1$-smooth
  with respect to the $\ell_1$ norm.
\end{lemma}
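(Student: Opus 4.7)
The plan is to exploit the fact that $f$ is quadratic, so the smoothness inequality reduces to a uniform bound on the Hessian quadratic form. Specifically, since $f(q) = \tfrac12 \langle q, ZZ^\top q\rangle$, the exact second-order expansion gives
\begin{align*}
  f(q') - f(q) - \langle \nabla f(q), q' - q\rangle = \tfrac12 \langle ZZ^\top (q'-q), q'-q\rangle = \tfrac12 \enVert{Z^\top(q'-q)}^2,
\end{align*}
so the entire task is to show that $\enVert{Z^\top v}^2 \le \|v\|_1^2$ for every $v\in\R^n$.

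For this, I would write $Z^\top v = \sum_{i=1}^n v_i z_i$ and apply the triangle inequality together with the standing assumption $\|z_i\|\le 1$ from the preliminaries:
\begin{align*}
  \enVert{Z^\top v} = \enVert{\sum_{i=1}^n v_i z_i} \le \sum_{i=1}^n |v_i|\,\|z_i\| \le \sum_{i=1}^n |v_i| = \|v\|_1.
\end{align*}
Squaring yields the desired bound, and plugging back into the quadratic expansion gives $f(q') \le f(q) + \langle \nabla f(q), q' - q\rangle + \tfrac12 \|q' - q\|_1^2$, which is exactly the definition of $1$-smoothness of $f$ with respect to the $\ell_1$ norm.

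There is essentially no obstacle here: the argument is a one-line consequence of the triangle inequality and the data normalization $\|z_i\|\le 1$. The only thing worth remarking is that the constant $1$ is tight when all $z_i$ are unit vectors pointing in the same direction, and that it is critical the data lies in the unit ball, since an arbitrary $Z$ would give smoothness constant $\max_i \|z_i\|^2$ instead.
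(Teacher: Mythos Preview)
Your proof is correct and hinges on the same core estimate as the paper, namely $\enVert{Z^\top v}\le \|v\|_1$ via the triangle inequality and $\|z_i\|\le 1$. The only difference is in which characterization of smoothness you verify: you exploit that $f$ is quadratic to write the exact second-order expansion and bound the Hessian quadratic form directly, whereas the paper instead checks the Lipschitz-gradient condition $\enVert{\nabla f(\theta)-\nabla f(\theta')}_\infty\le \|\theta-\theta'\|_1$, which costs an extra Cauchy--Schwarz step to pass from $\enVert{ZZ^\top v}_\infty$ to $\enVert{Z^\top v}$. Your route is marginally more economical here because of the quadratic structure; the paper's route would generalize verbatim to any $f$ with Hessian $ZZ^\top$ without needing the explicit expansion.
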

\begin{proof}
  For any $\theta,\theta'\in\R^n$, using the Cauchy-Schwarz inequality and
  $\|z_i\|\le1$,
  \begin{align*}
    \enVert{\nf(\theta)-\nf(\theta')}_\infty=\enVert{ZZ^\top(\theta-\theta')}_\infty & =\max_{1\le i\le n}\envert{\left \langle Z^\top(\theta-\theta'),z_i\right\rangle} \\
     & \le\max_{1\le i\le n}\enVert{Z^\top(\theta-\theta')}\enVert{z_i} \\
     & \le\enVert{Z^\top(\theta-\theta')}.
  \end{align*}
  Furthermore, by the triangle inequality and $\|z_i\|\le1$,
  \begin{align*}
    \enVert{Z^\top(\theta-\theta')}\le \sum_{i=1}^{n}\envert{\theta_i-\theta'_i}\|z_i\|\le \sum_{i=1}^{n}\envert{\theta_i-\theta'_i}=\enVert{\theta-\theta'}_1.
  \end{align*}
  Therefore $f$ is $1$-smooth with respect to the $\ell_1$ norm.
\end{proof}

With \Cref{fact:standard_smooth,fact:md_step,fact:dual_smooth}, we can prove
\Cref{fact:dual_main}.
\begin{proof}[Proof of \Cref{fact:dual_main}]
  Since $f$ is 1-smooth with respect to the $\ell_1$ norm,
  \begin{align*}
    f(q_{t+1})-f(q_t)\le\ip{\nf(q_t)}{q_{t+1}-q_t}+\frac{1}{2}\|q_{t+1}-q_t\|_1^2.
  \end{align*}
  Further invoking \Cref{fact:standard_smooth}, and $\heta_t\le1/\beta$, and the
  function $h$ defined in \Cref{fact:md_step}, we have
  \begin{align}
        f(q_{t+1}) & \le f(q_t)+\ip{\nf(q_t)}{q_{t+1}-q_t}+\frac{1}{2}\|q_{t+1}-q_t\|_1^2 \nonumber \\
         & \le f(q_t)+\ip{\nf(q_t)}{q_{t+1}-q_t}+\beta D_{\psi^*}(q_{t+1},q_t) \nonumber \\
         & \le f(q_t)+\ip{\nf(q_t)}{q_{t+1}-q_t}+\frac{1}{\heta_t}D_{\psi^*}(q_{t+1},q_t) \label{eq:f_psi_conj} \\
         & =h(q_{t+1})\le f(q_t), \nonumber
  \end{align}
  which proves that $f(q_t)$ is nonincreasing.

  To prove the iteration guarantee for $f$, note that rearranging the terms of
  \cref{eq:f_psi_conj} gives
  \begin{align*}
    \heta_t\ip{\nf(q_t)}{q_{t+1}-q_t}+D_{\psi^*}(q_{t+1},q_t)\ge\heta_t\del{f(q_{t+1})-f(q_t)}.
  \end{align*}
  \Cref{fact:md_step} then implies
  \begin{align*}
    \heta_t\del{f(q_t)-f(q)}\le\heta_t\del{f(q_t)-f(q_{t+1})}+D_{\psi^*}(q,q_t)-D_{\psi^*}(q,q_{t+1}).
  \end{align*}
  Rearranging terms gives
  \begin{align}\label{fact:tmp}
    \heta_t\del{f(q_{t+1})-f(q)}\le D_{\psi^*}(q,q_t)-D_{\psi^*}(q,q_{t+1}).
  \end{align}
  Take the sum of \cref{fact:tmp} from $0$ to $t-1$, we get
  \begin{align*}
      \sum_{j<t}^{}\heta_j\del{f(q_{j+1})-f(q)}\le D_{\psi^*}(q,q_0)-D_{\psi^*}(q,q_t).
  \end{align*}
  Since $f(q_{j+1})\le f(q_j)$ for all $j<t$, we have
  \begin{align*}
      \del{\sum_{j<t}^{}\heta_j}\del{f(q_t)-f(q)}\le D_{\psi^*}(q,q_0)-D_{\psi^*}(q,q_t).
  \end{align*}

  To prove the iteration guarantee for $\psi$, note that
  \begin{align*}
    D_{\psi^*}(q_{t+1},q_t) & =\psi^*(q_{t+1})-\psi^*(q_t)-\langle p_t,q_{t+1}-q_t\rangle \\
     & =\langle p_{t+1},q_{t+1}\rangle-\psi(p_{t+1})-\langle p_t,q_t\rangle+\psi(p_t)-\langle p_t,q_{t+1}-q_t\rangle \\
     & =\psi(p_t)-\psi(p_{t+1})-\langle q_{t+1},p_t-p_{t+1}\rangle \\
     & =\psi(p_t)-\psi(p_{t+1})-\heta_t\ip{Z^\top q_t}{Z^\top q_{t+1}},
  \end{align*}
  where we use \cref{eq:md}.
  Therefore \cref{eq:f_psi_conj} ensures
  \begin{align*}
    \psi(p_t)-\psi(p_{t+1}) & \ge\heta_t\del{f(q_{t+1})-f(q_t)-\ip{\nf(q_t)}{q_{t+1}-q_t}+\ip{Z^\top q_t}{Z^\top q_{t+1}}} \\
     & =\frac{\heta_t}{2}\enVert{Z^\top q_t}^2+\frac{\heta_t}{2}\enVert{Z^\top q_{t+1}}^2.
  \end{align*}
  Telescoping gives the last claim.
\end{proof}

\section{The dual optimal solution characterizes the implicit bias}\label{sec:bias}

As mentioned before, most prior results on the implicit bias are focused on
exponentially-tailed losses.
\citet{gd_reg} consider general losses, and show that the gradient descent
iterates and regularized solutions converge to the same direction, but give no
closed-form characterization of the implicit bias or convergence rate.
In the following result, we characterize the implicit bias using the dual
optimal solution.
\begin{theorem}\label{fact:bias_main}
  Under Assumptions \ref{cond:sep} and \ref{cond:loss}, suppose
  $\heta_t=\eta_t\ell'\del{\psi(Zw_t)}/n\le1/\beta$ is nonincreasing, and
  $\sum_{t=0}^{\infty}\heta_t=\infty$.
  Then for $\barq\in\argmin_{\psi^*(q)\le0}f(q)$, and all $t$ with
  $\psi(Zw_t)\le0$ (which holds for all large enough $t$), we have
  \begin{align*}
    \enVert{Z^\top q_t-Z^\top\barq}^2\le \frac{2D_{\psi^*}(\barq,q_0)}{\sum_{j<t}^{}\heta_j},
    \qquad\textrm{and}
    \qquad
    \ip{\frac{w_t}{\|w_t\|}}{\frac{-Z^\top\barq}{\enVert{Z^\top\barq}}}
    \ge1-\frac{\delta(w_0,\barq)}{\sum_{j<t}^{}\heta_j},
  \end{align*}
  where $\delta(w_0,\barq) := \del{\psi(p_0)+\heta_0f(q_0)+\|w_0\|\enVert{Z^\top\barq}} / \del{2f(\barq)}$
  is a constant depending only on $w_0$ and $\barq$.
  In particular, it holds that the implicit bias is
  $\lim_{t\to\infty}w_t/\|w_t\|=-Z^\top\barq/\enVert{Z^\top\barq}$.
\end{theorem}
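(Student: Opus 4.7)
The plan is to combine the primal-dual rate from \Cref{fact:dual_main} with the first-order optimality of $\barq$ over the feasible set $\{q : \psi^*(q) \le 0\}$. The essential preliminary is to verify that each $q_t$ itself lies in this feasible set once $\psi(p_t) \le 0$; the first bound then follows from elementary quadratic geometry, and the cosine bound follows by telescoping the gradient update to lower-bound $\langle w_t, -Z^\top\barq\rangle$ while using the primal telescope of \Cref{fact:dual_main} to upper-bound $\|w_t\|\|Z^\top\barq\|$.

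For the feasibility step, the Fenchel identity $\psi^*(q_t) = \langle p_t, q_t\rangle - \psi(p_t)$ reduces the task to showing $\langle p_t, q_t\rangle \le \psi(p_t)$. Since $\psi(p_t) \le 0$ and $\ell > 0$, the identity $\sum_i \ell(p_{t,i}) = \ell(\psi(p_t))$ forces $p_{t,i} \le \psi(p_t) \le 0$ for each $i$; applying the monotonicity of $z \mapsto z\ell'(z)/\ell(z)$ on $(-\infty,0)$ from \Cref{cond:loss} coordinate-wise and summing against $\ell(p_{t,i})$ then delivers the required inequality. With $q_t$ feasible, first-order optimality $\langle Z^\top\barq, Z^\top q_t - Z^\top\barq\rangle \ge 0$ yields $\|Z^\top q_t - Z^\top\barq\|^2 \le \|Z^\top q_t\|^2 - \|Z^\top\barq\|^2 = 2(f(q_t) - f(\barq))$, and the first bound follows from the dual rate of \Cref{fact:dual_main}.

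For the cosine bound, I telescope $w_t = w_0 - \sum_{j<t}\heta_j Z^\top q_j$ and abbreviate $X := \langle w_t, -Z^\top\barq\rangle$ and $Y := \|w_t\|\|Z^\top\barq\|$. First-order optimality $\langle Z^\top q_j, Z^\top\barq\rangle \ge 2f(\barq)$ gives $X \ge 2f(\barq)\sum_{j<t}\heta_j - \|w_0\|\|Z^\top\barq\|$, while $\psi^*(\barq) \le 0$ applied at $p_t$ gives a second bound $X = -\langle p_t,\barq\rangle \ge -\psi(p_t)$, which the primal telescope combined with $\|Z^\top q_j\|^2 \ge 2f(\barq)$ lower-bounds by $2f(\barq)\sum_{j<t}\heta_j - \psi(p_0) - \heta_0 f(q_0)$. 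For the matching upper bound on $Y$, the triangle inequality plus $\|Z^\top q_j\|\|Z^\top\barq\| \le \|Z^\top q_j\|^2$ and the primal telescope produce $Y \le X + C$ with $C := \|w_0\|\|Z^\top\barq\| + \psi(p_0) + \heta_0 f(q_0) = 2f(\barq)\,\delta(w_0,\barq)$. Since $\|w_0\|\|Z^\top\barq\| \ge 0$, $C$ is at least the smaller of the two slacks in the $X$-lower-bounds, so $X + C \ge 2f(\barq)\sum_{j<t}\heta_j$, and the cosine bound $X/Y \ge X/(X+C) = 1 - C/(X+C) \ge 1 - \delta/\sum_{j<t}\heta_j$ follows.

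The main difficulty is the feasibility step, whose proof relies delicately on the monotonicity structure in \Cref{cond:loss}; the remainder is careful bookkeeping of the primal and dual bounds supplied by \Cref{fact:dual_main}. That $\psi(p_t) \le 0$ holds eventually is immediate from $\psi(p_t) \le \psi(p_0) + \heta_0 f(q_0) - 2f(\barq)\sum_{j<t}\heta_j \to -\infty$, using $\sum_j\heta_j = \infty$. The implicit-bias identity $w_t/\|w_t\| \to -Z^\top\barq/\|Z^\top\barq\|$ then falls out of the cosine bound as $t \to \infty$.
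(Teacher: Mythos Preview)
Your proof follows essentially the same approach as the paper: the feasibility argument is exactly the content of the paper's \Cref{fact:dual_bound}, the first displayed bound is obtained identically via first-order optimality plus the dual rate, and the cosine bound uses the same three ingredients (Fenchel--Young for $X \ge -\psi(p_t)$, the primal telescope from \Cref{fact:dual_main}, and the triangle inequality for $\|w_t\|$), just organized via the clean inequality $Y \le X + C$ rather than the paper's direct ratio manipulation in \cref{eq:-psi}.

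Two small gaps to close. First, the ``first-order optimality'' lower bound on $X$ you state (via $\langle Z^\top q_j, Z^\top\barq\rangle \ge 2f(\barq)$ for each $j<t$) would require each $q_j$ to lie in $\{q:\psi^*(q)\le 0\}$, which is not guaranteed for early $j$ with $\psi(p_j)>0$; fortunately this bound is superfluous, since your final chain uses only the second (Fenchel--Young) bound on $X$ together with $\|w_0\|\,\|Z^\top\barq\|\ge 0$ to conclude $X+C\ge 2f(\barq)\sum_{j<t}\heta_j$. Second, and more substantively, both your argument that $\psi(p_t)\le 0$ eventually and your second lower bound on $X$ invoke $\|Z^\top q_j\|^2 \ge 2f(\barq)$ for \emph{all} $j$ before that inequality has been justified: it requires feasibility of some $q_t$ together with the monotonicity of $f(q_t)$ from \Cref{fact:dual_main}, which is exactly what you are trying to establish. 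The paper breaks this circularity by first using \Cref{fact:ztq_unique} to get a uniform lower bound $\|Z^\top q\|\ge c>0$ on all of $\dom\,\psi^*$, which already drives $\psi(p_t)\to-\infty$; only then does feasibility of $q_t$ kick in, and monotonicity propagates $f(q_j)\ge f(\barq)$ back to all $j$.
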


\Cref{fact:bias_main} is partly proved by establishing a lower bound on
$-\psi(Zw_t)/\|w_t\|$, which will also help us prove the $O(1/t)$ margin
maximization rate for the exponential loss (cf. \Cref{fact:margin_t}).
Note also that while the condition $\psi^*(q)\leq 0$ in the definition of
$\barq$ looks technical, it appears naturally when deriving $\barq$ as the solution
to the convex dual of the smoothed margin, as in \Cref{app:dual}.

\subsection{Proof of \Cref{fact:bias_main}}

We first prove $Z^\top\barq$ in \Cref{fact:bias_main} is well-defined: there
exists a minimizer $\barq$ of $f(q)$ subject to $\psi^*(q)\le0$, and
$Z^\top\barq$ is unique and nonzero.
\begin{lemma}\label[lemma]{fact:ztq_unique}
  There exists a constant $c>0$, such that for all $q\in\dom\,\psi^*$, it holds
  that $\enVert{Z^\top q}\ge c$.
  In addition, on the sublevel set $S_0:=\cbr{q\middle|\psi^*(q)\le0}$, there
  exists a minimizer $\barq$ of $f$, and $Z^\top\barq$ is the same for all such
  minimizers.
\end{lemma}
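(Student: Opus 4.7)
The plan is to split the lemma into two independent parts: a uniform lower bound on $\|Z^\top q\|$ over $\dom\,\psi^*$, and the existence/uniqueness of $Z^\top\bar q$ on $S_0$. Both rest on understanding the structure of $\dom\,\psi^*$: informally, it behaves like a generalized simplex that is forced away from the origin by separability.

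First I would establish that $\dom\,\psi^*$ lies in the nonnegative orthant: for any $q$ with some $q_j<0$, sending $p_j\to-\infty$ with the other coordinates fixed makes $\psi(p)$ converge to a finite limit (since $\ell(p_j)\to 0$ by Assumption \ref{cond:loss}(1)), while $\langle p,q\rangle\to+\infty$, so $\psi^*(q)=+\infty$. Next I would show $\sum_i q_i\ge c_0>0$ uniformly on $\dom\,\psi^*$. For $q$ in the range of $\nabla\psi$, write $q=\nabla\psi(p)$ and let $i^*$ attain $\max_j p_j$; since $\ell(\psi(p))=\sum_j\ell(p_j)\le n\ell(p_{i^*})$ and $\ell$ is increasing, $\psi(p)\le\ell^{-1}(n\ell(p_{i^*}))$, and because $\ell'$ is increasing (from $\ell''>0$), Assumption \ref{cond:loss}(3) applied with $b=n$ and $a=\ell(p_{i^*})$ yields $q_{i^*}=\ell'(p_{i^*})/\ell'(\psi(p))\ge c(n)>0$. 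This bound extends by continuity from the range of $\nabla\psi$ to its closure, which contains $\dom\,\psi^*$. Combining with Assumption \ref{cond:sep}, pick unit $u$ with $\langle u,z_i\rangle\le-\gamma<0$ for all $i$; then $\|Z^\top q\|\ge -\langle u,Z^\top q\rangle=-\sum_i q_i\langle u,z_i\rangle\ge\gamma\sum_i q_i\ge\gamma c(n)$, which proves the first claim.

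For the second part I would show $S_0:=\{q:\psi^*(q)\le 0\}$ is compact and nonempty so that the continuous function $f$ attains a minimizer on it. Closedness is immediate from lower semicontinuity of $\psi^*$, and boundedness follows from Assumption \ref{cond:loss}(4): $\beta$-smoothness of $\psi$ with respect to $\|\cdot\|_\infty$ translates to $(1/\beta)$-strong convexity of $\psi^*$ with respect to $\|\cdot\|_1$, so any sublevel set of $\psi^*$ is bounded in $\|\cdot\|_1$. For nonemptiness I would exhibit a concrete element by rescaling a separator: pushing $w$ along a direction that makes $\psi(Zw)$ very negative forces $\psi^*(\nabla\psi(Zw))=\langle Zw,\nabla\psi(Zw)\rangle-\psi(Zw)$ to become nonpositive, because in the separated regime the inner product term grows slower than $|\psi(Zw)|$. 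Finally, uniqueness of $Z^\top\bar q$ follows from strict convexity of $\xi\mapsto\|\xi\|^2/2$: if $\bar q_1,\bar q_2$ were minimizers of $f$ on the convex set $S_0$ with $Z^\top\bar q_1\ne Z^\top\bar q_2$, their midpoint would still lie in $S_0$ but attain strictly smaller $f$, contradicting minimality.

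The main obstacle will be the uniform lower bound $\sum_i q_i\ge c_0$ on $\dom\,\psi^*$: Assumption \ref{cond:loss}(3) is crafted precisely for this purpose, but turning it into a clean statement requires extending from the range of $\nabla\psi$ to all of $\dom\,\psi^*$ and carefully tracking the smoothed-max structure of $\psi$. The nonemptiness of $S_0$ for general losses also takes more than a sentence beyond the exponential case, where $\psi^*$ is the negative entropy and $S_0$ is the simplex.
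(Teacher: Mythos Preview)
Your proposal is correct and matches the paper's approach in all the essential steps: Assumption~\ref{cond:loss}(3) gives the lower bound on $\|\nabla\psi(\xi)\|_1$ (you bound the maximal coordinate, the paper bounds the $\ell_1$ norm by that same coordinate, so the computations coincide), separability converts this to $\|Z^\top q\|\ge c$, compactness of $S_0$ gives a minimizer, and strict convexity of $\|\cdot\|^2$ gives uniqueness of $Z^\top\bar q$. The only real variation is in establishing boundedness of $S_0$: you invoke strong convexity of $\psi^*$ (dual to the $\beta$-smoothness of $\psi$), while the paper argues more directly that $0<\nabla\psi(\xi)_i\le 1$ so $\|\nabla\psi(\xi)\|_1\le n$, whence $\dom\,\psi^*$ itself is bounded. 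Both routes work; the paper's is slightly more elementary and avoids needing a reference point with a subgradient. You also explicitly flag nonemptiness of $S_0$, which the paper's lemma proof glosses over; in the paper it follows from \Cref{fact:dual_bound} applied to any $\xi$ with $\psi(\xi)\le 0$, and separability guarantees such $\xi$ exist. Finally, the extension from $\mathrm{range}\,\nabla\psi$ to all of $\dom\,\psi^*$ that you anticipate as an obstacle is handled in the paper via Rockafellar's theorems ($\mathrm{range}\,\nabla\psi=\dom\,\partial\psi^*\supset\mathrm{ri}(\dom\,\psi^*)$), after which continuity suffices, exactly as you suggest.
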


The full proof of \Cref{fact:ztq_unique} is given in \Cref{app_sec:bias}.
To show that $\enVert{Z^\top q}$ is bounded below by a positive constant, note
that we only need to consider $q=\nabla\psi(\xi)$ for some $\xi\in\R^n$, and
that \Cref{cond:sep} ensures there exists a unit vector $u\in\R^d$ and
$\gamma>0$ such that for all $1\le i\le n$, it holds that
$\langle u,-z_i\rangle=y_i \langle u,x_i\rangle\ge\gamma$.
Further note that $\nabla\psi(\xi)_i>0$, we have
\begin{align*}
  \enVert{Z^\top\nabla\psi(\xi)}\ge\ip{-Z^\top\nabla\psi(\xi)}{u}=\sum_{i=1}^{n}\langle u,-z_i\rangle\nabla\psi(\xi)_i\ge\gamma\enVert{\nabla\psi(\xi)}_1,
\end{align*}
and the proof is done by showing $\enVert{\nabla\psi(\xi)}_1$ is bounded below by
a positive constant.
To prove the existence of $\barq$, we only need to show $S_0$ is compact.
To prove the uniqueness of $Z^\top\barq$, note that if $\barq_1,\barq_2$ are two
minimizers of $f$ on $S_0$, with
$\enVert{Z^\top\barq_1}=\enVert{Z^\top\barq_2}>0$ but
$Z^\top\barq_1\ne Z^\top\barq_2$, then
\begin{align*}
  \frac{Z^\top\barq_1+Z^\top\barq_2}{2}\in S_0,\quad\textrm{but}\quad\enVert{\frac{Z^\top\barq_1+Z^\top\barq_2}{2}}<\enVert{Z^\top\barq_1},
\end{align*}
a contradiction.

Note that for $\barq$ defined in \Cref{fact:bias_main}, it already follows from
\Cref{fact:dual_main} that $\lim_{t\to\infty}f(q_t)\le f(\barq)$.
We further need the following result to ensure $f(q_t)\ge f(\barq)$; its proof
is basically identical to the proof of \citep[Lemma 3.5]{dir_align}, and is
included in \Cref{app_sec:bias} for completeness.
We then have $\lim_{t\to\infty}=f(\barq)$, which will be used in the proof of
\Cref{fact:bias_main}.
\begin{lemma}\label[lemma]{fact:dual_bound}
  For any $\xi\in\R^n$ such that $\psi(\xi)\le0$, it holds that
  $\psi^*\del{\nabla\psi(\xi)}\le0$.
\end{lemma}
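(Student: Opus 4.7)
The plan is to leverage the Fenchel--Young equality: since $\psi$ is differentiable and convex, one has $\psi^*(\nabla\psi(\xi)) = \langle \nabla\psi(\xi),\xi\rangle - \psi(\xi)$. Using the chain rule on $\psi = \ell^{-1}\circ\cL$, I compute $\nabla\psi(\xi)_i = \ell'(\xi_i)/\ell'(\psi(\xi))$, which is exactly the formula already recorded in the preliminaries. Substituting and clearing the positive factor $1/\ell'(\psi(\xi))$, the target bound $\psi^*(\nabla\psi(\xi))\le 0$ becomes equivalent to
\[
\sum_{i=1}^n \xi_i\,\ell'(\xi_i) \;\le\; \psi(\xi)\,\ell'(\psi(\xi))
\qquad\text{whenever }\psi(\xi)\le 0.
\]

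My next step is to recast this inequality in terms of $g(z):=z\ell'(z)/\ell(z)$, which \Cref{cond:loss}(2) guarantees is increasing on $(-\infty,0)$ (and nondecreasing on $(-\infty,0]$ by continuous extension, using $\ell,\ell'>0$). Because $\ell(\psi(\xi))=\sum_i\ell(\xi_i)$ by the definition of $\psi$, rewriting each side as $g(\cdot)\ell(\cdot)$ turns the target into
\[
\sum_i g(\xi_i)\,\ell(\xi_i) \;\le\; g(\psi(\xi))\sum_i\ell(\xi_i),
\]
i.e.\ $g(\psi(\xi))$ dominates the $\ell(\xi_i)$-weighted average of $\{g(\xi_i)\}_i$. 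Since all weights $\ell(\xi_i)$ are positive, it suffices to show $g(\xi_i)\le g(\psi(\xi))$ for every $i$.

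The final step verifies the hypotheses for the monotonicity of $g$. Two elementary observations follow purely from $\ell$ being positive and strictly increasing. First, $\ell(\psi(\xi))=\sum_j \ell(\xi_j)\ge\ell(\xi_i)$ forces $\xi_i\le\psi(\xi)$. Second, $\psi(\xi)\le 0$ yields $\ell(\psi(\xi))\le\ell(0)$, so every $\ell(\xi_i)\le\ell(0)$, whence $\xi_i\le 0$. Chaining gives $\xi_i\le\psi(\xi)\le 0$, so the monotonicity of $g$ delivers $g(\xi_i)\le g(\psi(\xi))$, finishing the proof.

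I do not expect any serious obstacle. The only mild subtlety is that \Cref{cond:loss}(2) is stated on the open interval $(-\infty,0)$, while either $\xi_i$ or $\psi(\xi)$ could equal $0$; this is resolved by continuity of $g$, or alternatively by observing directly from $\xi_i\le 0$ and $\ell'(\xi_i)/\ell(\xi_i)>0$ that $g(\xi_i)\le 0=g(0)$ in the boundary case $\psi(\xi)=0$. The whole argument is essentially a combination of the Fenchel--Young identity with the structural assumption on $z\ell'(z)/\ell(z)$, and should be very short.
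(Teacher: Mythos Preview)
Your proof is correct and follows essentially the same approach as the paper. Both arguments start from the Fenchel--Young identity $\psi^*(\nabla\psi(\xi))=\langle\nabla\psi(\xi),\xi\rangle-\psi(\xi)$, clear the positive factor $\ell'(\psi(\xi))$, and reduce the target inequality to the monotonicity assumption on $z\ell'(z)/\ell(z)$ from \Cref{cond:loss}(2); the only difference is that the paper changes variables to $s=\ell(z)$, writes $\sigma(s):=\ell'(\ell^{-1}(s))\ell^{-1}(s)$, and phrases the final step as super-additivity of $\sigma$ on $(0,\ell(0))$, whereas you apply the monotonicity of $g(z)=z\ell'(z)/\ell(z)$ directly at the points $\xi_i\le\psi(\xi)\le 0$ --- the two are equivalent since $\sigma(s)/s=g(\ell^{-1}(s))$.
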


Next we give a formal proof of \Cref{fact:bias_main}.
\begin{proof}[Proof of \Cref{fact:bias_main}]
  Since $\sum_{t=0}^{\infty}\heta_t=\infty$, \Cref{fact:dual_main} implies that
  $\lim_{t\to\infty}f(q_t)\le f(\barq)$.
  On the other hand, by \Cref{fact:dual_main} and \Cref{fact:ztq_unique},
  $\psi(p_t)\le0$ for all large enough $t$, and then \Cref{fact:dual_bound}
  implies that $\psi^*\del{q_t}\le0$.
  By the definition of $\barq$, we have $f(q_t)\ge f(\barq)$, and thus
  $\lim_{t\to\infty}f(q_t)=f(\barq)$.

  We first prove the convergence of $Z^\top q_t$ to $Z^\top\barq$.
  Let $t$ be large enough such that $\psi(p_t)\le0$ and thus $\psi^*(q_t)\le0$.
  Since $\psi^*$ is convex, it follows that the sublevel set
  $\cbr{q\middle|\psi^*(q)\le0}$ is convex, and then the definition of $\barq$
  and the first-order optimality condition
  \citep[Proposition 2.1.1]{borwein_lewis} imply
  \begin{align}\label{eq:ztq_ip}
    \ip{\nf(\barq)}{q_t-q}\ge0,\quad\textup{and thus}\quad\enVert{Z^\top\barq}^2\le\ip{Z^\top\barq}{Z^\top q_t}.
  \end{align}
\Cref{fact:dual_main} and \cref{eq:ztq_ip} then imply
  \begin{align*}
    \enVert{Z^\top q_t-Z^\top\barq}^2 & =\enVert{Z^\top q_t}^2-2\ip{Z^\top q_t}{Z^\top\barq}+\enVert{Z^\top\barq}^2 \\
     & \le\enVert{Z^\top q_t}^2-\enVert{Z^\top\barq}^2 \\
     & \le \frac{2D_{\psi^*}(\barq,q_0)}{\sum_{j<t}^{}\heta_j}.
  \end{align*}

  To prove the other claim, we use an idea from \citep{min_norm}, but also
  invoke the tighter guarantee on $\psi$ in \Cref{fact:dual_main}.
  By Fenchel-Young inequality, and recall that $\psi^*(\barq)\le0$,
  \begin{align}\label{eq:wztq}
    \ip{w_t}{-Z^\top\barq}=-\langle Zw_t,\barq\rangle\ge-\psi(Zw_t)-\psi^*(\barq)\ge-\psi(Zw_t)=-\psi(p_t).
  \end{align}
  Moreover, \Cref{fact:dual_main} implies that
  \begin{align}
    -\psi(p_t) & \ge-\psi(p_0)+\sum_{j<t}^{}\heta_j\enVert{Z^\top q_j}^2-\frac{\heta_0}{2}\enVert{Z^\top q_0}^2 \nonumber \\
     & \ge-\psi(p_0)+\sum_{j<t}^{}\heta_j\enVert{Z^\top q_j}\enVert{Z^\top\barq}-\frac{\heta_0}{2}\enVert{Z^\top q_0}^2. \label{eq:psi_bound}
  \end{align}
  On the other hand, the triangle inequality implies
  $\|w_t\|\le\|w_0\|+\sum_{j<t}^{}\heta_j\enVert{Z^\top q_j}$.
  Recall that $\psi(p_t)\le0$, then \cref{eq:psi_bound} implies
  \begin{align}
    \frac{-\psi(p_t)}{\|w_t\|\enVert{Z^\top\barq}}\ge \frac{-\psi(p_t)}{\del{\|w_0\|+\sum_{j<t}^{}\heta_j\enVert{Z^\top q_j}}\enVert{Z^\top\barq}} & \ge1-\frac{\psi(p_0)+\heta_0f(q_0)+\|w_0\|\enVert{Z^\top\barq}}{\del{\|w_0\|+\sum_{j<t}^{}\heta_j\enVert{Z^\top q_j}}\enVert{Z^\top\barq}} \nonumber \\
     & \ge1-\frac{\psi(p_0)+\heta_0f(q_0)+\|w_0\|\enVert{Z^\top\barq}}{2f(\barq)\sum_{j<t}^{}\heta_j}. \label{eq:-psi}
  \end{align}
  Finally, \cref{eq:wztq,eq:-psi} imply that
  \begin{align*}
    \ip{\frac{w_t}{\|w_t\|}}{\frac{-Z^\top\barq}{\enVert{Z^\top\barq}}} & \ge \frac{-\psi(p_t)}{\|w_t\|\enVert{Z^\top\barq}}\ge1-\frac{\psi(p_0)+\heta_0f(q_0)+\|w_0\|\enVert{Z^\top\barq}}{2f(\barq)\sum_{j<t}^{}\heta_j}
    = 1 - \frac {\delta(w_0,\barq)}{\sum_{j<t}\heta_j}.
  \end{align*}
\end{proof}

\section{$1/t$ and $1/\ln(t)$ exponential loss rates with fast and slow steps}\label{sec:refine}

In this section we focus primarily on the exponential loss $e^z$, proving
refined margin maximization and implicit bias rates, though the margin
maximization rate is also proved for the logistic loss $\ln(1+e^z)$.
We fix the initialization to $w_0=0$, which can make the bounds cleaner; however
our analysis can be easily extended to handle nonzero initialization.

Regarding step sizes, our rates depend on the quantity $\sum_{j<t} \heta_j$,
which at its largest is $t$ by taking constant $\heta_j$, giving rise to both of
our $1/t$ rates.
This step size choice is in fact extremely aggressive; e.g., for the exponential
loss, the induced step sizes
on $\nR(w_j)$ are $\eta_j=1/\cR(w_j)$, which end up growing exponentially.
While this gives our strongest results, for instance improving the known rates
for hard-margin linear SVM as discussed before, such step sizes are rarely used in
practice, and moreover it is unclear if they could carry over to deep learning
and other applications of these ideas, where the step sizes often have constant
or decreasing $\eta_j$.
These smaller step sizes also figure heavily in prior work, and so we give
special consideration to the regime where $\eta_j$ is constant,
and prove a tight $\ln(n)/\ln(t)$ implicit bias rate.

Turning back to additional notation for this section, for the exponential loss,
$\psi(\xi)=\ln\del{\sum_{i=1}^{n}\exp(\xi_i)}$, and
$\psi^*(\theta)=\sum_{i=1}^{n}\theta_i\ln\theta_i$ with domain the standard
probability simplex
$\Delta_n:=\cbr{\theta\in\R^n\middle|\theta\ge0,\sum_{i=1}^{n}\theta_i=1}$.
Moreover, $\psi$ is $1$-smooth with respect to the $\ell_\infty$ norm.

Let $\gamma:=\max_{\|u\|=1}\min_{1\le i\le n}y_i \langle u,x_i\rangle$
and
$\baru:=\argmax_{\|u\|=1}\min_{1\le i\le n}y_i \langle u,x_i\rangle$
respectively denote the maximum margin value and direction
on the dataset.
As in \Cref{app:dual} in the general case of $\psi$, but as presented in prior
work for the specific case of losses with bias towards the maximum margin
solution, the maximum margin has a dual characterization (for the exponential
loss) of
\begin{align}\label{eq:margin_dual}
  \gamma=\min_{q\in\Delta_n}\enVert{Z^\top q}=\sqrt{2\min_{q\in\Delta_n}f(q)}
  = \sup_{\substack{\|w\|\leq 1\\r > 0}} r \psi(Zw/r).
\end{align}

\subsection{$O(1/t)$ margin maximization rates}

For the exponential loss,
\begin{align*}
  \psi(Zw)=\ln\del{\sum_{i=1}^{n}\exp\del{\langle z_i,w\rangle}}\ge\ln\del{\exp\del{\max_{1\le i\le n}\langle z_i,w\rangle}}=\max_{1\le i\le n}\langle z_i,w\rangle,
\end{align*}
and thus
$\min_{1\le i\le n}\langle-z_i,w\rangle=\min_{1\le i\le n}y_i \langle w,x_i\rangle\ge-\psi(Zw)$.
The next result then follows immediately from \cref{eq:psi_bound}, and gives $O(1/t)$
margin maximization rates with constant $\heta_j$.
\begin{theorem}\label{fact:margin_t}
  Under \Cref{cond:sep}, for the exponential loss, if
  $\heta_t=\eta_t\cR(w_t)\le1$ is nonincreasing and $w_0=0$, then
  \begin{align*}
    \frac{\min_{1\le i\le n}y_i\langle w_t,x_i\rangle}{\|w_t\|}\ge \frac{-\psi(Zw_t)}{\|w_t\|}\ge\gamma-\frac{\ln(n)+1}{\gamma\sum_{j<t}^{}\heta_j}.
\end{align*}
For the logistic loss, letting $t_0=(256 \ln n)^2/\gamma^2$, and
  $\eta_t\cR(w_t)=1/2$ for $t<t_0$, and
  $\heta_t = \eta_t\ell'\del{\psi(Zw_t)}/n = 1/2$ for $t\ge t_0$,
  then for any $t > t_0$,
  \[
    \frac{\min_{1\le i\le n}y_i\langle w_t,x_i\rangle}{\|w_t\|}
    \ge
    \frac{-\psi(Zw_t)}{\|w_t\|}
    \ge
    \gamma-\frac{1+512 \ln n}{\gamma t - (256 \ln n)^2 / \gamma}.
  \]
\end{theorem}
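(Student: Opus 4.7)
The plan is to split each inequality into two parts: a pointwise step bounding $-\psi(Zw_t)\le\min_i y_i\langle w_t,x_i\rangle$, and a rate step bounding $-\psi(Zw_t)/\|w_t\|\ge \gamma$ minus a small deficit. The pointwise step follows for both losses from $\psi(\xi)\ge\max_i\xi_i$: for the exponential loss this is the logsumexp bound already displayed just above the theorem statement, and for the logistic loss it follows from $\psi(\xi)=\ln\del{\prod_i(1+e^{\xi_i})-1}\ge\max_i\xi_i$, using $\prod_i(1+e^{\xi_i})\ge 1+e^{\max_i\xi_i}$ and monotonicity of $\ln$.

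For the exponential-loss rate, I would specialize \cref{eq:psi_bound} from the proof of \Cref{fact:bias_main} to $w_0=0$, so that $\psi(p_0)=\ln n$ and $q_0=\mathbf{1}/n$ gives $\enVert{Z^\top q_0}^2\le 1$. Each $q_j=\nabla\psi(p_j)$ is a softmax and hence lies in $\Delta_n$, so the dual characterization \cref{eq:margin_dual} yields $\enVert{Z^\top q_j}\ge\gamma$; substituting this for one factor on the right of \cref{eq:psi_bound}, combining with the triangle bound $\|w_t\|\le\sum_{j<t}\heta_j\enVert{Z^\top q_j}$, and dividing produces $\gamma - \del{\ln n + \heta_0/2}/(\gamma\sum_{j<t}\heta_j)$, which absorbs into the stated $(\ln n + 1)/(\gamma\sum_{j<t}\heta_j)$ since $\heta_0\le 1$.

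For the logistic loss, the obstacle is that $\psi$ is only $n$-smooth globally (cf.\ \Cref{fact:psi_smooth}), so \Cref{fact:dual_main} cannot be invoked with $\heta_t=1/2$ from the start. My plan is to split the run at $t_0=(256\ln n)^2/\gamma^2$: once $\psi(Zw_{t_0})\le 0$, \Cref{fact:warm_start} supplies the sharper $\beta=2$ smoothness on the sublevel set $\cbr{\xi:\psi(\xi)\le 0}$, which is respected by all later iterates since \Cref{fact:dual_main} makes $\psi(p_t)$ nonincreasing once inside. On this main phase the exponential-loss calculation then goes through with $\heta_j=1/2$ and with the lower bound $\enVert{Z^\top q_j}\ge\gamma$ furnished by \Cref{fact:dual_bound} (which gives $\psi^*(q_j)\le 0$) together with the general dual characterization of $\gamma$ from \Cref{app:dual}; the leftover $\|w_{t_0}\|$ and $\psi(Zw_{t_0})$ contributions produce the $1+512\ln n$ numerator and shift the denominator from $\gamma t$ to $\gamma t-(256\ln n)^2/\gamma$.

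The main obstacle is therefore the warm-start analysis: showing that the boosting-style schedule $\eta_t\cR(w_t)=1/2$ drives $\psi(Zw_t)$ below zero within $t_0$ iterations, without access to the fast-rate $\beta=2$. A naive application of \Cref{fact:standard_smooth} with $\heta_t=1/n$ would yield only $\Omega(\gamma^2/n)$ progress on $\psi$ per step, which is far too slow given that $\psi(0)=\ln(2^n-1)$. Instead I would track the risk $\cR(w_t)$ directly and use a boosting-style argument: the chosen schedule makes each update act like a fixed-magnitude step along $Z^\top q_t$, and the weak-learning bound $\enVert{Z^\top q_t}\ge\gamma$ should yield a geometric decrease of the form $\cR(w_{t+1})\le\cR(w_t)\del{1-\Omega(\gamma^2)}$, so that polylogarithmically many $\gamma^{-2}$-sized epochs suffice to bring $\cR$ below $\ln 2/n$, equivalently $\psi(Zw_t)\le 0$. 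Tightening constants to match the precise $t_0=(256\ln n)^2/\gamma^2$ threshold is the delicate bookkeeping step I expect to be the main technical difficulty.
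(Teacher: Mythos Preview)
Your plan for the exponential loss and for the logistic-loss main phase ($t\ge t_0$) matches the paper's proof. One small omission: after reaching $\frac{-\psi(p_t)}{\|w_t\|}\ge\gamma-\frac{\ln n+1/2}{\|w_t\|}$ you still need the \emph{lower} bound $\|w_t\|\ge\gamma\sum_{j<t}\heta_j$ to convert the denominator; the triangle inequality only supplies the upper bound. The paper obtains this as \Cref{fact:wt_norm} via $\langle w_{j+1}-w_j,\baru\rangle=\heta_j\langle-Z\baru,q_j\rangle\ge\heta_j\gamma$, which uses $q_j\in\Delta_n$ rather than merely $\|Z^\top q_j\|\ge\gamma$.

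The genuine gap is the warm-start. Your multiplicative contraction $\cR(w_{t+1})\le\cR(w_t)\del{1-\Omega(\gamma^2)}$ is an exponential-loss fact: it relies on $\|\ncR(w_t)\|\ge\gamma\cR(w_t)$, which in turn needs $\ell'\ge c\,\ell$. For the logistic loss one has only $\|\ncR(w_t)\|\ge\gamma\cdot\frac1n\sum_i\ell'(p_{t,i})$, and the ratio $\sum_i\ell'(p_{t,i})\big/\sum_i\ell(p_{t,i})$ is \emph{not} bounded below once some $p_{t,i}$ is moderately positive (indeed $\ell'(z)/\ell(z)\to0$ as $z\to\infty$); since $\psi(p_0)=\ln(2^n-1)$ is order $n$, you cannot preclude this. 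A telltale sign is that if your contraction held it would yield $t_0=O(\ln(n)/\gamma^2)$, strictly better than the stated $(256\ln n)^2/\gamma^2$; the discrepancy is not bookkeeping.

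The paper instead runs a perceptron-style potential argument (\Cref{fact:warm_start:2}): pick $u:=r\baru$ with $r=128\ln(n)/\gamma$ so that $\cR(u)\le1/(128n)$, expand $\|w_{j+1}-u\|^2$, and combine convexity of $\cR$ with $\eta_j=1/(2\cR(w_j))$ and $\|\ncR(w_j)\|\le\cR(w_j)$ to show the potential drops by a constant each step while $\cR(w_j)\ge1/(32n)$. Telescoping gives both $t_0\le4r^2=(256\ln n)^2/\gamma^2$ and $\|w_{t_0}\|\le2r$, the latter producing exactly the $512\ln n$ numerator you anticipated.
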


The full proof of \Cref{fact:margin_t} is given in \Cref{app_sec:refine}.
Here we sketch the proof for the exponential loss.
Note that \cref{eq:psi_bound,eq:margin_dual} imply
\begin{align*}
  \frac{-\psi(Zw_t)}{\|w_t\|}\ge \frac{-\psi(p_0)+\sum_{j<t}^{}\heta_j\enVert{Z^\top q_j}\cdot\gamma-\frac{\heta_0}{2}\enVert{Z^\top q_0}^2}{\|w_t\|}=\gamma\cdot \frac{\sum_{j<t}^{}\heta_j\enVert{Z^\top q_j}}{\|w_t\|}-\frac{\psi(p_0)+\frac{\heta_0}{2}\enVert{Z^\top q_0}^2}{\|w_t\|}.
\end{align*}
By the triangle inequality,
$\|w_t\|\le \sum_{j<t}^{}\heta_j\enVert{Z^\top q_j}$.
Moreover, $\psi(p_0)=\ln(n)$, and $\enVert{Z^\top q_0}\le1$ since $\|z_i\|\le1$.
Therefore we have
\begin{align*}
  \frac{-\psi(Zw_t)}{\|w_t\|}\ge\gamma-\frac{\ln(n)+1}{\|w_t\|}.
\end{align*}
Lastly, note that $\|w_t\|\ge \langle w_t,\baru\rangle$, and moreover
\begin{align*}
  \langle w_{j+1}-w_j,\baru\rangle=\heta_j\ip{-Z^\top q_j}{\baru}=\heta_j \langle-Z\baru,q_j\rangle\ge\heta_j\gamma.
\end{align*}

Margin maximization has been analyzed in many settings: \citet{mjt_margins}
proves that for any $\epsilon>0$, the margin can be maximized by coordinate
descent to $\gamma-\epsilon$ with an $O(1/t)$ rate, while \citet{NLGSS18} show
an $\widetilde{O}(1/\sqrt{t})$ margin maximization rate for gradient descent by
letting $\heta_t\propto 1/\sqrt{t+1}$ using our notation.
Their proofs also analyze $-\psi(Zw_t)/\|w_t\|$, but use
\Cref{fact:standard_smooth}.
If we let $\heta_t$ be a constant in \Cref{fact:standard_smooth}, the error term
$\sum_{j<t}^{}\frac{\beta\heta_j^2}{2}\enVert{Z^\top q_j}^2$ will be too large
to prove exact margin maximization, while if we let $\heta_t=1/\sqrt{t+1}$, then
the error term is $O\del{\ln(t)}$, but only an $O\del{\ln(t)/\sqrt{t}}$ rate can
be obtained.
By contrast, our analysis uses the tighter guarantee given by
\Cref{fact:dual_main}, which always has a bounded error term.

\subsection{Tight $\ln(n)/t$ and $\ln(n)/\ln(t)$ bias rates}

Next we turn to a fast implicit bias rate, where we produce both upper and lower bounds.
In the case of aggressive step sizes, there does not appear to be prior work,
though a $O(1/t^{1/4})$ rate can be easily derived via the Fenchel-Young inequality from
the bias rate in prior work \citep{NLGSS18}.
Instead, prior work appears to use constant $\eta_j$, and the rate is roughly
$\cO(1/\ln(t))$, however the dependence on $n$ in unspecified, and unclear from the proofs.
Here we provide a careful analysis with a rate $\cO(\ln n / \ln t)$ for
constant $\eta_j$, and rate $\cO(\ln(n)/t)$ for constant $\heta_j$, which we
moreover show are tight.
In this subsection we only analyze the exponential loss.

\begin{theorem}
  \label{fact:tight}
  Consider the exponential loss and nonincreasing steps
  $\heta_t=\eta_t\cR(w_t)\le1$ with $\sum_j \eta_j = \infty$.
  For any data $(z_i)_{i=1}^n$ sampled from a density which is continuous w.r.t. the Lebesgue
  measure and which satisfies \Cref{cond:sep}, then almost surely, for every iteration $t$,
  \[
    \enVert{
      \frac {w_t}{\|w_t\|} - \baru
    }
    =
    \frac{O(\ln n)}{\sum_{j<t} \heta_j}
    =
    \begin{cases}
      O(\frac{\ln n}{t})
      &
      \textrm{when }
      \heta_j = 1,
      \\
      O(\frac{\ln n}{\ln t})
      &
      \textrm{when }
      \eta_j = 1.
    \end{cases}
  \]

  On the other hand, there exists data $Z\in\R^{n\times 2}$ comprised of $n$ examples in $\R^2$
  satisfying \Cref{cond:sep},
  so that for all sufficiently large iterations $t\geq 1$,
  \[
    \enVert{
      \frac {w_t}{\|w_t\|} - \baru
    }
    \geq
    \frac{\ln n - \ln 2}{\|w_t\|}
    =
    \begin{cases}
      \frac {\ln n - \ln 2}{t}
      &
      \textrm{when }
      \heta_j = 1,
      \\
      \frac {\ln n - \ln 2}{\Theta(\ln(t))}
      &
      \textrm{when }
      \eta_j = 1.
    \end{cases}
  \]
\end{theorem}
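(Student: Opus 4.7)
\textbf{Plan for \Cref{fact:tight}.}
I would handle the upper and lower bounds separately. The principal obstacle is sharpening the cosine guarantee from \Cref{fact:bias_main}---which naively gives only a square-root rate $\enVert{w_t/\enVert{w_t} - \baru} = O\del{\sqrt{\ln n / \sum_{j<t}\heta_j}}$ via $\enVert{w_t/\enVert{w_t}-\baru}^2 = 2(1-\ip{w_t/\enVert{w_t}}{\baru})$---up to the claimed linear rate. To do this I would extract information about the perpendicular component $w_t^\perp := w_t - \ip{w_t}{\baru}\baru$ directly, rather than going through the cosine.

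For the upper bound, first establish the identity
\[
  D_{\psi^*}(\barq,q_t) = \psi(p_t) + \psi^*(\barq) + \gamma\ip{w_t}{\baru},
\]
which follows by expanding the definition of $D_{\psi^*}$ and using $-Z^\top\barq = \gamma\baru$ (since $\baru = -Z^\top\barq/\enVert{Z^\top\barq}$ and $\enVert{Z^\top\barq} = \gamma$). Combined with the iterate bound in \Cref{fact:dual_main}, $D_{\psi^*}(\barq,q_t) \le D_{\psi^*}(\barq,q_0)$, this yields $0 \le \psi(p_t) + \gamma\ip{w_t}{\baru} \le D_{\psi^*}(\barq,q_0) - \psi^*(\barq) = O(\ln n)$, since for $w_0=0$ we have $q_0 = (1/n,\ldots,1/n)$ and $-\psi^*(\barq) \le \ln n$. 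Next, I would invoke the almost-sure consequences of the continuous-density assumption: uniqueness of $\baru$, a fixed support set $S$, a strict margin gap $\Delta := \min_{i\notin S}(-\ip{\baru}{z_i}-\gamma) > 0$, strict complementarity $\barq_i > 0$ on $S$, and the spanning property that the perpendicular projections $\{z_i^\perp := z_i - \ip{\baru}{z_i}\baru : i \in S\}$ span $\baru^\perp$. Rewriting
\[
  \psi(p_t) + \gamma\ip{w_t}{\baru} = \ln\biggl[\sum_{i\in S}e^{\ip{w_t^\perp}{z_i^\perp}} + \sum_{i\notin S}e^{\ip{w_t^\perp}{z_i^\perp} - \Delta_i \ip{w_t}{\baru}}\biggr],
\]
and noting that for sufficiently large $t$ the non-support terms are exponentially suppressed by $e^{-\Delta \ip{w_t}{\baru}}$, I lower-bound the log-sum by its largest support exponent to obtain $\max_{i\in S}\ip{w_t^\perp}{z_i^\perp} = O(\ln n)$. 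The KKT identity $0 = \ip{w_t^\perp}{\baru} = -\gamma^{-1}\sum_{i\in S}\barq_i \ip{w_t^\perp}{z_i^\perp}$, together with strict positivity of $\barq_i$ on $S$, then promotes this to the two-sided bound $|\ip{w_t^\perp}{z_i^\perp}| = O(\ln n)$ for every $i \in S$; inverting the resulting linear system via the spanning property gives $\enVert{w_t^\perp} = O(\ln n)$. Finally, since $\gamma\sum_{j<t}\heta_j \le \enVert{w_t} \le \sum_{j<t}\heta_j$ (lower bound by projecting onto $\baru$, upper by the triangle inequality),
\[
  \enVert{\frac{w_t}{\enVert{w_t}} - \baru} \le \frac{2\enVert{w_t^\perp}}{\enVert{w_t}} = \frac{O(\ln n)}{\sum_{j<t}\heta_j}.
\]
This specializes to $O(\ln n/t)$ when $\heta_j = 1$ and $O(\ln n/\ln t)$ when $\eta_j = 1$, using that $\cR(w_j) = \Theta(1/j)$ in the latter case so $\sum_{j<t}\heta_j = \sum_{j<t}\cR(w_j) = \Theta(\ln t)$.

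For the lower bound, I construct explicit two-dimensional data: take $x_1 = (\sqrt{3}/2, 1/2)$ appearing once and $x_2 = (\sqrt{3}/2, -1/2)$ repeated $n-1$ times, all with label $+1$. Direct computation gives $\baru = (1,0)$, $\gamma = \sqrt{3}/2$, both examples are support vectors, and the two-variable SVM dual (only two distinct $z$-directions) yields $\barq_1 = 1/2$ and $\barq_i = 1/(2(n-1))$ for $i \ge 2$. The key observation is $q_{t,1}/q_{t,i} = \exp\ip{w_t}{z_1 - z_2}$ for each $i \ge 2$; the convergence $Z^\top q_t \to Z^\top\barq$ from \Cref{fact:bias_main}, together with the symmetry that the $n-1$ duplicates receive equal $q_t$-mass at every step, forces $q_{t,1} \to 1/2$ and hence $\ip{w_t}{z_1 - z_2} \to \ln(n-1)$. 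Because $z_1 - z_2 = (0,-1)$ is a unit vector perpendicular to $\baru$, this equates to $\enVert{w_t^\perp} \ge \ln(n-1) \ge \ln n - \ln 2$ for all sufficiently large $t$; dividing by $\enVert{w_t}$ and using $\enVert{w_t} = \Theta(t)$ (when $\heta_j = 1$) or $\Theta(\ln t)$ (when $\eta_j = 1$) yields the displayed lower bounds.

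The hard part is the upper bound: translating the $O(\ln n)$ bound on $\psi(p_t)+\gamma\ip{w_t}{\baru}$ into an $O(\ln n)$ bound on $\enVert{w_t^\perp}$ genuinely requires the almost-sure structural properties (strict complementarity, positive margin gap, spanning support projections). Without these, one can only recover the slower square-root rate from the cosine bound in \Cref{fact:bias_main}, so the continuous-density hypothesis is what enables the linear rate, and aligns the upper bound with the matching lower bound.
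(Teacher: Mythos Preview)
Your lower-bound construction is correct and in fact somewhat cleaner than the paper's: both are support vectors, and you read off $\|w_t^\perp\|\to\ln(n-1)$ directly from dual convergence. The paper instead takes $z_1=(0.1,0)$ as the unique support vector and $z_2=\cdots=z_n=(0.2,0.2)$ as non-support vectors, and argues via a risk threshold; either route gives the claimed $\ln n-\ln 2$ bound.

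The upper-bound plan, however, has a genuine gap. The step ``inverting the resulting linear system via the spanning property'' presupposes that $\{z_i^\perp:i\in S\}$ spans $\baru^\perp$, but this is \emph{not} among the almost-sure consequences of the continuous-density assumption: the paper invokes \citep[Lemma 12]{nati_logistic} to get at most $d$ support vectors with strictly positive duals, and \citep[Theorem 2.1]{min_norm} to get strong convexity of $\cR_\perp$ over $\mathrm{span}(S_\perp)$, but nowhere is it claimed (or true in general) that $\mathrm{span}(S_\perp)=\baru^\perp$. Indeed the paper's own lower-bound data has a single support vector with $z_1^\perp=0$, so $S_\perp=\{0\}$ spans nothing, yet $\|w_t^\perp\|$ still grows to order $\ln n$ --- entirely from the non-support contributions. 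Your inequality $\psi(p_t)+\gamma\ip{w_t}{\baru}\le\ln n$ only yields $\ip{w_t^\perp}{z_i^\perp}\le\ln n$ for $i\in S$; for $i\notin S$ the same expansion gives $\ip{w_t^\perp}{z_i^\perp}\le\ln n+\Delta_i\ip{w_t}{\baru}$, which diverges. So the component of $w_t^\perp$ outside $\mathrm{span}(S_\perp)$ is simply not controlled by your identity.

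The paper handles exactly this by a different mechanism: it tracks the potential $\|v_t-\barv\|^2$ (where $\barv$ minimizes $\cR_\perp$ over $\mathrm{span}(S_\perp)$) and decomposes the one-step change into a support-vector part, which is nonpositive by convexity of $\cR_\perp$, and a non-support part bounded by $\eta_j\cR_{>\gamma}(w_j)$. The crucial step is then showing $\sum_{j}\eta_j\cR_{>\gamma}(w_j)=O(\ln n/(\gamma\gamma'))$, which follows from the dual iteration guarantee in \Cref{fact:dual_main} via $f(q_{j+1})-\gamma^2/2\ge\gamma\gamma'\,\cR_{>\gamma}(w_{j+1})/\cR(w_{j+1})$. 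A similar telescoping handles the squared-gradient term. In short, the $O(\ln n)$ control of the non-support contribution comes from dual convergence, not from any spanning or inversion argument; your plan is missing this ingredient.
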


Before sketching the proof, it's worth mentioning the use of Lebesgue measure in the upper bound.
This assumption ensures the support vectors have reasonable structure and simplifies the
behavior orthogonal to the maximum margin predictor $\baru$;
this assumption originated in prior work on implicit bias \citep{nati_logistic}.

To state the upper and lower bounds more explicitly
and to sketch the proof of \Cref{fact:tight} (full details are in the appendices),
we first introduce some additional notation.
Recall that $\gamma$ denotes the maximum margin, and we let
$\baru:=\argmax_{\|u\|=1}\min_{1\le i\le n}y_i \langle u,x_i\rangle$ denote the
maximum margin direction.
Given any vector $a\in\R^d$, let $\Pi_\perp[a]:=a-\langle a,\baru\rangle\baru$
denote its component orthogonal to $\baru$.
Given a gradient descent iterate $w_t$, let $v_t:=\Pi_\perp[w_t]$.
Given a data point $z_i$, let $z_{i,\perp}:=\Pi_\perp[z_i]$.

Let $S:=\cbr{z_i:\langle\baru,-z_i\rangle=\gamma}$ denote the set of support
vectors, and let
\begin{align*}
    \cR_\gamma(w):=\frac{1}{n}\sum_{z_i\in S}^{}\exp\del{\langle w,z_i\rangle}
\end{align*}
denote the risk induced by support vectors, and
\begin{align*}
    \cR_{>\gamma}(w):=\frac{1}{n}\sum_{z_i\not\in S}^{}\exp\del{\langle w,z_i\rangle}
\end{align*}
denote the risk induced by non-support vectors.
In addition, let $S_\perp:=\cbr{z_{i,\perp}:z_i\in S}$, and
\begin{align*}
    \cR_{\perp}(w):=\frac{1}{n}\sum_{z_i\in S}^{}\exp\del{\langle w,z_{i,\perp}\rangle}=\frac{1}{n}\sum_{z\in S_\perp}^{}\exp\del{\langle w,z\rangle}
\end{align*}
denote the risk induced by components of support vectors orthogonal to $\baru$.
By definition,
$\cR_{\perp}(w)=\cR_{\gamma}(w)\exp\del{\gamma \langle w,\baru\rangle}$.
Lastly, let $\gamma':=\min_{z_i\not\in S}\langle\baru,-z_i\rangle-\gamma$ denote
the margin between support vectors and non-support vectors.
If there is no non-support vector, let $\gamma'=\infty$.

Below is our main result.
\begin{theorem}\label{fact:min_norm_main}
    If the data examples are sampled from a density w.r.t. the Lebesgue measure,
    then almost surely $\cR_{\perp}$ has a unique minimizer $\barv$ over
    $\mathrm{span}(S_\perp)$.
    If all $\heta_j = \eta\cR(w_j)\leq 1$ are nonincreasing, then
    \begin{align*}
        \enVert{v_t-\barv}\le\max\cbr{\enVert{v_0-\barv},2}+\frac{2\ln(n)}{\gamma\gamma'}+2.
    \end{align*}
\end{theorem}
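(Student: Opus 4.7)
The plan is to decompose $w_t = \alpha_t \baru + v_t$ with $\alpha_t := \langle w_t, \baru\rangle$ and track $v_t$ directly. Since for $z_i \in S$ one has $\langle w_t, z_i\rangle = -\gamma\alpha_t + \langle v_t, z_{i,\perp}\rangle$ and for $z_i \notin S$ an extra factor of $e^{-\gamma'\alpha_t}$ appears, the orthogonal component of the gradient-descent update splits cleanly as
\begin{align*}
  v_{t+1} - v_t
  = -\eta_t\, e^{-\gamma\alpha_t}\,\nabla\cR_\perp(v_t) \;-\; \eta_t\, E_t,
\end{align*}
where $E_t := \tfrac{1}{n}\sum_{z_i\notin S} e^{\langle w_t, z_i\rangle} z_{i,\perp}$ is the non-support error term. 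In words, $v_t$ runs inexact gradient descent on the strictly convex $\cR_\perp$, with an effective stepsize that shrinks as $e^{-\gamma\alpha_t}$ and an error that is smaller still, by a factor $e^{-\gamma'\alpha_t}$.

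First I would settle the existence and uniqueness of $\barv$. The KKT conditions of the max-margin primal-dual pair produce nonnegative multipliers $(a_i)_{i\in S}$ with $\sum_{i\in S} a_i z_{i,\perp} = 0$. Under the Lebesgue-continuous density assumption, a standard measure-zero argument shows that strict complementarity (all $a_i > 0$) holds almost surely, so $0$ lies in the relative interior of $\mathrm{conv}(S_\perp)$. This makes $\cR_\perp$ coercive on $\mathrm{span}(S_\perp)$; almost sure strict convexity also follows, since generically no $S_\perp$ lies in a proper affine subspace of its span. Together these yield a unique minimizer $\barv \in \mathrm{span}(S_\perp)$.

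For the main bound I would track the potential $\Phi_t := \|v_t - \barv\|^2/2$ and expand
\begin{align*}
  \Phi_{t+1} - \Phi_t
  = -\eta_t\, e^{-\gamma\alpha_t}\,\ip{\nabla\cR_\perp(v_t)}{v_t - \barv}
    \;-\; \eta_t\,\ip{E_t}{v_t - \barv}
    \;+\; \tfrac{1}{2}\|v_{t+1} - v_t\|^2.
\end{align*}
The first term is nonpositive by convexity of $\cR_\perp$, so growth of $\Phi_t$ comes only from the non-support error and the squared-step term. The squared-step term is controlled by $\|v_{t+1}-v_t\| \le \heta_t \|\Pi_\perp Z^\top q_t\|$ combined with the descent bound on $\psi$ from \Cref{fact:dual_main}, which lets the contributions telescope. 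For the error term, $\|E_t\| \lesssim e^{-\gamma'\alpha_t}$ times a factor depending on $\|v_t\|$, and combining with the lower bound $\alpha_t \ge \gamma \sum_{j<t}\heta_j$ that powers the proof of \Cref{fact:margin_t}, the sum $\sum_t \eta_t \|E_t\|$ becomes a geometric-type series whose total mass scales as $\ln(n)/(\gamma\gamma')$; this is where the $2\ln(n)/(\gamma\gamma')$ constant in the stated bound originates.

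The main obstacle is a circular dependence: the bound on $\|E_t\|$ involves $e^{\langle v_t, z_{i,\perp}\rangle}$ and so depends on $\|v_t\|$, which is exactly what we are trying to bound. I would resolve this by a bootstrap/induction: posit a candidate radius $R$ equal to the claimed right-hand side, assume $\|v_t - \barv\| \le R$, use this to bound $E_t$ and hence $\Phi_{t+1}$, and close the induction by verifying that the accumulated error cannot push $\|v_{t+1} - \barv\|$ beyond $R$. Handling the ``early'' regime where $\alpha_t$ is still too small for the convex drift to absorb the error — this is where the $\max\{\|v_0 - \barv\|, 2\} + 2$ offset in the bound enters — and gluing it cleanly to the ``late'' regime where the drift dominates, is the delicate step.
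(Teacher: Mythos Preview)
Your decomposition and potential match the paper: track $\|v_t-\barv\|^2$, split the orthogonal gradient into a support-vector part (nonpositive by convexity of $\cR_\perp$) and a non-support error, and control the squared-step term via \Cref{fact:dual_main}. The existence/uniqueness argument for $\barv$ is also essentially the paper's. The gap is in your handling of the non-support error $\sum_j\eta_j\|E_j\|$.

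The geometric-series route does not yield $\ln(n)/(\gamma\gamma')$ and does not close the bootstrap. Writing $\eta_j\|E_j\|\le\eta_j\cR_{>\gamma}(w_j)=\heta_j\,\cR_{>\gamma}(w_j)/\cR(w_j)$ and bounding the ratio crudely, the ``factor depending on $\|v_t\|$'' that you allude to is of order $n\,e^{2\|v_j\|}$ (it enters once in the numerator $\cR_{>\gamma}$ and once via $1/\cR_\gamma$ in the denominator). Summing against $\alpha_j\ge\gamma\sum_{k<j}\heta_k$ then gives at best $\sum_j\eta_j\cR_{>\gamma}(w_j)\lesssim n\,e^{2(R+\|\barv\|)}/(\gamma\gamma')$ under your hypothesis $\|v_j-\barv\|\le R$. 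Since this is exponential in $R$, choosing $R$ equal to the claimed right-hand side cannot close the induction, and the $\ln n$ you assert has no source in this argument.

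The paper avoids the circularity entirely by bounding $\sum_j\eta_j\cR_{>\gamma}(w_j)$ through the \emph{dual} telescoping of \Cref{fact:dual_main}, not through decay of $e^{-\gamma'\alpha_j}$. Concretely, one first notes
\[
\tfrac12\|Z^\top q_j\|^2\;\ge\;\tfrac12\gamma^2+\gamma\gamma'\,\frac{\cR_{>\gamma}(w_j)}{\cR(w_j)},
\]
and then the one-step bound $\heta_j\bigl(f(q_{j+1})-\tfrac12\gamma^2\bigr)\le D_{\psi^*}(\barq,q_j)-D_{\psi^*}(\barq,q_{j+1})$ telescopes to $\sum_j\eta_j\cR_{>\gamma}(w_j)\le 1+\ln(n)/(\gamma\gamma')$, with \emph{no reference to $\|v_j\|$ whatsoever}. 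The $\ln n$ is exactly $D_{\mathrm{KL}}(\barq,q_0)$. The squared-step sum is handled the same way, using the first-order optimality of $\barq$ to get $\heta_j^2\|\Pi_\perp Z^\top q_j\|^2\le D_{\psi^*}(\barq,q_{j-1})-D_{\psi^*}(\barq,q_j)$. With both sums bounded independently of $v_j$, no bootstrap is needed: the paper simply restricts attention to the last excursion on which $\|v_j-\barv\|\ge 1$ (this is the origin of the $\max\{\|v_0-\barv\|,2\}$ term), converts the squared recursion into a linear one there, and sums.
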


The key potential used in the proof of \Cref{fact:min_norm_main} is
$\enVert{v_t-\barv}^2$.
The change in this potential comes from three parts: (i) a part due to support
vectors, which does not increase this potential; (ii) a part due to non-support
vectors, which is controlled by the dual convergence result
\Cref{fact:dual_main}; (iii) a squared gradient term, which is again
controlled via the dual convergence result \Cref{fact:dual_main}.
The full proof is given in \Cref{app_sec:refine}.

\Cref{fact:min_norm_main} implies that
$\|v_t\|\le\|\barv\|+\|v_0-\barv\|+O\del{\ln(n)}$.
On the other hand, it is proved in the prior work
\citep{nati_logistic,min_norm} that $\|w_t\|=\Theta\del{\ln(t)}$.
Therefore
\begin{align*}
  \enVert{\frac{w_t}{\|w_t\|}-\baru}=\frac{\enVert{w_t-\|w_t\|\baru}}{\|w_t\|}\le \frac{2\|v_t\|}{\|w_t\|}\le O\del{\frac{\ln(n)}{\ln(t)}}.
\end{align*}
Below we further show that this bound is tight: $\enVert{v_t-\barv}$ could be
$\Omega\del{\ln(n)}$ for certain datasets.
\begin{theorem}\label{fact:min_norm_lb}
    Consider the dataset in $\R^2$ where $z_1=(0.1,0)$ and $z_2,\ldots,z_n$ are
    all $(0.2,0.2)$.
    Then $\gamma = 0.1$, and $\barv=(0,0)$, and starting from $w_0=(0,0)$, for large enough $t$, we
    have
    \begin{align*}
        \|v_t-\barv\|=\|v_t\|\ge\ln(n)-\ln(2).
    \end{align*}
\end{theorem}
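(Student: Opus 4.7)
The plan is to directly track the two-dimensional dynamics on this simple dataset, identify the threshold time at which the dual mass shifts from the non-support vectors to $z_1$, and combine this with a clean telescoping identity to deduce the desired lower bound on $|w_{t,2}|$.

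First, one computes the geometry: the maximum margin direction is $\baru = (-1, 0)$ with $\gamma = 0.1$, and this margin is attained only by $z_1$ since $\langle \baru, -z_j \rangle = 0.2 > 0.1$ for $j \geq 2$. Hence $S = \{z_1\}$, $S_\perp = \{(0,0)\}$, and $\barv = 0$, so that $v_t = (0, w_{t,2})$ and $\|v_t - \barv\| = |w_{t,2}|$. By symmetry, the dual variable $q_t$ takes the value $q_{t,1}$ on $z_1$ and $(1-q_{t,1})/(n-1)$ on each $z_j$ with $j \geq 2$, giving $Z^\top q_t = (0.2 - 0.1 q_{t,1},\, 0.2(1 - q_{t,1}))$. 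Telescoping $w_{t+1} = w_t - \heta_t Z^\top q_t$ from $w_0 = 0$ then yields
\[
  |w_{t,1}| = 0.1 \sum_{j<t} \heta_j + \tfrac{1}{2} |w_{t,2}|
  \qquad \textrm{and} \qquad
  |w_{t,2}| = 0.2 \sum_{j<t} \heta_j (1 - q_{j,1}),
\]
which together give the identity $\sum_{j<t}\heta_j = 10|w_{t,1}| - 5|w_{t,2}|$.

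Next I would introduce the threshold time $T := \min\{t : q_{t,1} > 1/2\}$. Writing $q_{t,1} = A_t/(A_t + (n-1)B_t)$ with $A_t = e^{0.1 w_{t,1}}$ and $B_t = e^{0.2(w_{t,1}+w_{t,2})}$, a direct manipulation shows that $q_{t,1} \leq 1/2$ is equivalent to $|w_{t,1}| + 2|w_{t,2}| \leq 10 \ln(n-1)$. Since both coordinates of $w_t$ are monotone and $q_{t,1} \to 1$ by \Cref{fact:bias_main}, the time $T$ is a finite positive integer. For every $j < T$ we have $1 - q_{j,1} \geq 1/2$, so the second telescoping identity gives $|w_{T,2}| \geq 0.1 \sum_{j<T} \heta_j$; plugging in $\sum_{j<T}\heta_j = 10|w_{T,1}| - 5|w_{T,2}|$ yields $|w_{T,1}| \leq \tfrac{3}{2}|w_{T,2}|$.

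Finally, combining $|w_{T,1}| \leq \tfrac{3}{2}|w_{T,2}|$ with the threshold crossing $|w_{T,1}| + 2|w_{T,2}| > 10 \ln(n-1)$ forces $\tfrac{7}{2}|w_{T,2}| > 10\ln(n-1)$, so $|w_{T,2}| > \tfrac{20}{7}\ln(n-1) \geq \ln(n-1) \geq \ln n - \ln 2$ for every $n \geq 2$ (the last inequality follows from $n-1 \geq n/2$). Since $|w_{t,2}|$ is monotonically non-decreasing in $t$, this bound extends to all $t \geq T$, and ``large enough $t$'' means $t \geq T$. The main obstacle is picking the right threshold; once one commits to $q_{t,1} = 1/2$, everything else reduces to algebraic manipulation of the two telescoping identities.
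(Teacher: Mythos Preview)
Your proof is correct and follows the same skeleton as the paper's: pick a threshold time before which the non-support vectors still dominate, show that up to that time the two coordinates of $w_t$ grow at comparable rates, and combine the coordinate ratio with the threshold to lower-bound $|w_{t,2}|$. The differences are minor but worth noting. The paper phrases the threshold via the risk, $\tau=\min\{t:\cR(w_t)<2/n\}$, which (since $\cR_\gamma(w_t)\leq 1/n$ always) forces $\cR_{>\gamma}(w_t)\geq\cR_\gamma(w_t)$ for $t<\tau$ --- essentially your condition $q_{t,1}\leq 1/2$. It then compares gradient components directly to get $|w_{\tau,2}|\geq |w_{\tau,1}|/2$, and closes with the risk-to-norm bound $\cR(w_\tau)\geq e^{-\|w_\tau\|/3}$ (using $\|z_i\|\leq 1/3$) to obtain $\|w_\tau\|\geq 3\ln(n/2)$ and hence $|w_{\tau,2}|\geq\ln(n/2)$. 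You instead work entirely in coordinates via the telescoping identities and read the lower bound directly off the explicit threshold inequality $|w_{T,1}|+2|w_{T,2}|>10\ln(n-1)$; this avoids the auxiliary risk-norm step and even gives a slightly better constant ($\tfrac{20}{7}\ln(n-1)$ versus roughly $\tfrac{3}{\sqrt{5}}\ln(n/2)$).
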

The proof of \Cref{fact:min_norm_lb} is also given in \Cref{app_sec:refine}.

\section{Examples of losses satisfying \Cref{cond:loss}}\label{sec:loss}

It remains a question what loss functions satisfy \Cref{cond:loss}.
Here are some examples:
\begin{itemize}
  \item The exponential loss $\lexp(z):=e^z$.

  \item The logistic loss $\llog(z):=\ln(1+e^z)$.

  \item The polynomial loss $\ell_{\poly,k}(z)$: on $(-\infty,0]$, it is defined
  as
  \begin{align*}
    \ell_{\poly,k}(z):=\frac{1}{(1-z)^k},\textrm{ and thus }\ell_{\poly,k}'(z)=\frac{k}{(1-z)^{k+1}},\textrm{ and }\ell_{\poly,k}''(z)=\frac{k(k+1)}{(1-z)^{k+2}},
  \end{align*}
  for some $k>0$.
  On $(0,\infty)$, we let $\ell_{\poly,k}'(z):=2k-k(1+z)^{-k-1}$, and therefore
  \begin{align*}
    \ell_{\poly,k}(z)=2kz+\frac{1}{(1+z)^k},\textrm{ and }\ell_{\poly,k}''(z)=\frac{k(k+1)}{(1+z)^{k+2}}.
  \end{align*}
\end{itemize}
\begin{theorem}\label{fact:loss_examples}
  \Cref{cond:loss} is satisfied by $\lexp$, $\llog$, and $\ell_{\poly,k}$ for
  all $k>0$.
\end{theorem}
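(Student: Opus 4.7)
The plan is to verify the four items of \Cref{cond:loss} separately for each of $\lexp$, $\llog$, and $\ell_{\poly,k}$. Items (1)--(3) are essentially one-dimensional calculations on $\ell$, while item (4)---convexity and $\ell_\infty$-smoothness of the multivariate function $\psi$---is the main obstacle and reduces to a Hessian bound whose multivariate flavor is absent from the other items.

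For item (1), positivity of $\ell,\ell',\ell''$ and $\ell(z)\to 0$ as $z\to-\infty$ are immediate for all three losses; for $\ell_{\poly,k}$ I would additionally check that the piecewise definition is $C^2$ at $z=0$ by matching values of $\ell,\ell',\ell''$ from each side. For item (2), the ratio $z\ell'(z)/\ell(z)$ equals $z$ for $\lexp$ and $zk/(1-z)$ for $\ell_{\poly,k}$ on $(-\infty,0)$, both manifestly increasing; for $\llog$ I would differentiate directly and show positivity using the estimate $e^z/(1+e^z)<\ln(1+e^z)\le e^z$. The limit $z\ell'(z)\to 0$ is immediate from the decay of $\ell'$ in each case.

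For item (3), substituting $a=\ell(\xi)$ and $ab=\ell(\eta)$ (so $\eta\ge\xi$), the condition reduces to bounding $\ell'(\xi)/\ell'(\eta)$ below by a constant depending only on $b$. For $\lexp$ this ratio is exactly $1/b$. For $\ell_{\poly,k}$ on $(-\infty,0]$ it equals $\bigl((1-\eta)/(1-\xi)\bigr)^{k+1}=b^{-(k+1)/k}$; the positive regime is handled analogously from the explicit formulas on $(0,\infty)$, and the transition at $0$ follows by continuity. For $\llog$ I would split into the regimes $\xi\to-\infty$ (where $\llog(z)\sim e^z$, giving ratio $\sim 1/b$) and $\xi\to\infty$ (where $\llog(z)\sim z$ and $\ell'\to 1$, giving ratio $\to 1$), and interpolate across the transition using continuity and compactness of any fixed closed sub-interval.

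Item (4) is the main obstacle. A direct computation gives
\[
  \nabla^2\psi(\xi) = \frac{\mathrm{diag}\bigl(\ell''(\xi_i)\bigr)}{\ell'(\psi(\xi))} - \frac{\ell''(\psi(\xi))}{\ell'(\psi(\xi))}\, qq^\top, \qquad q_i=\frac{\ell'(\xi_i)}{\ell'(\psi(\xi))}.
\]
For convexity I would verify $v^\top\nabla^2\psi(\xi)v\ge 0$ via a Cauchy--Schwarz-type inequality tying $\ell''$ at $\xi_i$ to $\ell''$ at $\psi(\xi)$, using item (2) together with the relation $(\ln\ell')'=\ell''/\ell'$. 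For $\ell_\infty$-smoothness, the induced operator norm is controlled by $\max_i\ell''(\xi_i)/\ell'(\psi(\xi))$ plus a rank-one correction: for $\lexp$ this collapses to $\mathrm{diag}(q)-qq^\top$ with $\beta=1$, while for $\llog$ and $\ell_{\poly,k}$ one plugs in the explicit expressions for $\ell''/\ell'$ and bounds the entries. This last bookkeeping---especially for $\llog$, where $\beta=O(n)$ rather than $O(1)$---is the most delicate part, and the concrete smoothness bounds are recorded separately in \Cref{fact:psi_smooth}.
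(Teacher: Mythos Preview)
Your overall plan matches the paper's structure: items (1)--(3) are one-dimensional checks, item (4) is the multivariate obstacle, and smoothness is deferred to \Cref{fact:psi_smooth}. The gap is in your convexity argument for item (4).

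You propose to get $v^\top\nabla^2\psi(\xi)v\ge 0$ ``via a Cauchy--Schwarz-type inequality \ldots\ using item (2) together with $(\ln\ell')'=\ell''/\ell'$.'' The Cauchy--Schwarz step is right and is exactly what the paper does: applying it to the Hessian you wrote down reduces convexity to the super-additivity of $\phi(s):=\ell'(\ell^{-1}(s))^2/\ell''(\ell^{-1}(s))$, which in turn follows once $\phi(s)/s=\ell'(z)^2/\bigl(\ell(z)\ell''(z)\bigr)$ is increasing (this is \Cref{fact:psi_conv}). But that monotonicity is \emph{not} a consequence of item (2); item (2) concerns $z\ell'/\ell$, which is a different ratio, and neither it nor the identity $(\ln\ell')'=\ell''/\ell'$ gives control of $\ell'^2/(\ell\ell'')$. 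So the paper treats ``$\ell'^2/(\ell\ell'')$ increasing'' as a separate auxiliary condition and verifies it loss-by-loss: it is constant for $\lexp$ and for $\ell_{\poly,k}$ on $(-\infty,0]$, equals $e^z/\ln(1+e^z)$ for $\llog$, and for $\ell_{\poly,k}$ on $(0,\infty)$ requires a genuine computation (the paper reduces it to showing a certain polynomial-like expression $\kappa(z)$ is positive). Your plan is missing this ingredient and, in particular, does not flag the nontrivial positive-half-line calculation for $\ell_{\poly,k}$.

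A smaller point: for item (3) with $\llog$, the paper avoids your asymptotics-plus-compactness argument by writing $\ell'(\ell^{-1}(s))=1-e^{-s}$ exactly and checking $(1-e^{-a})/(1-e^{-ab})\ge 1/b$ directly; this is cleaner and gives the explicit constant $c=1/b$.
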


The tricky thing to verify is the convexity and smoothness of $\psi$.
The following result can help us establish the convexity of $\psi$; it is
basically \citep[Theorem 3.106]{ineqs}, and a proof is included in
\Cref{app_sec:loss} for completeness.
\begin{lemma}\label[lemma]{fact:psi_conv}
  If $\ell'^2/(\ell\ell'')$ is increasing on $(-\infty,\infty)$, then $\psi$ is
  convex.
\end{lemma}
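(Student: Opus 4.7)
The plan is to compute the Hessian of $\psi$ directly and reduce its positive semidefiniteness to a scalar inequality that Cauchy--Schwarz and the monotonicity hypothesis on $\rho(z) := \ell'(z)^2/\del{\ell(z)\ell''(z)}$ can finish.

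First I would differentiate $\psi(\xi) = \ell^{-1}\del{\sum_i \ell(\xi_i)}$ twice: the chain rule gives $\partial_i \psi = \ell'(\xi_i)/\ell'(\psi)$, and using $(\ell^{-1})''(y) = -\ell''(\ell^{-1}(y))/\ell'(\ell^{-1}(y))^3$ a second differentiation produces
\begin{align*}
  \nabla^2 \psi = \frac{1}{\ell'(\psi)}\, \mathrm{diag}\del{\ell''(\xi_1),\ldots,\ell''(\xi_n)} - \frac{\ell''(\psi)}{\ell'(\psi)^3}\, v v^\top,
  \qquad v_i := \ell'(\xi_i).
\end{align*}
Since $\ell'(\psi) > 0$, convexity of $\psi$ is equivalent to
\begin{align*}
  \frac{\ell''(\psi)}{\ell'(\psi)^2}\del{\sum_i \ell'(\xi_i) h_i}^2 \le \sum_i \ell''(\xi_i) h_i^2, \qquad \text{for all } h \in \R^n.
\end{align*}

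To eliminate $h$, I would apply Cauchy--Schwarz by splitting $\ell'(\xi_i) h_i = \del{\ell'(\xi_i)/\sqrt{\ell''(\xi_i)}} \cdot \del{\sqrt{\ell''(\xi_i)}\, h_i}$; this is legal because $\ell'' > 0$ by \Cref{cond:loss}. After cancelling the common factor $\sum_i \ell''(\xi_i) h_i^2$, the required inequality collapses to the $h$-free bound
\begin{align*}
  \sum_i \frac{\ell'(\xi_i)^2}{\ell''(\xi_i)} \le \frac{\ell'(\psi)^2}{\ell''(\psi)}.
\end{align*}
Multiplying each summand on the left by $\ell(\xi_i)/\ell(\xi_i)$ and the right-hand side by $\ell(\psi)/\ell(\psi) = \sum_j \ell(\xi_j)/\ell(\psi)$, this is exactly $\sum_i \rho(\xi_i)\ell(\xi_i) \le \rho(\psi)\ell(\psi)$. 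Since $\ell$ is positive and strictly increasing, the defining identity $\ell(\psi) = \sum_j \ell(\xi_j)$ forces $\ell(\psi) \ge \ell(\xi_i)$ and hence $\psi \ge \xi_i$ for every $i$; the monotonicity of $\rho$ then yields $\rho(\psi) \ge \rho(\xi_i)$, so
\begin{align*}
  \sum_i \rho(\xi_i)\ell(\xi_i) \le \rho(\psi) \sum_i \ell(\xi_i) = \rho(\psi)\ell(\psi),
\end{align*}
which completes the reduction.

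The main obstacle is algebraic bookkeeping rather than any deep idea: getting the Hessian formula right, choosing the correct Cauchy--Schwarz split so that $h$ disappears (splitting $\ell'$ as $(\ell'/\sqrt{\ell''}) \cdot \sqrt{\ell''}$ rather than any other way), and recognizing that the natural weight for invoking the hypothesis on $\rho$ is precisely $\ell(\xi_i)$. Once these choices are in place, the monotonicity of $\rho$ closes the argument in a single line.
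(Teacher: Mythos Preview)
Your proof is correct and follows essentially the same route as the paper: compute the Hessian, reduce positive semidefiniteness via the same Cauchy--Schwarz split to the scalar inequality $\sum_i \ell'(\xi_i)^2/\ell''(\xi_i) \le \ell'(\psi)^2/\ell''(\psi)$, and finish with the monotonicity hypothesis. The only cosmetic difference is that the paper phrases the last step as super-additivity of $\phi(s):=\ell'(\ell^{-1}(s))^2/\ell''(\ell^{-1}(s))$ (using that $\phi(s)/s=\rho(\ell^{-1}(s))$ is increasing), whereas you argue the equivalent bound $\rho(\xi_i)\le\rho(\psi)$ directly; both arrive at the same line.
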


On the smoothness of $\psi$, we have the following global estimate.
\begin{lemma}\label[lemma]{fact:psi_smooth}
  For $\lexp$, the smoothness constant $\beta=1$.
  In general, if $\ell''\le c\ell'$ for some constant $c>0$, then $\beta\le cn$.
\end{lemma}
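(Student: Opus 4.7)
The plan is to directly compute the Hessian $H := \nabla^2 \psi(\xi)$ and show that $u^\top H u \le \beta \|u\|_\infty^2$ for every $u \in \R^n$, since this is equivalent to the stated $\ell_\infty$-smoothness. Writing $\psi(\xi) = \ell^{-1}\del{\sum_i \ell(\xi_i)}$ and differentiating, I expect
\[
  \partial_i \psi = \frac{\ell'(\xi_i)}{\ell'(\psi)},\qquad
  \partial_{ij}\psi = -\frac{\ell'(\xi_i)\ell'(\xi_j)\ell''(\psi)}{\ell'(\psi)^3}\ \ (i\ne j),\qquad
  \partial_{ii}\psi = \frac{\ell''(\xi_i)}{\ell'(\psi)} - \frac{\ell'(\xi_i)^2\ell''(\psi)}{\ell'(\psi)^3},
\]
so that the quadratic form collapses to
\[
  u^\top H u \;=\; \frac{1}{\ell'(\psi)}\sum_{i=1}^n \ell''(\xi_i)u_i^2 \;-\; \frac{\ell''(\psi)}{\ell'(\psi)^3}\del{\sum_{i=1}^n \ell'(\xi_i)u_i}^2.
\]

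The key observation is that the second (off-diagonal-generated) term is nonpositive because $\ell,\ell',\ell''>0$ by \Cref{cond:loss}, so it can be dropped. Bounding $u_i^2 \le \|u\|_\infty^2$ then gives
\[
  u^\top H u \;\le\; \|u\|_\infty^2 \cdot \frac{1}{\ell'(\psi)}\sum_{i=1}^n \ell''(\xi_i),
\]
and both claimed bounds follow by estimating the scalar factor on the right.

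For the exponential loss, $\ell = \ell' = \ell''$, so $\ell'(\psi) = e^{\psi} = \sum_i \ell(\xi_i) = \sum_i \ell''(\xi_i)$, giving the factor exactly $1$ and hence $\beta = 1$. For the general case $\ell'' \le c\ell'$, I would first substitute to get $\sum_i \ell''(\xi_i) \le c \sum_i \ell'(\xi_i)$. Then I note that $\ell'$ is increasing (since $\ell''>0$) and that $\xi_i \le \psi$ for every $i$ — this holds because $\ell$ is increasing and $\ell(\xi_i) \le \sum_j \ell(\xi_j) = \ell(\psi)$ — so $\ell'(\xi_i) \le \ell'(\psi)$ for each $i$. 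Summing gives $\sum_i \ell'(\xi_i)/\ell'(\psi) \le n$, and therefore the scalar factor is at most $cn$, yielding $\beta \le cn$.

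There is no real obstacle here; the content is essentially a careful chain-rule calculation followed by the observation that discarding a negative term reduces the bound to a pointwise estimate on $\ell''/\ell'$ summed over the coordinates. The only subtle point is recognizing that $\xi_i \le \psi$ coordinate-wise, which is what makes the generic bound scale only linearly in $n$ rather than worse; this is also the step that reveals why the exponential loss is special — the ratios $\ell'(\xi_i)/\ell'(\psi)$ are exactly the dual variables $q_{t,i}$ from \cref{eq:md}, and they sum to $1$ for $\lexp$ but only to at most $n$ in general.
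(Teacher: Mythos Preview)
Your proof is correct and follows essentially the same approach as the paper: compute the Hessian of $\psi$, drop the nonpositive rank-one term coming from $\ell''(\psi)$, and bound the remaining diagonal part $\sum_i \ell''(\xi_i)/\ell'(\psi)$ by $1$ for $\lexp$ and by $cn$ in general via $\ell'(\xi_i)\le\ell'(\psi)$. The paper simply cites the Hessian formula \cref{eq:psi_hessian} already derived in the convexity proof rather than recomputing it, but the logic is identical.
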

Note that for $\llog$ and $\ell_{\poly,k}$, the above upper bound on the
smoothness constant is $cn$, which looks bad.
However, in these cases $\ell'$ is bounded above by some universal constant
$c'$; therefore to satisfy the condition
$\heta_t=\eta_t\ell'\del{\psi(Zw_t)}/n\le1/\beta$ in
\Cref{fact:dual_main,fact:bias_main}, it is enough if $\eta_t\le1/(cc')$.
In other words, we can still handle a constant step size for gradient descent on
the empirical risk function $\cR$.
A finer approach is to use the smoothness constant on sublevel sets; we
demonstrate this in the next result for the logistic loss.
\begin{lemma}\label[lemma]{fact:warm_start}
  For the logistic loss, on the sublevel set
  $\cbr{\xi\middle|\psi(\xi)\le0}=\cbr{\xi\middle|\cL(\xi)\le\ell(0)}$, it holds
  that $1\le \enVert{\nabla\psi(\xi)}_1\le2$, and $\psi$ is $2$-smooth with respect
  to the $\ell_\infty$ norm.
  Moreover, if $\cL(Zw_t)\le\ell(0)/(2e^2)$, and
  $\heta_t=\eta_t\ell'\del{\psi(Zw_t)}/n\le1/2$, then
  \begin{align*}
    \psi(Zw_{t+1})-\psi(Zw_t)\le(-\heta_t+\heta_t^2)\enVert{Z^\top q_t}^2,\quad\textrm{and}\quad D_{\psi^*}(q_{t+1},q_t)\ge \frac{1}{4}\|q_{t+1}-q_t\|_1^2.
  \end{align*}
\end{lemma}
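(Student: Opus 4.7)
The plan is to prove the four assertions in sequence: first the pointwise bounds $1\le \enVert{\nabla\psi(\xi)}_1\le 2$ on the sublevel set, then the $2$-smoothness of $\psi$ with respect to $\|\cdot\|_\infty$ on that set, and finally the two inequalities under the warm-start condition. A useful preliminary observation specific to the logistic loss is the identity $\ell'(z) = 1 - e^{-\ell(z)}$ (immediate from $\ell(z) = \ln(1+e^z)$), which lets one rewrite
\[
  \enVert{\nabla\psi(\xi)}_1
  = \frac{\sum_i \ell'(\xi_i)}{\ell'(\psi(\xi))}
  = \frac{n - \sum_i a_i}{1 - \prod_i a_i},
  \qquad a_i := e^{-\ell(\xi_i)} \in (0,1).
\]

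For the lower bound $\enVert{\nabla\psi(\xi)}_1\ge 1$, I would prove $\sum_i(1-a_i)\ge 1-\prod_i a_i$ for $a_i\in(0,1]$ by induction on $n$, where the inductive step reduces to the inequality $(1-a_{n+1})(1-\prod_{i\le n}a_i)\ge 0$. For the upper bound $\le 2$ on $\cbr{\psi(\xi)\le 0}=\cbr{\cL(\xi)\le \ln 2}$, I would bound the numerator via $\ell'(\xi_i)=1-e^{-\ell(\xi_i)}\le \ell(\xi_i)$ to obtain $\sum_i\ell'(\xi_i)\le \cL(\xi)$, and bound the denominator via the concavity chord $1-e^{-x}\ge x/(2\ln 2)$ on $[0,\ln 2]$ to obtain $\ell'(\psi(\xi))\ge \cL(\xi)/(2\ln 2)$. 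The ratio is then at most $2\ln 2<2$.

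For the smoothness, I would compute the Hessian directly from $\partial_i\psi=\ell'(\xi_i)/\ell'(\psi)$:
\[
  \nabla^2\psi(\xi)
  = \frac{1}{\ell'(\psi)}\mathrm{diag}(\ell''(\xi_1),\ldots,\ell''(\xi_n))
  - \frac{\ell''(\psi)}{\ell'(\psi)^3}\, \ell'(\xi)\ell'(\xi)^\top.
\]
Since $\ell''>0$, the rank-one term is negative semidefinite, so for any $v\in\R^n$,
\[
  v^\top \nabla^2\psi(\xi)\, v
  \le \sum_i \frac{\ell''(\xi_i)}{\ell'(\psi)}\, v_i^2
  \le \enVert{v}_\infty^2 \cdot \frac{\sum_i \ell''(\xi_i)}{\ell'(\psi)}.
\]
The logistic identity $\ell''=\ell'(1-\ell')\le \ell'$ then bounds the right-hand side by $\enVert{v}_\infty^2\enVert{\nabla\psi(\xi)}_1\le 2\enVert{v}_\infty^2$, proving $\beta=2$.

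For the \emph{moreover} part, the main obstacle is ensuring the line segment from $p_t$ to $p_{t+1}$ stays inside $\cbr{\psi\le 0}$, since $2$-smoothness is only available there; once this is done, the first inequality is exactly the descent lemma with $\beta=2$ and $\heta_t\le 1/2$, and the Bregman bound $D_{\psi^*}(q_{t+1},q_t)\ge \|q_{t+1}-q_t\|_1^2/4$ follows from the standard duality between $\ell_\infty$-smoothness of $\psi$ and $\ell_1$-strong convexity of $\psi^*$ (exactly as in \Cref{fact:standard_smooth}, with the refined $\beta=2$). To establish containment, I would bootstrap via continuity: the warm start combined with the chord bound $e^x-1\le x/\ln 2$ on $[0,\ln 2]$ gives $\psi(p_t)=\ln(e^{\cL(Zw_t)}-1)\le \ln(1/(2e^2))=-\ln 2-2$. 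Letting $T:=\sup\cbr{\alpha\in[0,1]:\psi(p_t+s(p_{t+1}-p_t))\le 0 \text{ for all } s\in[0,\alpha]}$, the sublevel-set smoothness applies on $[0,T]$, and combined with $\enVert{p_{t+1}-p_t}_\infty\le \heta_t\enVert{Z^\top q_t}$ (from $\|z_i\|\le 1$), together with the fact that $-\heta_t T+\heta_t^2 T^2\le 0$ on $T\in[0,1]$ when $\heta_t\le 1/2$, it yields $\psi(p_t+T(p_{t+1}-p_t))\le \psi(p_t)\le -\ln 2-2$. Continuity then forces $T=1$, and evaluating at $T=1$ delivers the stated inequality.
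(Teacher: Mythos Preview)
Your arguments for the gradient bounds $1\le\|\nabla\psi(\xi)\|_1\le 2$, the $2$-smoothness of $\psi$ on the sublevel set, and the descent inequality $\psi(Zw_{t+1})-\psi(Zw_t)\le(-\heta_t+\heta_t^2)\|Z^\top q_t\|^2$ are correct and essentially match the paper's proof (the paper also argues by continuity that the step stays inside $\{\psi\le 0\}$, so the $2$-smoothness is available along the whole segment). The gap is in the Bregman inequality $D_{\psi^*}(q_{t+1},q_t)\ge\tfrac14\|q_{t+1}-q_t\|_1^2$: you invoke the smoothness/strong-convexity duality ``exactly as in \Cref{fact:standard_smooth}, with the refined $\beta=2$'', but that duality requires \emph{global} $\beta$-smoothness of $\psi$. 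Here $\psi$ is only $2$-smooth on $\{\psi\le 0\}$, and knowing $p_t,p_{t+1}\in\{\psi\le 0\}$ does not suffice, because the standard proof of the duality implicitly optimizes over an auxiliary primal point that may leave the sublevel set.

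The paper closes this gap by a direct argument that is precisely where the factor $2e^2$ in the warm-start hypothesis is used. Writing $D_{\psi^*}(q_{t+1},q_t)=\psi(p_t)-\psi(p_{t+1})-\langle q_{t+1},p_t-p_{t+1}\rangle$, it inserts the auxiliary point $\tilde\xi$ with $\tilde\xi_i=p_{t,i}-\tfrac12\|q_t-q_{t+1}\|_1\,\sgn((q_t-q_{t+1})_i)$, drops $\psi(\tilde\xi)-\psi(p_{t+1})-\langle q_{t+1},\tilde\xi-p_{t+1}\rangle\ge 0$ by convexity, and lower-bounds $\psi(p_t)-\psi(\tilde\xi)$ via $2$-smoothness along $[p_t,\tilde\xi]$, yielding $D_{\psi^*}(q_{t+1},q_t)\ge\langle q_t-q_{t+1},p_t-\tilde\xi\rangle-\|p_t-\tilde\xi\|_\infty^2=\tfrac14\|q_t-q_{t+1}\|_1^2$. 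The delicate point is ensuring $\tilde\xi\in\{\psi\le 0\}$: from $\|q_t\|_1,\|q_{t+1}\|_1\le 2$ one gets $\|p_t-\tilde\xi\|_\infty\le 2$, and since for the logistic loss $\ell(z+2)/\ell(z)\le 2e^2$ for $z\le 0$, the hypothesis $\cL(p_t)\le\ell(0)/(2e^2)$ forces each $p_{t,i}\le 0$ and hence $\cL(\tilde\xi)\le 2e^2\,\cL(p_t)\le\ell(0)$. Your proposal never uses the $2e^2$ factor for the Bregman claim, which is a sign that this step is missing.
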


To prove \Cref{fact:margin_t}, we can proceed as with the exponential loss
over those iterations $[t_0,t]$ where \Cref{fact:warm_start} is in effect,
achieving the same $O(\ln(n)/t)$ rate over those iterations.
To control the magnitude of $t_0$ and $\|w_{t_0}\|$, we can use a delicate but more standard
analysis, giving $t_0 = O((\ln n)^2/\gamma^2)$ and $\|w_{t_0}\|=O((\ln n)/\gamma)$
(cf. \Cref{fact:warm_start:2}).

The full proofs of results in this section are given in \Cref{app_sec:loss}.

\section{Open problems}\label{sec:open}

One open problem is to extend our results to nonlinear models, such as deep
linear or homogeneous networks.
For example, \citet{chizat_bach_imp} prove that gradient descent can maximize
the margin on a 2-homogeneous network, assuming (a different kind of) dual
convergence.
It is very interesting to see if our analysis can be applied to this setting.

Another open problem is to see whether our analysis can be extended to other
training algorithms, such stochastic gradient descent and accelerated gradient
descent.

\subsection*{Acknowledgements}

The authors thank Maxim Raginsky for pointing them to the concept of generalized
sums \citep{ineqs}, and to Daniel Hsu and Nati Srebro for discussion of lower bounds
and the best known rates for the general hard-margin linear SVM problem.
The authors are grateful for support from the NSF under grant IIS-1750051, and
from NVIDIA under a GPU grant.

\bibliography{bib}
\bibliographystyle{plainnat}

\appendix

\section{Dual objective to the smoothed margin}
\label{app:dual}

This appendix justifies calling $\frac 1 2 \|Z^\T q\|^2$ the dual potential via convex duality,
which also gives another appearance of the constraint $\psi^*(q)\leq 0$.

To start, it seems that ideally we would build a duality around $\psi(Zw)/\|w\|$, however
this is not convex.  Instead, consider
the \emph{persective function} $\hpsi$ of $\psi$ (cf. \citep[Section B.2.2]{HULL}),
a standard notion in convex analysis:
\[
  \hpsi(v,r) :=
  \begin{cases}
    r \psi(v/r)
    & r > 0,
    \\
    \lim_{r\downarrow 0} r \psi(v/r)
    &
    r = 0,
    \\
    \infty
    &
    r < 0.
  \end{cases}
\]
The nonnegative scalar $r$ takes on the role of $\nicefrac {1}{\|w\|}$.

A standard fact from convex analysis is that the perspective of a convex function is
also convex (in joint parameters $(v,r)\in\R^{n+1}$).  We will also need the conjugate
of $\hpsi$, and the fact that $\hpsi\to\max$ for both $\lexp$ and $\llog$.

\begin{lemma}
  \label{fact:hpsi}
  If $\psi$ is closed and convex,
  then $\hpsi$ is convex (as a function over $\R^{n+1}$),
  and has conjugate
  \[
    \hpsi^*((q,b))
    = \begin{cases}
      \infty
      & b  > -\psi^*(q),
      \\
      0
      & b \leq -\psi^*(q).
    \end{cases}
  \]
  Furthermore, for $\ell\in\{\llog,\lexp\}$ and $v\in\R^n$ with $v<0$ (coordinate-wise),
  then $r\mapsto \hpsi(v,r)$ is nondecreasing, and
  $\lim_{r\downarrow 0} \hpsi(v,r) = \max_i v_i$.
\end{lemma}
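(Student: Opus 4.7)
I would attack the three assertions in turn. Convexity of $\hpsi$ on $\R^{n+1}$ is the classical fact that the perspective of a closed convex function is jointly convex; I would cite \citep[Section B.2.2]{HULL}. For the conjugate, I would compute
\[
  \hpsi^*(q,b)=\sup_{v\in\R^n,\,r\ge 0}\bigl[\ip{q}{v}+br-\hpsi(v,r)\bigr]
\]
by splitting on $r>0$ versus $r=0$. For $r>0$, the substitution $u:=v/r$ turns the inner sup over $v$ into $r\bigl[\psi^*(q)+b\bigr]$, whose sup over $r>0$ is $0$ when $\psi^*(q)+b\le 0$ and $+\infty$ otherwise. For $r=0$, $\hpsi(v,0)=\psi_\infty(v)$ and $\psi_\infty^*$ is the indicator of $\mathrm{cl}(\dom\psi^*)$, so the $r=0$ contribution is $0$ when $q\in\mathrm{cl}(\dom\psi^*)$ and $+\infty$ otherwise; in either case it is dominated by the $r>0$ contribution, and combining yields the stated piecewise formula.

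\textbf{Monotonicity and limit for $\lexp$.} For $\psi(\xi)=\ln\sum_i e^{\xi_i}$, I would differentiate $g(r):=r\psi(v/r)$ directly. Writing $p_i(r):=e^{v_i/r}/\sum_j e^{v_j/r}$ for the induced softmax weights, a one-line calculation gives
\[
  g'(r)=\ln\sum_j e^{v_j/r}-\sum_i p_i(r)\,v_i/r=-\sum_i p_i(r)\ln p_i(r)=H(p(r))\ge 0,
\]
so $g$ is nondecreasing for every $v$. The standard sandwich $\max_i v_i\le r\ln\sum_i e^{v_i/r}\le \max_i v_i+r\ln n$ then forces $\lim_{r\downarrow 0}g(r)=\max_i v_i$.

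\textbf{Reduction of $\llog$ to $\lexp$.} The key identity is
\[
  e^{\cL(\xi)}=\prod_i(1+e^{\xi_i})=\sum_{S\subseteq[n]}e^{\sum_{i\in S}\xi_i},
\]
so $\psi_{\llog}(\xi)=\ln(e^{\cL(\xi)}-1)=\ln\sum_{\emptyset\ne S\subseteq[n]}e^{\sum_{i\in S}\xi_i}=\psi_{\lexp}(L(\xi))$ for the linear map $L:\R^n\to\R^{2^n-1}$ defined by $L(\xi)_S=\sum_{i\in S}\xi_i$. Since $L$ commutes with the perspective construction, $\hpsi_{\llog}(v,r)=\hpsi_{\lexp}(L(v),r)$, so monotonicity in $r$ transfers immediately from the $\lexp$ case; and when $v<0$, every coordinate of $L(v)$ is a sum of strictly negative entries, so the argmax over nonempty $S$ is the singleton $\{\argmax_i v_i\}$ and the $\lexp$ limit yields $\max_i v_i$. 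The trickiest bookkeeping will be the $r=0$ case of the conjugate: one must verify that the recession-function contribution never exceeds the $r>0$ one, for which the identification $\psi_\infty^*=\delta_{\mathrm{cl}(\dom\psi^*)}$ is the cleanest tool; everything else reduces to standard perspective-function machinery or direct log-sum-exp calculation.
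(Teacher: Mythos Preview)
Your proposal is correct. The treatment of convexity, the conjugate computation via the substitution $u=v/r$, and the $\lexp$ sandwich for the limit all match the paper almost verbatim; your handling of the $r=0$ term via the recession function $\psi_\infty$ and its conjugate $\delta_{\mathrm{cl}(\dom\psi^*)}$ is in fact more careful than the paper, which simply restricts to $r>0$ without comment.

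The one genuine organizational difference is how monotonicity and the $\llog$ case are handled. The paper proves $r\mapsto\hpsi(v,r)$ nondecreasing \emph{uniformly for both losses} by citing the inequality $\ip{v}{\nabla\psi(v)}\le\psi(v)$ (from \citep{dir_align}), which gives $\frac{d}{dr}\hpsi(v,r)=\psi(v/r)-\ip{v/r}{\nabla\psi(v/r)}\ge 0$ directly; your entropy identity $g'(r)=H(p(r))$ is exactly this formula specialized to $\lexp$. For $\llog$, the paper then computes the limit by the same product expansion you wrote down, but as a stand-alone calculation. Your device of writing $\psi_{\llog}=\psi_{\lexp}\circ L$ for the linear map $L(\xi)_S=\sum_{i\in S}\xi_i$ is a cleaner packaging: it gives both monotonicity and the limit for $\llog$ in one stroke once the $\lexp$ case is done, and avoids invoking an external lemma. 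The paper's route has the advantage of a single monotonicity argument covering any loss satisfying $\ip{v}{\nabla\psi(v)}\le\psi(v)$; yours has the advantage of being fully self-contained and making the $\llog$ limit a one-liner (the argmax over nonempty $S$ of $\sum_{i\in S}v_i$ with $v<0$ is a singleton).
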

\begin{proof}
  As mentioned above,
  the perspective function of a closed convex function is also closed and convex
  \citep[Section B.2.2]{HULL}.
  For the conjugate,
  since $\hpsi$ is convex and closed,
  \begin{align*}
    \hpsi^*((q,b))
    &= \sup_{v,r} \ip{v}{q} + br - \hpsi(v,r)
    = \sup_{r > 0} r\del{b + \sup_v\sbr{ \ip{v/r}{q} - \psi(v/r)}}
    = \sup_{r > 0} r\del{ b +  \psi^*(q) }
    \\
    &= \begin{cases}
      \infty
      & b  > -\psi^*(q),
      \\
      0
      & b \leq -\psi^*(q).
    \end{cases}
  \end{align*}

  For the second part, let $v < 0$ and $\ell\in\{\lexp,\llog\}$ be given.
  By \citep[Lemma C.5, after correcting the signs on the losses]{dir_align},
  $\ip{v}{\nabla \psi(v)}\leq \psi(v)$,
  meaning in particular $\ip{v/r}{\nabla\psi(v/r)}\leq \psi(v/r)$ for any $r>0$,
  and
  \[
    \frac {\dif}{\dif r} \hpsi(v,r)
    = \psi(v/r) + r \ip{\psi(v/r)}{- v/ r^2}
    = \psi(v/r) - \ip{\psi(v/r)}{v/r} \geq 0,
  \]
  meaning $r\mapsto\hpsi(v,r)$ is nondecreasing.  It only remains to show
  that $\lim_{r\downarrow 0} \hpsi(v,r) = \min_i v_i$.  For $\lexp$,
  this is a consequence of the standard inequalities
  \begin{align*}
    \hpsi(v,r)
    &= r \ln\sum_i \exp(v_i/r)
    \geq r \ln\max_i \exp(v_i/r)
    \\
    &= \max_i v_i
    \\
    &= r \ln \max_i v_i/r
    \geq r\ln \frac 1 n \sum_i v_i/r
    = \hpsi(v,r) - r \ln n,
  \end{align*}
  and thus $\lim_{r\downarrow 0} \hpsi(v,r) = \max_i v_i$.
  For $\llog$, since $\llog^{-1}(z) = \ln(\exp(z) - 1)$,
  defining $M := \max_i v_i$ for convenience,
  it suffices to note that
  \begin{align*}
    \lim_{r\downarrow 0}
    \hpsi_{\log}(v,r)
    &=
    \lim_{r\downarrow 0}
    r\ln\del[2]{\exp( \sum_i \ln(1+\exp(v_i/r)) - 1)}
    \\
    &=
    \lim_{r\downarrow 0}
    r \ln\del[2]{\prod_i (1+\exp(v_i/r)) - 1}
    \\
    &=
    \lim_{r\downarrow 0}
    r \ln\sum_{\substack{S \subseteq \{1,\ldots,n\}\\|S|\geq 1}} \exp\del[2]{\sum_{i\in S} v_i/r}
    \\
    &=
    M+
    \ln
    \lim_{r\downarrow 0}
    \sbr[3]{\sum_{\substack{S \subseteq \{1,\ldots,n\}\\|S|\geq 1}} \exp\del[2]{\sum_{i\in S} (v_i-M)/r}}^r
    = M.
  \end{align*}
\end{proof}

With $\hpsi$ and $\hpsi^*$ in hand, we can easily form a relevant pair of primal-dual problems.

\begin{theorem}
  \label{fact:hpsi_dual}
  Suppose $\psi$ is closed convex, and $\hpsi(w,r)$ is bounded below over $\|w\|\leq 1$.
  Then
  \[
    \max_{\substack{\|w\|\leq 1\\r\geq 0}} - \hpsi(Zw, r)
    =
    \min_{\substack{q\in\R^n\\\psi^*(q)\leq 0}} \|Z^\T q\|_2.
  \]
\end{theorem}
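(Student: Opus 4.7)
The plan is to apply Fenchel--Rockafellar duality, leveraging the conjugate $\hpsi^*$ already computed in \Cref{fact:hpsi}. First I recast the primal as a Fenchel-form minimization: writing $P^* := \max_{\|w\|\le 1,\, r\ge 0} -\hpsi(Zw, r)$, we have $-P^* = \min_{w,r}\, F(w,r) + G(A(w,r))$, where $F(w,r) := \iota_{\{\|w\|\le 1\}}(w) + \iota_{\{r\ge 0\}}(r)$, $G := \hpsi$, and the linear map $A(w,r) := (Zw, r)$ has adjoint $A^{\top}(q,b) = (Z^\top q,\, b)$.

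Next I compute the two Fenchel conjugates. Separating the supremum over $w$ and $r$, $F^*(u,s) = \|u\|_2 + \iota_{\{s\le 0\}}(s)$, since $\sup_{\|w\|\le 1}\langle u,w\rangle = \|u\|_2$ and $\sup_{r\ge 0} sr$ equals $0$ when $s\le 0$ and $+\infty$ otherwise. The conjugate $G^* = \hpsi^*$ is exactly as given in \Cref{fact:hpsi}: it is $0$ when $b \le -\psi^*(q)$ and $+\infty$ otherwise. Substituting into the Fenchel dual $\sup_{q,b}\, \left[-F^*(-A^\top(q,b)) - G^*(q,b)\right]$, the combined constraints become $b\ge 0$ (from $F^*$, since $-A^\top(q,b) = (-Z^\top q, -b)$ and the finiteness of $F^*$ requires $-b\le 0$) and $b \le -\psi^*(q)$ (from $G^*$); these are jointly satisfiable iff $\psi^*(q)\le 0$, and then $b$ drops out of the objective. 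The dual thus reduces to $\sup_{q:\,\psi^*(q)\le 0} -\|Z^\top q\|_2$, the negation of the theorem's right-hand side.

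It remains to show equality of primal and dual values. Weak duality ($P^* \le \min_{\psi^*(q)\le 0} \|Z^\top q\|_2$) follows directly from Fenchel--Young applied to the pair $((Zw,r), (q,0))$: whenever $\psi^*(q)\le 0$ we have $\hpsi^*(q,0) = 0$, so $\hpsi(Zw,r) \ge \langle w, Z^\top q\rangle \ge -\|Z^\top q\|_2$ for $\|w\|\le 1$. For the reverse inequality I invoke the Fenchel--Rockafellar strong duality theorem from \citet{borwein_lewis}: since $\hpsi$ is closed convex by \Cref{fact:hpsi} and the primal value is finite by the boundedness hypothesis, it suffices to verify a constraint qualification such as $0 \in \operatorname{ri}(\dom G - A\,\dom F)$. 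For the losses of interest, $\dom\psi = \R^n$, so $\dom\hpsi \supseteq \R^n \times \R_{>0}$, and the point $(w,r) = (0,1)$ lies in the interior of $\dom G$ inside the image $A(\dom F)$, discharging the qualification.

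The main obstacle is this final strong-duality step: the conjugate calculations and weak duality are direct consequences of \Cref{fact:hpsi} and Fenchel--Young, but strong duality demands care, especially if one wishes to relax the assumption that $\dom\psi$ is all of $\R^n$, in which case the constraint qualification must be rechecked based on where $\hpsi$ is continuous relative to the affine image $A(\dom F)$. Attainment of the minimum on the dual side follows from closedness of the sublevel set $\{q : \psi^*(q)\le 0\}$ combined with coercivity of $\|Z^\top q\|_2$ on the feasible directions (a byproduct of \Cref{fact:ztq_unique}-style arguments), though this is only needed if one insists on writing $\min$ rather than $\inf$.
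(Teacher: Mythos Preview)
Your proposal is correct and follows essentially the same Fenchel--Rockafellar route as the paper. The only cosmetic difference is bookkeeping: the paper leaves the constraint $r\ge 0$ implicit in $\hpsi$ (which is $+\infty$ for $r<0$) and constrains only $\|w\|\le 1$ in the indicator, so its conjugate forces $b=0$ directly, whereas you put $r\ge 0$ into $F$ and obtain the range $0\le b\le -\psi^*(q)$ before eliminating $b$; both collapse to the same dual, and your explicit discussion of the constraint qualification is slightly more careful than the paper's one-line invocation of \citet[Section 31]{rockafellar}.
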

\begin{remark}
  This form makes the primal and dual explicitly the maximum margin $\gamma$
  for exp-tailed losses.  Alternatively, we could use an SVM form of the objective,
  whereby the dual contains $\|Z^\T q\|_2^2$, and is thus closer to $f$.
\end{remark}
\begin{proof}
  Let $v\in\R^{d+1}$ be a single variable for $(w,r)$,
  and let $\iota$ denote the convex indicator of the set $\{v\in\R^{d+1}: \|v_{1:d}\|\leq 1\}$,
  whereby
  \[
    \iota^*(s) = \sup_{\|v_{1:d}\|\leq 1} \ip{v}{s}
    = \begin{cases}
      \|s\|
      & s_{d+1} = 0,
      \\
      \infty
      & s_{d+1} \neq 0.
    \end{cases}
  \]
  Moreover, let $M\in\R^{(n+1)\times (d+1)}$ denote the matrix which is obtained by adding a row
  and a column to $Z$ which are $0$ except in the common $(n+1,d+1)$-th entry where they are $1$,
  whereby $M (w,r) = (Zw, r)$.
  By Fenchel-Rockafellar duality \citep[Section 31]{rockafellar}, since $\hpsi$ is closed convex
  by \Cref{fact:hpsi},
  \[
    \inf_{\substack{\|w\|\leq 1\\r\geq 0}} \hpsi(Zw, r)
    =
    \inf_{v\in\R^{d+1}} \hpsi(Mv) + \iota(v)
    =
    \max_{s\in\R^{n+1}} -\hpsi^*(-s) - \iota^*(M^\T s).
  \]
  By the earlier form of $\iota^*$ and the construction of $M$,
  we have the constraint $s_{n+1} = (M^\T s)_{d+1} = 0$.  Writing $q\in\R^n$
  for the first $n$ coordinates of $S$ and baking in a $0$ for an $(n+1)$-st coordinate,
  and additionally using the form of $\hpsi^*$ from \Cref{fact:hpsi},
  we have the simpler form
  \[
    \inf_{\substack{\|w\|\leq 1\\r\geq 0}} \hpsi(Zw, r)
    = \max_{q\in\R^n} -\hpsi^*(-(q,0)) - \|Z^\T q\|
    = \max\cbr{ - \|Z^\T q\| : q\in\R^n, 0 \geq \psi^*(-q)}.
  \]
  To finish, we replace $q$ with $-q$ in the dual.
\end{proof}

\section{Omitted proofs from \Cref{sec:dual}}\label{app_sec:dual}

\begin{proof}[Proof of \Cref{fact:standard_smooth}]
  Since $\psi$ is $\beta$-smooth with respect to the $\ell_\infty$ norm,
  \begin{align*}
    \psi(p_{t+1})-\psi(p_t) & \le\ip{\nabla\psi(p_t)}{p_{t+1}-p_t}+\frac{\beta}{2}\|p_{t+1}-p_t\|_\infty^2 \\
     & =\ip{q_t}{-\heta_tZZ^\top q_t}+\frac{\beta\heta_t^2}{2}\enVert{ZZ^\top q_t}_\infty^2 \\
     & =-\heta_t\enVert{Z^\top q_t}^2+\frac{\beta\heta_t^2}{2}\enVert{ZZ^\top q_t}_\infty^2.
  \end{align*}
  Moreover, since $\|z_i\|\le1$,
  \begin{align*}
    \enVert{ZZ^\top q_t}_\infty=\max_{1\le i\le n}\envert{\ip{Z^\top q_t}{z_i}}\le\max_{1\le i\le n}\enVert{Z^\top q_t}\|z_i\|\le\enVert{Z^\top q_t}.
  \end{align*}
  As a result,
  \begin{align*}
    \psi(p_{t+1})-\psi(p_t)\le-\heta_t\enVert{Z^\top q_t}^2+\frac{\beta\heta_t^2}{2}\enVert{Z^\top q_t}^2.
  \end{align*}

  On the second claim, note that since $\psi$ is $\beta$-smooth with respect to
  the $\ell_\infty$ norm, \citep[Lemma 2.19]{shalev_online} implies that
  $\psi^*$ is $(1/\beta)$-strongly convex with respect to the $\ell_1$ norm, and
  in particular $D_{\psi^*}(q_{t+1},q_t)\ge\|q_{t+1}-q_t\|_1^2/(2\beta)$.
\end{proof}

\begin{proof}[Proof of \Cref{fact:md_step}]
  Since $f$ is convex, we have
  \begin{align*}
    \heta_t\del{f(q_t)-f(q)}\le\ip{\heta_t\nabla f(q_t)}{q_t-q}=\ip{\heta_t\nabla f(q_t)}{q_t-q_{t+1}}+\ip{\heta_t\nabla f(q_t)}{q_{t+1}-q}.
  \end{align*}
  Recall that $p_{t+1}=p_t-\heta_tZZ^\top q_t=p_t-\heta_t\nabla f(q_t)$,
  therefore
  \begin{align*}
    \heta_t\del{f(q_t)-f(q)} & \le\ip{\heta_t\nabla f(q_t)}{q_t-q_{t+1}}+\ip{\heta_t\nabla f(q_t)}{q_{t+1}-q} \\
     & =\ip{\heta_t\nabla f(q_t)}{q_t-q_{t+1}}+\ip{p_t-p_{t+1}}{q_{t+1}-q}.
  \end{align*}
  It can be verified by direct expansion that
  \begin{align*}
    \ip{p_t-p_{t+1}}{q_{t+1}-q}=D_{\psi^*}(q,q_t)-D_{\psi^*}(q,q_{t+1})-D_{\psi^*}(q_{t+1},q_t),
  \end{align*}
  and thus
  \begin{align*}
    \heta_t\del{f(q_t)-f(q)}\le\ip{\heta_t\nabla f(q_t)}{q_t-q_{t+1}}+D_{\psi^*}(q,q_t)-D_{\psi^*}(q,q_{t+1})-D_{\psi^*}(q_{t+1},q_t).
  \end{align*}

  On the other claim, let $\partial$ denote subdifferential.
  We have
  \begin{align*}
    \partial h(q)=\cbr{\nf(q_t)}+\frac{1}{\heta_t}\del{\partial\psi^*(q)-\cbr{p_t}}.
  \end{align*}
  Note that $q'\in\argmin h(q)$ if and only if $0\in\partial h(q')$, which is
  equivalent to
  \begin{align*}
    p_t-\heta_t\nf(q_t)=p_{t+1}\in\partial\psi^*(q).
  \end{align*}
  By \citep[Theorem 23.5]{rockafellar}, $p_{t+1}\in\partial\psi^*(q)$ if and
  only if $q=\nabla\psi(p_{t+1})$; in other words, $q_{t+1}$ is the unique
  minimizer of $h$, and specifically $h(q_{t+1})\le h(q_t)=f(q_t)$.
\end{proof}

\section{Omitted proofs from \Cref{sec:bias}}\label{app_sec:bias}

\begin{proof}[Proof of \Cref{fact:ztq_unique}]
  We first prove $\enVert{Z^\top q}\ge c$ for all $q\in\dom\,\psi^*$ and some
  positive constant $c$.
  Note that \citep[Theorem 23.5]{rockafellar} ensures
  $\textup{range}\,\nabla\psi=\dom\,\partial\psi^*$, while
  \citep[Theorem 23.4]{rockafellar} ensures $\dom\,\partial\psi^*$ contains
  the relative interior of $\dom\,\psi^*$; therefore we only need to consider
  $q=\nabla\psi(\xi)$ for some $\xi\in\R^n$.

  Given $\xi\in\R^n$, recall that
  \begin{align}\label{eq:nabla_psi}
    \nabla\psi(\xi)_i=\frac{\ell'(\xi_i)}{\ell'\del{\psi(\xi)}}=\frac{\ell'\del{\ell^{-1}\del{\ell(\xi_i)}}}{\ell'\del{\ell^{-1}\del{\sum_{i=1}^{n}\ell(\xi_i)}}}.
  \end{align}
  Let $s:=\max_{1\le i\le n}\ell(\xi_i)$, then
  $\sum_{i=1}^{n}\ell(\xi_i)\le ns$.
  Since $\ell'>0$, and $\ell'$ and $\ell^{-1}$ are increasing,
  \begin{align*}
    \enVert{\nabla\psi(\xi)}_1\ge \frac{\ell'\del{\ell^{-1}\del{s}}}{\ell'\del{\ell^{-1}\del{\sum_{i=1}^{n}\ell(\xi_i)}}}\ge \frac{\ell'\del{\ell^{-1}(s)}}{\ell'\del{\ell^{-1}(ns)}}.
  \end{align*}
  By \Cref{cond:loss}, there exists a constant $c>0$ such that
  \begin{align*}
    \enVert{\nabla\psi(\xi)}_1\ge \frac{\ell'\del{\ell^{-1}(s)}}{\ell'\del{\ell^{-1}(ns)}}\ge c.
  \end{align*}
  On the other hand, \Cref{cond:sep} ensures that there exists $u\in\R^d$ and
  $\gamma>0$ such that $\langle u,-z_i\rangle\ge\gamma$ for all $1\le i\le n$.
  Therefore
  \begin{align*}
    \enVert{Z^\top\nabla\psi(\xi)}\ge\ip{-Z^\top\nabla\psi(\xi)}{u}=\sum_{i=1}^{n}\langle u,-z_i\rangle\nabla\psi(\xi)_i\ge\gamma\enVert{\nabla\psi(\xi)}_1\ge\gamma c.
  \end{align*}

  Next we prove the existence of $\barq$.
  Since $\ell>0$, we have $\ell(\xi_i)<\sum_{i=1}^{n}\ell(\xi_i)$, and since
  $\ell'$ and $\ell^{-1}$ are increasing, it follows from \cref{eq:nabla_psi}
  that $0<\nabla\psi(\xi)_i\le1$, and $\enVert{\nabla\psi(\xi)}_1\le n$.
  As a result, $\textup{range}\,\nabla\psi$ is bounded, and thus $\dom\,\psi^*$
  is bounded.
  Now consider the sublevel set $S_0:=\cbr{q\middle|\psi^*(q)\le0}$.
  Since $\psi^*$ is closed, \citep[Theorem 7.1]{rockafellar} implies that $S_0$
  is closed.
  Because $S_0\subset\dom\,\psi^*$, it follows that $S_0$ is compact.
  As a result, there exists a minimizer $\barq$ of $f$ over $S_0$.

  Next we prove $Z^\top\barq$ is unique.
  Suppose $\barq_1$ and $\barq_2$ both minimize $f$ over $S_0$, with
  $\enVert{Z^\top\barq_1}=\enVert{Z^\top\barq_2}>0$, but
  $Z^\top\barq_1\ne Z^\top\barq_2$.
  It follows that $Z^\top\barq_1$ and $Z^\top\barq_2$ point to different
  directions, and
  \begin{align*}
    \ip{Z^\top\barq_1}{Z^\top\barq_2}<\enVert{Z^\top\barq_1}^2.
  \end{align*}
  Since $\psi^*$ is convex, $S_0$ is convex, and therefore
  $(\barq_1+\barq_2)/2\in S_0$.
  However, we then have
  \begin{align*}
    \enVert{\frac{Z^\top\barq_1+Z^\top\barq_2}{2}}^2=\frac{1}{2}\enVert{Z^\top\barq_1}^2+\frac{1}{2}\ip{Z^\top\barq_1}{Z^\top\barq_2}<\enVert{Z^\top\barq_1}^2,
  \end{align*}
  a contradiction.
\end{proof}

\begin{proof}[Proof of \Cref{fact:dual_bound}]
  Define $\sigma(s):=\ell'\del{\ell^{-1}(s)}\ell^{-1}(s)$.
  Note that \Cref{cond:loss} implies
  \begin{align*}
    \lim_{s\to0}\sigma(s)=0\quad\textrm{and}\quad\sigma(s)/s\textrm{ is increasing on }\del{0,\ell(0)}.
  \end{align*}
  It then follows that $\sigma$ is super-additive on $\del{0,\ell(0)}$, meaning
  for any $a,b>0$ such that $a+b<\ell(0)$, it holds that
  $\sigma(a+b)\ge\sigma(a)+\sigma(b)$.
  In particular, if $\psi(\xi)\le0$, or equivalently if
  $\sum_{i=1}^{n}\ell(\xi_i)\le\ell(0)$, then
  \begin{align*}
    \sum_{i=1}^{n}\sigma\del{\ell(\xi_i)}-\sigma\del{\sum_{i=1}^{n}\ell(\xi_i)}\le0.
  \end{align*}

  Now note that
  \begin{align*}
    \psi^*\del{\nabla\psi(\xi)}=\ip{\nabla\psi(\xi)}{\xi}-\psi(\xi)=\sum_{i=1}^{n}\frac{\ell'(\xi_i)\xi_i}{\ell'\del{\psi(\xi)}}-\psi(\xi),
  \end{align*}
  and thus
  \begin{align*}
    \ell'\del{\psi(\xi)}\psi^*\del{\nabla\psi(\xi)}=\sum_{i=1}^{n}\ell'(\xi_i)\xi_i-\ell'\del{\psi(\xi)}\psi(\xi)=\sum_{i=1}^{n}\sigma\del{\ell(\xi_i)}-\sigma\del{\sum_{i=1}^{n}\ell(\xi_i)}\le0.
  \end{align*}
  Since $\ell'>0$, it follows that $\psi^*\del{\nabla\psi(\xi)}\le0$.
\end{proof}

\section{Omitted proofs from \Cref{sec:refine}}\label{app_sec:refine}

Before proving \Cref{fact:margin_t}, we first prove a few lemmas that will be
needed.
First, we need to upper and lower bound $\|w_t\|$.

\begin{lemma}\label[lemma]{fact:wt_norm}
  Let $w_0=0$, then for the exponential loss,
  \begin{align*}
    \gamma\sum_{j<t}^{}\heta_j\le\|w_t\|\le \sum_{j<t}^{}\heta_j.
  \end{align*}
  For the logistic loss, we have $\|q_j\|_1\ge1$ and
  $\|w_t\|\ge \gamma\sum_{j<t}^{}\heta_j$.
\end{lemma}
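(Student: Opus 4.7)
The plan is to expand $w_t$ via the update rule and then handle the upper and lower bounds separately, with the main work being a monotonicity argument for $\|q_t\|_1$ in the logistic case.

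Since $w_0=0$ and $w_{j+1}=w_j-\heta_j Z^\top q_j$, telescoping gives $w_t=-\sum_{j<t}\heta_j Z^\top q_j$, so by the triangle inequality $\|w_t\|\le \sum_{j<t}\heta_j \|Z^\top q_j\|$. Using $\|z_i\|\le 1$ together with the triangle inequality bounds $\|Z^\top q_j\|\le \|q_j\|_1$. For the exponential loss $\ell'=\ell$ so $q_{t,i}=\exp(p_{t,i})/\sum_k \exp(p_{t,k})$ is the softmax, whence $\|q_t\|_1=1$ and the upper bound $\|w_t\|\le \sum_{j<t}\heta_j$ follows immediately.

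For the lower bounds I would exploit the margin direction $\baru$. Since $\|\baru\|=1$, Cauchy--Schwarz gives $\|w_t\|\ge \ip{w_t}{\baru}=\sum_{j<t}\heta_j \ip{-Z^\top q_j}{\baru}=\sum_{j<t}\heta_j \ip{-Z\baru}{q_j}$. Because $q_j\ge 0$ coordinatewise and $\ip{\baru}{-z_i}=y_i\ip{\baru}{x_i}\ge\gamma$ for every $i$, one gets $\ip{-Z\baru}{q_j}\ge \gamma\|q_j\|_1$, and so $\|w_t\|\ge \gamma\sum_{j<t}\heta_j \|q_j\|_1$. For the exponential loss this immediately yields $\|w_t\|\ge \gamma\sum_{j<t}\heta_j$; for the logistic loss the same conclusion follows once we establish $\|q_j\|_1\ge 1$.

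The one nontrivial piece is showing $\|q_t\|_1\ge 1$ for the logistic loss. Writing $s:=\psi(p_t)$, by definition $\ell(s)=\sum_i \ell(p_{t,i})$, and since $\ell>0$ and $\ell$ is strictly increasing, $s\ge p_{t,i}$ for every $i$. I would then show that $z\mapsto \ell'(z)/\ell(z)$ is nonincreasing for the logistic loss. A clean way is to substitute $u=\ln(1+e^z)=\ell(z)$, which reduces $\ell'(z)/\ell(z)$ to $(1-e^{-u})/u$; the standard inequality $(1+u)e^{-u}\le 1$ makes this decreasing in $u$ (hence in $z$). From $s\ge p_{t,i}$ and monotonicity of $\ell'/\ell$ we obtain $\ell'(p_{t,i})\ge \ell'(s)\ell(p_{t,i})/\ell(s)$; summing over $i$ and using $\sum_i \ell(p_{t,i})=\ell(s)$ gives $\sum_i \ell'(p_{t,i})\ge \ell'(s)$, i.e.\ $\|q_t\|_1\ge 1$, as required.

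The only step that needs genuine care is the monotonicity of $\ell'/\ell$ for the logistic loss; everything else is direct bookkeeping with the update rule, the margin inequality, and the fact that $q_t$ is (sub)probabilistic. No additional machinery beyond Assumptions~\ref{cond:sep} and~\ref{cond:loss} is required.
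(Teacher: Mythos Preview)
Your proof is correct and matches the paper's argument: the upper bound via the triangle inequality and $q_j\in\Delta_n$, and the lower bound via $\|w_t\|\ge\ip{w_t}{\baru}$ together with $\ip{-Z\baru}{q_j}\ge\gamma\|q_j\|_1$, are exactly what the paper does. The only cosmetic difference is in the $\|q_j\|_1\ge1$ step for the logistic loss: the paper argues via subadditivity of $\rho(z):=\ell'(\ell^{-1}(z))=1-e^{-z}$, whereas you use monotonicity of $\ell'/\ell$; since $\ell'/\ell$ nonincreasing is equivalent to $\rho(z)/z$ nonincreasing (and both follow from concavity of $1-e^{-z}$ with $\rho(0)=0$), the two arguments are the same in substance.
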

\begin{proof}
  For the exponential loss, since $\|z_i\|\le1$ and $q_j\in\Delta_n$, the
  triangle inequality implies $\enVert{Z^\top q_j}\le1$, and moreover
  \begin{align*}
    \|w_t\|\le \sum_{j<t}^{}\heta_j\enVert{Z^\top q_j}\le \sum_{j<t}^{}\heta_j.
  \end{align*}

  On the other hand, by the definition of the maximum margin $\gamma$ and the
  unit maximum margin solution $\baru$, we have
  $\langle-z_i,\baru\rangle=y_i \langle x_i,\baru\rangle\ge\gamma$ for all $i$.
  Moreover, for the exponential loss, $q_j\in\Delta_n$.
  Therefore
  \begin{align*}
    \langle w_{j+1}-w_j,\baru\rangle=\heta_j\ip{-Z^\top q_j}{\baru}=\heta_j \langle-Z\baru,q_j\rangle\ge\heta_j\gamma.
  \end{align*}
  Since $w_0=0$, the Cauchy-Schwarz inequality implies
  \begin{align*}
    \|w_t\|\ge \langle w_t,\baru\rangle=\sum_{j<t}^{}\langle w_{j+1}-w_j,\baru\rangle\ge\gamma \sum_{j<t}^{}\heta_j.
  \end{align*}

  For the logistic loss, the lower bound proof also works, since
  $\|q_j\|_1\ge1$.
  To see this, note that given $\xi\in\R^n$, we have
  \begin{align*}
    \enVert{\nabla\psi(\xi)}_1=\sum_{i=1}^{n}\frac{\ell'(\xi_i)}{\ell'\del{\psi(\xi)}}=\sum_{i=1}^{n}\frac{\ell'\del{\ell^{-1}\del{\ell(\xi_i)}}}{\ell'\del{\ell^{-1}\del{\sum_{i=1}^{n}\ell(\xi_i)}}}.
  \end{align*}
  Consider the function $\rho(z):=\ell'\del{\ell^{-1}(z)}=1-e^{-z}$.
  It holds that $\rho(0)=0$, and on $[0,\infty)$, we have $\rho$ is subadditive:
  for all $a,b>0$, it holds that $\rho(a+b)\le\rho(a)+\rho(b)$.
  Therefore
  \begin{align*}
    \enVert{\nabla\psi(\xi)}_1=\sum_{i=1}^{n}\frac{\ell'\del{\ell^{-1}\del{\ell(\xi_i)}}}{\ell'\del{\ell^{-1}\del{\sum_{i=1}^{n}\ell(\xi_i)}}}=\frac{\sum_{i=1}^{n}\rho\del{\ell(\xi_i)}}{\rho\del{\sum_{i=1}^{n}\ell(\xi_i)}}\ge1.
  \end{align*}
\end{proof}

With these tools in hand, we turn to the margin rates.

\begin{proof}[Proof of \Cref{fact:margin_t}]
  We first consider the exponential loss.
  Note that \cref{eq:psi_bound,eq:margin_dual} imply
  \begin{align*}
    \frac{-\psi(Zw_t)}{\|w_t\|}\ge \frac{-\psi(p_0)+\sum_{j<t}^{}\heta_j\enVert{Z^\top q_j}\cdot\gamma-\frac{\heta_0}{2}\enVert{Z^\top q_0}^2}{\|w_t\|}=\gamma\cdot \frac{\sum_{j<t}^{}\heta_j\enVert{Z^\top q_j}}{\|w_t\|}-\frac{\psi(p_0)+\frac{\heta_0}{2}\enVert{Z^\top q_0}^2}{\|w_t\|}.
  \end{align*}
  By the triangle inequality,
  $\|w_t\|\le \sum_{j<t}^{}\heta_j\enVert{Z^\top q_j}$.
  Moreover, $\psi(p_0)=\ln(n)$, and $\heta_0\le1$, and
  $\enVert{Z^\top q_0}\le1$ since $\|z_i\|\le1$.
  Therefore
  \begin{align*}
    \frac{-\psi(Zw_t)}{\|w_t\|}\ge\gamma\cdot \frac{\sum_{j<t}^{}\heta_j\enVert{Z^\top q_j}}{\sum_{j<t}^{}\heta_j\enVert{Z^\top q_j}}-\frac{\ln(n)+\frac{1}{2}}{\|w_t\|}\ge\gamma-\frac{\ln(n)+1}{\|w_t\|}.
  \end{align*}
  \Cref{fact:wt_norm} then implies the bound.

  Now consider the logistic loss.
  The analysis is divided into two phases: let $t_0$ denote the first iteration
  where the conditions of \Cref{fact:warm_start} hold, after which we may proceed
  as for the exponential loss.
  To bound $t_0$ and $\|w_{t_0}\|$ and in particular to handle
  the iterations before $t_0$, we apply \Cref{fact:warm_start:2},
  which guarantees $t_0 = O( (\ln n)^2 / \gamma^2 )$ and $\|w_{t_0}\| = O( (\ln n)/\gamma )$.

  Now we can apply \Cref{fact:warm_start}, and start the analysis from $w_{t_0}$
  with smoothness constant $2$.
  All the results in \Cref{sec:dual,sec:bias} still hold, and in particular
  \cref{eq:-psi} ensures for $t>t_0$,
  \begin{align*}
    \frac{-\psi(Zw_t)}{\|w_t\|} & \ge \frac{-\psi(p_{t_0})+\sum_{j=t_0}^{t-1}\heta_j\enVert{Z^\top q_j}\cdot\gamma-\frac{\heta_{t_0}}{2}\enVert{Z^\top q_{t_0}}^2}{\|w_{t_0}\|+\sum_{j=t_0}^{t-1}\heta_j\enVert{Z^\top q_j}} \\
     & =\gamma-\frac{\psi(p_{t_0})+\frac{\heta_{t_0}}{2}\enVert{Z^\top q_{t_0}}^2+\|w_{t_0}\|\gamma}{\|w_{t_0}\|+\sum_{j=t_0}^{t-1}\heta_j\enVert{Z^\top q_j}} \\
     & \ge\gamma-\frac{\psi(p_{t_0})+\frac{\heta_{t_0}}{2}\enVert{Z^\top q_{t_0}}^2+\|w_{t_0}\|\gamma}{\gamma\sum_{j=t_0}^{t-1}\heta_j},
  \end{align*}
  where we use $\enVert{Z^\top q_j}\ge\gamma$, since $\|q_j\|_1\ge1$ as given by
  \Cref{fact:wt_norm}.
  By construction, $\psi(p_{t_0})\le0$, and \Cref{fact:warm_start} implies
  $\|q_{t_0}\|_1\le2$.
  Further letting $\heta_j=1/2$, we get
  \begin{align*}
    \frac{-\psi(Zw_t)}{\|w_t\|}\ge\gamma-\frac{1+2\|w_{t_0}\|\gamma}{\gamma(t-t_0)}.
  \end{align*}
  Plugging in the earlier bounds on $t_0$ and $\|w_{t_0}\|$ from \Cref{fact:warm_start:2}
  gives the final left hand side.  To upper bound the left hand side by the exact margin,
  \Cref{fact:hpsi} suffices.
\end{proof}

We then prove the almost-sure existence of $\barv$.
\begin{proof}[Proof of the first part of \Cref{fact:min_norm_main}]
  Theorem 2.1 of \citep{min_norm} ensures that $S_\perp$ can be decomposed into
  two subsets $B$ and $C$, with the following properties:
  \begin{itemize}
    \item The risk induced by $B$
    \begin{align*}
      \cR_B(w):=\frac{1}{n}\sum_{z\in B}^{}\exp\del{\langle w,z\rangle}
    \end{align*}
    is strongly convex over $\mathrm{span}(B)$.

    \item If $C$ is nonempty, then there exists a vector $\tilde{u}$, such that
    $\langle z,\tilde{u}\rangle=0$ for all $z\in B$, and
    $\langle z,\tilde{u}\rangle\ge\tilde{\gamma}>0$ for all $z\in C$.
  \end{itemize}

  On the other hand, Lemma 12 of \citep{nati_logistic} proves that, almost
  surely there are at most $d$ support vectors, and furthermore the $i$-th
  support vector $z_i$ has a positive dual variable $\theta_i$, such that
  $\sum_{z_i\in S}^{}\theta_iz_i=\gamma\baru$.
  As a result,
  \begin{align*}
    \sum_{z_i\in S}^{}\theta_iz_{i,\perp}=\sum_{z_{i,\perp}\in S_\perp}^{}\theta_iz_{i,\perp}=0.
  \end{align*}
  Note that
  \begin{align*}
    0=\left \langle \sum_{z_{i,\perp}\in S_\perp}^{}\theta_iz_{i,\perp},\tilde{u}\right\rangle=\sum_{z_{i,\perp}\in C}^{}\theta_i \langle z_{i,\perp},\tilde{u}\rangle\ge\tilde{\gamma}\sum_{z_{i,\perp}\in C}^{}\theta_i,
  \end{align*}
  which implies that $C$ is empty, and thus $\cR_\perp$ is strongly convex over
  $\mathrm{span}(S_\perp)$.
  The existence and uniqueness of the minimizer $\barv$ follows from strong
  convexity.
\end{proof}

To prove the second part of \Cref{fact:min_norm_main}, we need the following
iteration guarantee.
Note that it holds for the exponential loss and logistic loss, and will later
be used to provide a better ``warm start'' analysis for the logistic loss (cf.
\Cref{fact:warm_start:2})
\begin{lemma}[\citep{min_norm} Lemma 3.4]\label{fact:exp_smooth}
    Suppose $\ell$ is convex, $\ell'\leq\ell$, and $\ell''\leq\ell$.
    For any $t\ge0$, if $\heta_t=\eta_t\cR(w_t)\le1$, then
    \begin{align*}
        \cR(w_{t+1})\le \cR(w_t)-\eta_t\del{1-\frac{\eta_t\cR(w_t)}{2}}\enVert{\nR(w_t)}^2.
    \end{align*}
\end{lemma}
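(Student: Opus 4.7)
The plan is a self-bounded smoothness analysis along the line segment traced by one gradient descent step. Define $\phi(s) := \cR(w_t - s\eta_t\nR(w_t))$ for $s\in[0,1]$, so $\phi(0)=\cR(w_t)$, $\phi(1)=\cR(w_{t+1})$, and $\phi'(0) = -\eta_t\|\nR(w_t)\|^2$. The first ingredient is the self-bounding inequality
\[
\phi''(s) \le \eta_t^2\|\nR(w_t)\|^2\,\phi(s),
\]
which comes from expanding $\phi''$ via the chain rule and bounding $\nabla^2\cR(w) = \frac{1}{n}\sum_i \ell''(\langle w,z_i\rangle)\,z_iz_i^\top \preceq \cR(w)\,I$, using the hypotheses $\ell''\le\ell$ and $\|z_i\|\le 1$.

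The second, and I expect the main, ingredient is a bootstrap showing $\phi(s)\le\phi(0)$ on all of $[0,1]$. Let $T := \sup\cbr{s\in[0,1] : \phi(r)\le\phi(0)\text{ for all }r\in[0,s]}$, attained by continuity. Suppose toward a contradiction that $T<1$. Then on $[0,T]$ I would invoke Taylor's theorem with integral remainder together with the self-bounding bound to obtain
\[
\phi(T)\le \phi(0) - T\eta_t\|\nR(w_t)\|^2 + \int_0^T (T-r)\,\eta_t^2\|\nR(w_t)\|^2\phi(0)\,dr = \phi(0) - T\eta_t\|\nR(w_t)\|^2\del{1 - \frac{T\heta_t}{2}}.
\]
Since $\heta_t\le 1$ and $T\le 1$, the bracketed factor is at least $1/2$, so $\phi(T)<\phi(0)$ whenever $\|\nR(w_t)\|>0$ (the claim is trivial otherwise). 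Continuity then forces a neighborhood of $T$ to lie strictly below $\phi(0)$, contradicting maximality of $T$.

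Finally, applying the same Taylor expansion at $s=1$ with the now-global pointwise bound $\phi(s)\le\cR(w_t)$ yields
\[
\cR(w_{t+1}) \le \cR(w_t) - \eta_t\|\nR(w_t)\|^2 + \eta_t^2\cR(w_t)\|\nR(w_t)\|^2\int_0^1(1-r)\,dr = \cR(w_t) - \eta_t\del{1-\frac{\heta_t}{2}}\|\nR(w_t)\|^2,
\]
which is exactly the claimed descent inequality. The only delicate step is the bootstrap: the hypothesis $\heta_t\le 1$ enters precisely to keep the Taylor remainder strictly dominated by the first-order term at the supposed crossing time $T$; without it, the destabilizing inequality $\phi''\le c\phi$ alone does not preclude $\phi$ from rising above $\phi(0)$ inside $[0,1]$.
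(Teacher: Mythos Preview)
Your proof is correct. The paper does not provide its own proof of this lemma; it is simply cited as Lemma~3.4 of \citep{min_norm}, so there is no in-paper argument to compare against. Your self-bounding smoothness argument (Hessian bound $\nabla^2\cR(w)\preceq \cR(w)I$ from $\ell''\le\ell$, convexity of $\ell$, and $\|z_i\|\le 1$, followed by a continuity bootstrap to keep $\phi(s)\le\phi(0)$ on $[0,1]$, and a final Taylor expansion) is the standard way to establish this kind of descent inequality, and every step checks out. One small remark: you implicitly use $\ell''\ge 0$ (from convexity) when passing from $\ell''(\langle w,z_i\rangle)\langle z_i,v\rangle^2$ to $\ell''(\langle w,z_i\rangle)$, and $\ell\ge 0$ (from $0\le\ell''\le\ell$) when bounding the remainder by $\phi(0)$; both are covered by the stated hypotheses, but it would not hurt to make this explicit.
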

Note that under the condition of \Cref{fact:min_norm_main} that
$\eta_t\le\min\{1,1/\cR(w_0)\}$, \Cref{fact:exp_smooth} implies that $\cR(w_t)$
is nonincreasing.
Moreover, we have the following bound on $\sum_{j<t}^{}\heta_j$ when $\eta_j$ is
a constant.
\begin{lemma}\label{fact:sum_heta_lb}
    Let $w_0=0$ and $\eta_t=\eta\le1$ for all $t$, then
    \begin{align*}
        \sum_{j<t}^{}\heta_j\ge\ln\del{1+\frac{\eta\gamma^2}{2}t}.
    \end{align*}
\end{lemma}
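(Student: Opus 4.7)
The plan is to sandwich $\sum_{j<t}\heta_j$ between a lower bound of the form $-\ln\cR(w_t)$, obtained from convexity of $\psi$, and an upper bound $\cR(w_t)\le 1/\del{1+\eta\gamma^2 t/2}$ obtained from \Cref{fact:exp_smooth} combined with the margin lower bound on gradients.

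For the first inequality, recall that for the exponential loss $\psi(p)=\ln\sum_i e^{p_i}$ with $\nabla\psi(p_j)=q_j$, and that \cref{eq:md} reads $p_{j+1}=p_j-\heta_j ZZ^\top q_j$. Applying convexity of $\psi$ to this step yields
\[
  \psi(p_{j+1})\ge \psi(p_j)+\ip{q_j}{p_{j+1}-p_j}=\psi(p_j)-\heta_j\enVert{Z^\top q_j}^2\ge \psi(p_j)-\heta_j,
\]
using $\enVert{Z^\top q_j}\le \enVert{q_j}_1\max_i\enVert{z_i}\le 1$. Telescoping from $w_0=0$ (so $\psi(p_0)=\ln n$) and using $\psi(p_t)=\ln\del{n\cR(w_t)}$ gives $\sum_{j<t}\heta_j\ge \psi(p_0)-\psi(p_t)=-\ln\cR(w_t)$.

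For the second inequality, I would combine \Cref{fact:exp_smooth} with the margin bound $\enVert{\nR(w_j)}\ge \gamma\cR(w_j)$, which is immediate from $\nR(w_j)=\cR(w_j)Z^\top q_j$ together with $q_j\in\Delta_n$, exactly as in the lower-bound half of the proof of \Cref{fact:wt_norm}. A short induction using \Cref{fact:exp_smooth} shows $\cR(w_j)\le\cR(w_0)=1$ and thus $\heta_j\le\eta\le 1$, so $(1-\heta_j/2)\ge 1/2$, yielding $\cR(w_{j+1})\le\cR(w_j)\del{1-\frac{\eta\gamma^2}{2}\cR(w_j)}$. The standard reciprocal-telescoping argument ($1/\cR(w_{j+1})-1/\cR(w_j)\ge \eta\gamma^2/2$, using that $\cR$ is nonincreasing) starting from $\cR(w_0)=1$ then gives $\cR(w_t)\le 1/\del{1+\eta\gamma^2 t/2}$, so $-\ln\cR(w_t)\ge\ln\del{1+\eta\gamma^2 t/2}$. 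Chaining with the first inequality proves the claim.

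The main conceptual step is the first: using convexity of $\psi$, the sum $\sum_{j<t}\heta_j$ is read off directly as a telescoping lower bound on $\psi(p_0)-\psi(p_t)$. A naive attempt to telescope $\cR(w_{j+1})\ge \cR(w_j)(1-\heta_j)$ in log form instead produces awkward $\ln(1-\heta_j)$ terms that do not sum cleanly to $\sum\heta_j$; the $\psi$-side perspective bypasses this obstacle.
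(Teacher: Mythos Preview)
Your proposal is correct and follows essentially the same approach as the paper: both combine (i) convexity of $\psi$ (equivalently, of $\ln\cR$, since $\ln\cR(w)=\psi(Zw)-\ln n$) to telescope $\sum_{j<t}\heta_j\ge -\ln\cR(w_t)$, and (ii) \Cref{fact:exp_smooth} together with $\|\nR(w_j)\|\ge\gamma\cR(w_j)$ and a reciprocal-telescoping argument to obtain $\cR(w_t)\le 1/\del{1+\eta\gamma^2 t/2}$. The only cosmetic differences are that the paper phrases step (i) via $\ln\cR$ rather than $\psi$, and in step (ii) divides through by $\cR(w_j)\cR(w_{j+1})$ directly rather than first isolating the factor $\del{1-\frac{\eta\gamma^2}{2}\cR(w_j)}$.
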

\begin{proof}
  We first need a risk upper bound. Recall that \Cref{fact:exp_smooth} ensures that for any $j<t$, if $\heta_j=\eta_j\cR(w_j)\le1$, then
  \begin{align}\label{eq:risk_dif_gd}
      \cR(w_{j+1})\le \cR(w_j)-\eta_j\del{1-\frac{\eta_j\cR(w_j)}{2}}\enVert{\nR(w_j)}^2.
  \end{align}
  As a result, if we let $\eta_j=\eta\le1/\cR(w_0)=1$, then $\cR(w_j)$ never increases, and the requirement $\heta_j=\eta_j\cR(w_j)\le1$ of \cref{eq:risk_dif_gd} always holds.

  Dividing both sides of \cref{eq:risk_dif_gd} by $\cR(w_j)\cR(w_{j+1})$ and rearranging terms gives
  \begin{align*}
      \frac{1}{\cR(w_{j+1})}\ge \frac{1}{\cR(w_j)}+\eta\del{1-\frac{\eta\cR(w_j)}{2}}\frac{\enVert{\nR(w_j)}^2}{\cR(w_j)\cR(w_{j+1})}.
  \end{align*}
  Notice that
  \begin{align*}
      \enVert{\nR(w_j)}\ge\envert{\ip{\nR(w_j)}{\baru}}=\envert{\frac{1}{n}\sum_{i=1}^{n}\exp\del{-\langle w_j,z_i\rangle}\langle z_i,\baru\rangle}\ge\gamma\cR(w_j),
  \end{align*}
  and thus
  \begin{align}\label{eq:inv_risk_dif_gd}
      \frac{1}{\cR(w_{j+1})}\ge \frac{1}{\cR(w_j)}+\eta\del{1-\frac{\eta\cR(w_j)}{2}}\gamma^2 \frac{\cR(w_j)}{\cR(w_{j+1})}\ge \frac{1}{\cR(w_j)}+\eta\del{1-\frac{\eta\cR(w_j)}{2}}\gamma^2.
  \end{align}
  Since $\eta\cR(w_j)\le1$, \cref{eq:inv_risk_dif_gd} implies
  \begin{align*}
      \frac{1}{\cR(w_{j+1})}\ge \frac{1}{\cR(w_j)}+\eta\del{1-\frac{\eta\cR(w_j)}{2}}\gamma^2\ge \frac{1}{\cR(w_j)}+\frac{\eta}{2}\gamma^2,
  \end{align*}
  and thus
  \begin{align}\label{eq:risk_rate}
      \cR(w_t)\le1/\del{\frac{1}{\cR(w_0)}+\frac{\eta\gamma^2}{2}t}\le1/\del{1+\frac{\eta\gamma^2}{2}t}.
  \end{align}

  Now we prove the lower bound. Notice that $\ln\cR$ is also convex, since it is the composition of ln-sum-exp and a linear mapping. Therefore the convexity of $\ln\cR$ gives
  \begin{align*}
      \ln\cR(w_{j+1})-\ln\cR(w_j)\ge \langle\nabla\ln\cR(w_j),w_{j+1}-w_j\rangle=-\heta_j\enVert{\nabla\ln\cR(w_j)}^2=-\heta_j\enVert{Z^\top q_j}^2.
  \end{align*}
  The triangle inequality ensures $\enVert{Z^\top q_j}\le \sum_{i=1}^{n}q_{j,i}\|z_i\|\le1$, which implies $\ln\cR(w_{j+1})-\ln\cR(w_j)\ge-\heta_j$, and thus
  \begin{align}\label{eq:sum_heta_tmp}
      \sum_{j<t}^{}\heta_j\ge\ln\cR(w_0)-\ln\cR(w_t).
  \end{align}

  Combining \cref{eq:risk_rate,eq:sum_heta_tmp} gives
  \begin{align*}
      \sum_{j<t}^{}\heta_j\ge\ln\cR(w_0)+\ln\del{1+\frac{\eta\gamma^2}{2}t}=\ln\del{1+\frac{\eta\gamma^2}{2}t}.
  \end{align*}
\end{proof}

Next we prove the refined rate for the implicit bias.
\begin{proof}[Proof of the second part of \Cref{fact:min_norm_main}]
  For technical reasons, we consider a range of steps during which
  $\enVert{v_j-\barv}\ge1$.
  If $\enVert{v_t-\barv}\le1$, then the proof is done.
  Otherwise let $t_{-1}$ denote the last step before $t$ such that
  $\enVert{v_{t_{-1}}-\barv}\le1$; if such a step does not exist, let
  $t_{-1}=-1$.
  Furthermore, let $t_0=t_{-1}+1$.
  Since it always holds that
  \begin{align*}
    \enVert{\eta_j\nR(w_j)} & =\eta_j\enVert{\frac{1}{n}\sum_{i=1}^{n}\exp\del{\langle w_j,z_i\rangle}z_i} \\
     & =\eta_j\cR(w_j)\enVert{\sum_{i=1}^{n}\frac{\exp\del{\langle w_j,z_i\rangle}}{\sum_{i'=1}^{n}\exp\del{\langle w_j,z_{i'}\rangle}}z_i} \\
     & \le\eta_j\cR(w_j)
= \heta_j
     \le1,
  \end{align*}
  we have $\enVert{v_{t_0}-\barv}\le\max\{\enVert{v_0-\barv},2\}$.

  Note that
  \begin{align}\label{eq:mn_step1}
    \enVert{v_{j+1}-\barv}^2 & =\enVert{v_j-\barv-\eta_j\Pi_\perp\sbr{\nR(w_j)}}^2 \nonumber \\
     & =\enVert{v_j-\barv}^2-2\eta_j\left\langle\Pip\nR(w_j),v_j-\barv\right\rangle+\eta_j^2\enVert{\Pip\nR(w_j)}^2\nonumber\\
     & =\enVert{v_j-\barv}^2-2\eta_j\left\langle\nR(w_j),v_j-\barv\right\rangle+\eta_j^2\enVert{\Pip\nR(w_j)}^2,
  \end{align}
  where the middle $\Pip$ could be dropped since $\Pip(v_j - \barv) = v_j -\barv$
  and $\Pip=\Pip^\T$ can be moved across the inner product.
  Continuing, this inner product term in \cref{eq:mn_step1} can be decomposed
  into two parts, for support vectors and non-support vectors respectively:
  \begin{align}\label{eq:mn_ip}
    -\left\langle\nR(w_j),v_j-\barv\right\rangle & =\left\langle\frac{1}{n}\sum_{z_i\in S}^{}\exp\del{\langle w_j,z_i\rangle}z_i,\barv-v_j\right\rangle \nonumber \\
     & \quad+\left\langle\frac{1}{n}\sum_{z_i\not\in S}^{}\exp\del{\langle w_j,z_i\rangle}z_i,\barv-v_j\right\rangle.
  \end{align}
  The support vector part in \cref{eq:mn_ip} is non-positive, due to convexity
  of $\cR_{\perp}$:
  \begin{align}\label{eq:mn_s}
    \left\langle\frac{1}{n}\sum_{z_i\in S}^{}\exp\del{\langle w_j,z_i\rangle}z_i,\barv-v_j\right\rangle & =\left\langle\frac{1}{n}\sum_{z_i\in S}^{}\exp\del{\langle w_j,z_i\rangle}z_{i,\perp},\barv-v_j\right\rangle \nonumber \\
     & =\exp\del{-\gamma \langle w_j,\baru\rangle}\left\langle\frac{1}{n}\sum_{z_i\in S}^{}\exp\del{\langle v_j,z_{i,\perp}\rangle}z_{i,\perp},\barv-v_j\right\rangle \nonumber \\
     & =\exp\del{-\gamma \langle w_j,\baru\rangle}\left\langle\nR_{\perp}(v_j),\barv-v_j\right\rangle \nonumber \\
     & \le\exp\del{-\gamma \langle w_j,\baru\rangle}\del{\cR_{\perp}(\barv)-\cR_{\perp}(v_j)}\le0.
  \end{align}
  The part for non-support vectors in \cref{eq:mn_ip} is bounded using the
  Cauchy-Schwarz inequality:
  \begin{align}\label{eq:mn_ns}
    \left\langle\frac{1}{n}\sum_{z_i\not\in S}^{}\exp\del{\langle w_j,z_i\rangle}z_i,\barv-v_j\right\rangle & \le \frac{1}{n}\sum_{z_i\not\in S}^{}\exp\del{\langle w_j,z_i\rangle}\|z_i\|\|v_j-\barv\| \nonumber \\
     & \le\cR_{>\gamma}(w_j)\|v_j-\barv\|.
  \end{align}
  For $t_0\le j<t$, combining \cref{eq:mn_step1,eq:mn_ip,eq:mn_s,eq:mn_ns}, and
  invoking $\|v_j-\barv\|\ge1$,
  \begin{align*}
    \enVert{v_{j+1}-\barv}^2 & \le\enVert{v_j-\barv}^2+2\eta_j\cR_{>\gamma}(w_j)\|v_j-\barv\|+\eta_j^2\enVert{\Pip\nR(w_j)}^2 \\
     & \le\enVert{v_j-\barv}^2+2\eta_j\cR_{>\gamma}(w_j)\|v_j-\barv\|+\eta_j^2\enVert{\Pip\nR(w_j)}^2\enVert{v_j-\barv} \\
     & \le\del{\enVert{v_j-\barv}+\eta_j\cR_{>\gamma}(w_j)+\frac{\eta_j^2}{2}\enVert{\Pip\nR(w_j)}^2}^2,
  \end{align*}
  and thus
  \begin{align}\label{eq:mn_step2}
    \enVert{v_{j+1}-\barv}\le\enVert{v_j-\barv}+\eta_j\cR_{>\gamma}(w_j)+\frac{\eta_j^2}{2}\enVert{\Pip\nR(w_j)}^2.
  \end{align}

  The middle term with $\cR_{>\gamma}$ is bounded using \Cref{fact:dual_main}.
  First we have
  \begin{align}\label{eq:f_lb}
    \frac{1}{2}\enVert{Z^\top q_j}^2\ge \frac{1}{2}\left \langle-Z^\top q_j,\baru\right\rangle^2 & =\frac{1}{2}\langle-Z\baru,q_j\rangle^2 \nonumber \\
     & \ge \frac{1}{2}\del{\gamma+\gamma'\frac{\cR_{>\gamma}(w_j)}{\cR(w_j)}}^2 \nonumber \\
     & \ge \frac{1}{2}\gamma^2+\gamma\gamma'\frac{\cR_{>\gamma}(w_j)}{\cR(w_j)}.
  \end{align}
  As a result, let $\barq$ denote a minimizer of $f(q)=\enVert{Z^\top q}^2/2$,
  then \Cref{fact:dual_main} and \cref{eq:f_lb} ensure
  \begin{align*}
    D_{\mathrm{KL}}(\barq,q_j)-D_{\mathrm{KL}}(\barq,q_{j+1}) & \ge\heta_j\del{f(q_{j+1})-\frac{1}{2}\gamma^2} \\
     & \ge\eta_j\cR(w_{j+1})\gamma\gamma'\frac{\cR_{>\gamma}(w_{j+1})}{\cR(w_{j+1})} \\
     & =\eta_j\gamma\gamma'\cR_{>\gamma}(w_{j+1}).
  \end{align*}
  Later we will need to evaluate $\sum_j \eta_j \cR_{>\gamma}(w_j)$, which
  by applying the above and telescoping gives
  \begin{align}\label{eq:mn_sns}
    \sum_{j=0}^{\infty}\eta_j\cR_{>\gamma}(w_j)=\eta_j\cR_{>\gamma}(w_0)+\sum_{j=1}^{\infty}\eta_j\cR_{>\gamma}(w_j)\le1+\frac{D_{\mathrm{KL}}(\barq,q_0)}{\gamma\gamma'}\le1+ \frac{\ln(n)}{\gamma\gamma'}.
  \end{align}

  The squared gradient term in \cref{eq:mn_step2} can also
  be bounded using \Cref{fact:dual_main}.  To start, by the definition of $\Pip$
  and since $\baru = - Z^\T \barq / \|Z^\T\barq\|$
  and using the first order condition $\|Z^\T \barq\|^2 \leq \ip{Z^\T\barq}{Z^\T q_t}$
  (which appeared earlier as \cref{eq:ztq_ip}),
  and since $\heta_j\leq 1$ are nonincreasing,
  \begin{align*}
    \eta_j^2 \|\Pip\ncR(w_j)\|^2
    &=
\heta_j^2 \|\nabla\psi(w_j) - \nabla\psi(w_j)^\T \baru \baru\|^2
\\
    &=
    \heta_j^2\del{ \|Z^\T q_j\|^2 - \frac{\ip{Z^\T q_j}{Z^\T\barq}^2}{\|Z^\T \barq\|^2} }
    \\
    &\leq
    \heta_j\heta_{j-1}\del{ \|Z^\T q_j\|^2 - \|Z^\T \barq\|^2 }
\\
&\leq
    \heta_{j}\del{ D_{\psi^*}(\barq,q_{j-1}) - D_{\psi^*}(\barq,q_{j})}
    \leq
    D_{\psi^*}(\barq,q_{j-1}) - D_{\psi^*}(\barq,q_{j}).
  \end{align*}
  As mentioned before, we will need to sum across iterations, which telescopes and gives
  \begin{equation}
    \sum_{j=0}^\infty \eta_j^2 \|\Pip\ncR(w_j)\|^2
    \leq
    \eta_0^2 \|\Pip\ncR(w_0)\|^2
    +
    \sum_{j=1}^\infty \eta_j^2 \|\Pip\ncR(w_j)\|^2
    \leq
    1 + D_{\psi^*}(\barq,q_0)
    \leq
    1+\ln(n).
    \label{eq:mn_sos}
  \end{equation}

  Combining these pieces, applying \cref{eq:mn_step2} recursively
  and then controlling the summations with \cref{eq:mn_sns,eq:mn_sos}
  and using $\max\{\gamma,\gamma'\}\leq 1$ gives
  \begin{align*}
    \|v_t-\barv\|
    \le
    \enVert{v_{t_0}-\barv}
    + \sum_{j=0}^\infty \eta_j \cR_{>\gamma}(w_j)
    + \sum_{j=0}^\infty \frac{\eta_j^2}{2} \|\Pip \cR(w_j)\|^2
    \le
    \enVert{v_{t_0}-\barv}+\frac{2\ln(n)}{\gamma\gamma'}+2,
  \end{align*}
  which finishes the proof.
\end{proof}

Below is the proof of the lower bound on $\|v_t-\barv\|$.
\begin{proof}[Proof of \Cref{fact:min_norm_lb}]
    By construction, the only support vector is $z_1=(0.1,0)$, and
    $z_{1,\perp}=(0,0)$. Therefore $\mathrm{span}(S_\perp)=\mathrm{span}\del{\cbr{(0,0)}}=\cbr{(0,0)}$, $\gamma=0.1$, and
    $\barv=(0,0)$. Moreover,
    \begin{align*}
      \cR_{\gamma}(w)=\frac{1}{n}\exp\del{0.1w_1},\quad \mathrm{and}\quad\cR_{>\gamma}(w)=\frac{n-1}{n}\exp\del{0.2(w_1+w_2)},
    \end{align*}
    and for any $t\ge0$,
    \begin{align}\label{eq:r2_grad}
      \nR(w_t)_1=0.1\cR_{\gamma}(w_t)+0.2\cR_{>\gamma}(w_t),\quad \mathrm{and}\quad\nR(w_t)_2=0.2\cR_{>\gamma}(w_t).
    \end{align}

    Recall that $w_0=0$, and thus \cref{eq:r2_grad} implies that
    $w_{t,1},w_{t,2}\le0$, and $\cR_{\gamma}(w_t)\le1/n$ for all $t$.
    As a result, as long as $\cR(w_t)\ge2/n$, it holds that
    $\cR_{>\gamma}(w_t)\ge\cR_{\gamma}(w_t)$ and
    $\envert{\nR(w_t)_2}\ge\envert{\nR(w_t)_1}/2$.

    Let $\tau$ denote the first step when the risk is less than $2/n$:
    \begin{align*}
      \tau=\min\cbr{t:\cR(w_t)<2/n}.
    \end{align*}
    Since $\envert{\nR(w_t)_2}\ge\envert{\nR(w_t)_1}/2$ for all $t<\tau$, we
    have
    \begin{align*}
      |w_{\tau,2}|\ge|w_{\tau,1}|/2.
    \end{align*}
    On the other hand, since $\|z_i\|\le1/3$, it holds that
    $\cR(w_{\tau})\ge\exp\del{-\|w_\tau\|/3}$, which implies that
    \begin{align*}
      \|w_\tau\|\ge3\ln(n/2).
    \end{align*}
    As a result,
    \begin{align*}
      |w_{\tau,2}|\ge\ln(n/2).
    \end{align*}
\end{proof}

Lastly, we put together the preceding pieces to get the main simplified implicit bias bound.

\begin{proof}[Proof of \Cref{fact:tight}]
  For the upper bound, let $Z$ be given as stated, whereby \Cref{fact:min_norm_main}
  holds, and thus almost surely
  \[
    \enVert{v_t-\barv} = \cO(\ln n),
    \qquad\textrm{whereby }
    \|v_t\|
    = \|\barv\| + \|v_t - \barv\|
    = \cO(\ln n).
  \]
  Next,
  \begin{align*}
    \enVert{
      w_t - \baru \|w_t\|
    }^2
    =
    \enVert[1]{
      \Pi_\perp\del{
        w_t - \baru \|w_t\|
      }
    }^2
    +
    \enVert[1]{
      \del{
        w_t - \baru \|w_t\|
      }^\T \baru \baru
    }^2
    =
    \enVert{
      v_t
    }^2
    +
    \del[1]{ w_t^\T \baru - \|w_t\| }^2.
  \end{align*}
  Since $\|v_t\|=\cO(\ln n)$ whereas $\|w_t\|\to \infty$ via $\sum_j \eta_j = \infty$
  and \Cref{fact:wt_norm},
  then for all sufficiently large $t$, $w_t^\T \baru > \|v_t\|$,
  and thus $\|w_t\| \leq w_t^\T \baru + \|v_t\|$, and
  \[
    \enVert{
      v_t
    }^2
    +
    \del[1]{ w_t^\T \baru - \|w_t\| }^2
    \leq
    2\enVert{v_t}^2.
  \]
  As such, combining these pieces with the inequality $\|w_t\|\geq \gamma \sum_{j<t}\heta_j$
  from \Cref{fact:wt_norm},
  \[
    \enVert{
      \frac {w_t}{\|w_t\|} - \baru
    }
    \leq
    \frac {\sqrt 2 \|v_t\|}{\|w_t\|}
    = \cO\del{\frac {\ln n}{\sum_{j<t}\heta_j}}.
  \]
  For $\heta_j=1$, we have $\sum_{j<t}\heta_j=t$.
  For $\eta_j=1$, we have $\sum_{j<t}\heta_j=\Omega(\ln(t))$ from
  \Cref{fact:sum_heta_lb}.

  For the lower bound, let $Z$ be given by the data in \Cref{fact:min_norm_lb},
  and by the guarantee there,
  \begin{align*}
    \enVert{
      \frac {w_t}{\|w_t\|} - \baru
    }
    &=
    \frac{
      \enVert{
        w_t - \baru \|w_t\|
      }
    }
    {\|w_t\|}
    \geq
    \frac{
      \enVert[1]{
        \Pi_\perp\del{
          w_t - \baru \|w_t\|
        }
      }
    }
    {\|w_t\|}
    =
    \frac{
      \enVert{
        v_t
      }
    }
    {\|w_t\|}
    \geq
    \frac{
      \ln n - \ln 2
    }
    {\|w_t\|}.
  \end{align*}
  The proof is now complete after upper bounding $\|w_t\|$.
  For $\heta_j=1$, by \Cref{fact:wt_norm}, we can just take $\|w_t\| \leq t$.
  For $\eta_j= 1$, \citet[Theorem 3]{nati_logistic} show that
  $\|w_t\|=\Theta(\ln(t))$.
\end{proof}

\section{Omitted proofs from \Cref{sec:loss}}\label{app_sec:loss}

We first prove \Cref{fact:psi_conv,fact:psi_smooth}, which can help us check the
convexity and smoothness of $\psi$ in general.

\begin{proof}[Proof of \Cref{fact:psi_conv}]
  Note that $\nabla\psi(\xi)_i=\ell'(\xi_i)/\ell'\del{\psi(\xi)}$, and
  \begin{align}\label{eq:psi_hessian}
    \nabla^2\psi(\xi)=\diag\del{\frac{\ell''(\xi_1)}{\ell'\del{\psi(\xi)}},\ldots,\frac{\ell''(\xi_n)}{\ell'\del{\psi(\xi)}}}-\frac{\ell''\del{\psi(\xi)}}{\ell'\del{\psi(\xi)}}\nabla\psi(\xi)\nabla\psi(\xi)^\top.
  \end{align}
  We need to show that for any $v\in\R^n$,
  \begin{align}\label{eq:psi_conv_tmp1}
    \sum_{i=1}^{n}\frac{\ell''(\xi_i)}{\ell'\del{\psi(\xi)}}v_i^2\ge \frac{\ell''\del{\psi(\xi)}}{\ell'\del{\psi(\xi)}}\del{\sum_{i=1}^{n}\frac{\ell'(\xi_i)}{\ell'\del{\psi(\xi)}}v_i}^2.
  \end{align}
  Note that by the Cauchy-Schwarz inequality,
  \begin{align*}
    \del{\sum_{i=1}^{n}\frac{\ell'(\xi_i)}{\ell'\del{\psi(\xi)}}v_i}^2\le\del{\sum_{i=1}^{n}\frac{\ell''(\xi_i)}{\ell'\del{\psi(\xi)}}v_i^2}\del{\sum_{i=1}^{n}\frac{\ell'(\xi_i)^2}{\ell''(\xi_i)\ell'\del{\psi(\xi)}}},
  \end{align*}
  and therefore to show \cref{eq:psi_conv_tmp1}, we only need to show that
  \begin{align*}
    \frac{\ell'\del{\psi(\xi)}^2}{\ell''\del{\psi(\xi)}}\ge \sum_{i=1}^{n}\frac{\ell'(\xi_i)^2}{\ell''(\xi_i)},
  \end{align*}
  or
  \begin{align}\label{eq:psi_conv_tmp2}
    \frac{\ell'\del{\psi(\xi)}^2}{\ell''\del{\psi(\xi)}}=\frac{\ell'\del{\ell^{-1}\del{\sum_{i=1}^{n}\ell(\xi_i)}}^2}{\ell''\del{\ell^{-1}\del{\sum_{i=1}^{n}\ell(\xi_i)}}}\ge \sum_{i=1}^{n}\frac{\ell'\del{\ell^{-1}\del{\ell(\xi_i)}}^2}{\ell''\del{\ell^{-1}\del{\ell(\xi_i)}}}.
  \end{align}
  Consider the function $\phi:(0,\infty)\to\R$ given by
  \begin{align*}
    \phi(s):=\frac{\ell'\del{\ell^{-1}(s)}^2}{\ell''\del{\ell^{-1}(s)}}.
  \end{align*}
  Note that $\phi(s)/s=\ell'(z)^2/\del{\ell(z)\ell''(z)}$ for $z=\ell^{-1}(s)$,
  and since $\ell'^2/\ell\ell''$ is increasing, it follows that $\phi(s)/s$ is
  increasing on $(0,\infty)$, and $\lim_{s\to0}\phi(s)=0$.
  In other words, $\phi$ is super-additive, which then implies
  \cref{eq:psi_conv_tmp2}.
\end{proof}

\begin{proof}[Proof of \Cref{fact:psi_smooth}]
Similarly to the proof of \citep[Lemma 14]{shalev2007online}, to check that
  $\psi$ is $\beta$-smooth with respect to the $\ell_\infty$ norm, we only need
  to ensure for any $\xi,v\in\R^n$, it holds that
  $v^\top\nabla^2\psi(\xi)v\le\beta\|v\|_\infty^2$.
  By \cref{eq:psi_hessian}, it is enough if
  \begin{align}\label{eq:smooth_check}
    \sum_{i=1}^{n}\frac{\ell''(\xi_i)}{\ell'\del{\psi(\xi)}}v_i^2\le\beta\max_{1\le i\le n}v_i^2.
  \end{align}

  For $\lexp$,
  \begin{align*}
    \frac{\ell''(\xi_i)}{\ell'\del{\psi(\xi)}}=\frac{e^{\xi_i}}{\sum_{i=1}^{n}e^{\xi_i}},\quad\textrm{and thus}\quad \sum_{i=1}^{n}\frac{\ell''(\xi_i)}{\ell'\del{\psi(\xi)}}v_i^2\le\max_{1\le i\le n}v_i^2.
  \end{align*}

  In general, if $\ell''(z)\le c\ell'(z)$, the since
  $\ell'(\xi_i)\le\ell'\del{\psi(\xi)}$, it holds that
  \begin{align*}
    \sum_{i=1}^{n}\frac{\ell''(\xi_i)}{\ell'\del{\psi(\xi)}}\le \sum_{i=1}^{n}\frac{c\ell'(\xi_i)}{\ell'\del{\psi(\xi)}}\le cn,
  \end{align*}
  and thus we can let $\beta=cn$.
\end{proof}

Next we prove \Cref{fact:loss_examples}.
\begin{proof}[Proof of \Cref{fact:loss_examples}]
  The first two conditions of \Cref{cond:loss} are easy to verify in most cases;
  we only check that $\varphi(z):=z\ell'(z)/\ell(z)$ is increasing on
  $(-\infty,0)$ for the logistic loss.
  We have
  \begin{align*}
    \varphi(z)=\frac{z}{(1+e^{-z})\ln(1+e^z)},\textrm{ and }\varphi'(z)=\frac{(1+e^{-z})\ln(1+e^z)+ze^{-z}\ln(1+e^z)-z}{(1+e^{-z})^2\ln(1+e^z)^2}.
  \end{align*}
  Since $(1+e^{-z})\ln(1+e^z)>0$, and
  \begin{align*}
    ze^{-z}\ln(1+e^z)-z=ze^{-z}\del{\ln(1+e^z)-e^z}>0,
  \end{align*}
  since $z<0$ and $\ln(1+e^z)<e^z$, it follows that $\varphi'(z)>0$.

  On the third requirement of \Cref{cond:loss}, for $\lexp$ we have
  $\ell'\del{\ell^{-1}(s)}=s$, and thus the condition holds with $c=1/b$.
  For $\llog$ we have $\ell'\del{\ell^{-1}(s)}=1-e^{-s}$, and the condition
  holds with $c=1/b$.
  For $\ell_{\poly,k}$, if $a\ge\ell(0)/b$, then
  \begin{align*}
    \frac{\ell'\del{\ell^{-1}(a)}}{\ell'\del{\ell^{-1}(ab)}}\ge \frac{\ell'\del{\ell^{-1}\del{\ell(0)/b}}}{2k},
  \end{align*}
  while if $a\le\ell(0)/b$, then $\ell^{-1}(a)\le\ell^{-1}(ab)\le0$.
  Note that on $(-\infty,0)$,
  \begin{align*}
    \ell'\del{\ell^{-1}(s)}=ks^{(k+1)/k},\textrm{ and thus }\frac{\ell'\del{\ell^{-1}(a)}}{\ell'\del{\ell^{-1}(ab)}}=b^{-(k+1)/k}.
  \end{align*}

  We use \Cref{fact:psi_conv} to verify the convexity of $\psi$.
  For $\lexp$, we have $\ell'^2/(\ell\ell'')=1$.
  For $\llog$, we have $\ell'^2/(\ell\ell'')=e^z/\ln(1+e^z)$, which is increasing.
  For $\ell_{\poly,k}$, on $(-\infty,0]$, we have
  $\ell'^2/(\ell\ell'')=k/(k+1)$.
  On $(0,\infty)$,
  \begin{align*}
    \frac{\ell'^2}{\ell\ell''}=\frac{\del{2k-\frac{k}{(1+z)^{k+1}}}^2}{\del{2kz+\frac{1}{(1+z)^k}}\frac{k(k+1)}{(1+z)^{k+2}}}=\frac{k^2\del{2(1+z)^{k+1}-1}^2}{\del{2kz(1+z)^k+1}k(k+1)},
  \end{align*}
  and thus we only need to show
  \begin{align*}
    \alpha(z):=\frac{\del{2(1+z)^{k+1}-1}^2}{2kz(1+z)^k+1}
  \end{align*}
  is increasing on $(0,\infty)$.
  Note that
  \begin{align*}
    \del{2kz(1+z)^k+1}^2\alpha'(z) & =2\del{2(1+z)^{k+1}-1}\cdot2(k+1)(1+z)^k\cdot\del{2kz(1+z)^k+1} \\
     & \ -\del{2(1+z)^{k+1}-1}^2\del{2k(1+z)^k+2kz\cdot k(1+z)^{k-1}},
  \end{align*}
  and therefore we only need to show that on $(0,\infty)$,
  \begin{align*}
    \kappa(z):=2(k+1)(1+z)\cdot\del{2kz(1+z)^k+1}-
    \del{2(1+z)^{k+1}-1}\del{k(1+z)+k^2z}\ge0.
  \end{align*}
  Rearranging terms gives
  \begin{align*}
    \kappa(z)=2k(1+z)^{k+1}(kz+z-1)+(k^2+3k+2)z+(3k+2).
  \end{align*}
  Note that $\kappa(0)=k+2>0$, and when $z\ge0$,
  \begin{align*}
    \kappa'(z) & =2k(k+1)(1+z)^k(kz+z-1)+2k(1+z)^{k+1}(k+1)+(k^2+3k+2) \\
     & =2k(k+1)(1+z)^k(kz+2z)+(k^2+3k+2)>0.
  \end{align*}
  Therefore $\kappa>0$ on $(0,\infty)$.

  The smoothness of $\psi$ is established by \Cref{fact:psi_smooth}.
\end{proof}

\subsection{Warm start tools for the logistic loss}

If we try to prove fast margin rates for the logistic loss directly from
\Cref{fact:bias_main},  we will pay for the bad initial smoothness of the
corresponding $\psi$, which is $n$, and the rate will be $n/t$.
The smoothness later improves, which is proved as follows.

\begin{proof}[Proof of \Cref{fact:warm_start}]
  We first prove that for all $\xi\in\R^n$ with $\psi(\xi)\le0$, it holds that
  $1\le\enVert{\nabla\psi(\xi)}_1\le2$.
  Given $\xi\in\R^n$, we have
  \begin{align*}
    \enVert{\nabla\psi(\xi)}_1=\sum_{i=1}^{n}\frac{\ell'(\xi_i)}{\ell'\del{\psi(\xi)}}=\sum_{i=1}^{n}\frac{\ell'\del{\ell^{-1}\del{\ell(\xi_i)}}}{\ell'\del{\ell^{-1}\del{\sum_{i=1}^{n}\ell(\xi_i)}}}.
  \end{align*}
  Consider the function $\rho(z):=\ell'\del{\ell^{-1}(z)}=1-e^{-z}$.
  It holds that $\rho(0)=0$, and on $z\in[0,\ell(0)]=[0,\ln(2)]$, we have
  $\rho(z)'\in[1/2,1]$, and $\rho$ is subadditive: for all $a,b>0$ with
  $a+b\le\ln(2)$, it holds that $\rho(a+b)\le\rho(a)+\rho(b)$.
  Now note that since $\psi(\xi)\le0$, we have
  $\sum_{i=1}^{n}\ell(\xi_i)\le\ell(0)$, and thus the subadditivity of $\rho$
  implies
  \begin{align*}
    \enVert{\nabla\psi(\xi)}_1=\sum_{i=1}^{n}\frac{\ell'\del{\ell^{-1}\del{\ell(\xi_i)}}}{\ell'\del{\ell^{-1}\del{\sum_{i=1}^{n}\ell(\xi_i)}}}=\frac{\sum_{i=1}^{n}\rho\del{\ell(\xi_i)}}{\rho\del{\sum_{i=1}^{n}\ell(\xi_i)}}\ge1.
  \end{align*}
  On the other hand, the mean value theorem implies
  \begin{align*}
    \enVert{\nabla\psi(\xi)}_1=\frac{\sum_{i=1}^{n}\rho\del{\ell(\xi_i)}}{\rho\del{\sum_{i=1}^{n}\ell(\xi_i)}}\le \frac{\sum_{i=1}^{n}\ell(\xi_i)}{\frac{1}{2}\sum_{i=1}^{n}\ell(\xi_i)}=2.
  \end{align*}

  Then we show that $\psi$ is $2$-smooth with respect to the $\ell_\infty$ norm
  on the sublevel set $\cbr{\xi\middle|\psi(\xi)\le0}$.
  Recall from \cref{eq:smooth_check} that we only need to check
  \begin{align*}
    \sum_{i=1}^{n}\frac{\ell''(\xi_i)}{\ell'\del{\psi(\xi)}}v_i^2\le2\max_{1\le i\le n}v_i^2.
  \end{align*}
  This is true since $\ell''\le\ell'$, and
  $\sum_{i=1}^{n}\ell'(\xi_i)/\ell'\del{\psi(\xi)}=\enVert{\nabla\psi(\xi)}_1\le2$.

  Next we prove the iteration guarantee on $\psi$.
  Let
  \begin{align*}
    \tilde{\eta}:=\argmax\cbr{0\le\heta\le1\middle|\psi\del{Z(w_t-\heta Z^\top q_t)}\le0},\textrm{ and }\tilde{w}:=w_t-\tilde{\eta}Z^\top q_t.
  \end{align*}
  Since $\cL(Zw_t)<\ell(0)$, we have $\psi(Zw_t)<0$, and thus $\tilde{\eta}>0$.
  We claim that $\tilde{\eta}\ge1/2$.
  If this is not true, then we must have $\psi(\tilde{w})=0$.
  Since $\psi$ is convex, and $\psi(Zw_t)<0$, the line between $Zw_t$
  and $Z\tilde{w}$ are all in the sublevel set $\cbr{\xi\middle|\psi(\xi)\le0}$.
  Using $2$-smoothness of $\psi$, and the same analysis as in
  \Cref{fact:standard_smooth}, we have
  \begin{align}
    \psi(Z\tilde{w})-\psi(Zw_t) & \le\ip{q_t}{Z\tilde{w}-Zw_t}+\|Z\tilde{w}-Zw_t\|_\infty^2 \nonumber \\
     & =-\tilde{\eta}\enVert{Z^\top q_t}^2+\tilde{\eta}^2\enVert{ZZ^\top q_t}_\infty^2 \nonumber \\
     & \le-\tilde{\eta}\enVert{Z^\top q_t}^2+\tilde{\eta}^2\enVert{Z^\top q_t}^2. \label{eq:warm_start_iter}
  \end{align}
  Since $\psi(Zw_t)<0$, and $0<\tilde{\eta}\le1/2$ due to our assumption, and
  $\enVert{Z^\top q_t}>0$ by \Cref{fact:ztq_unique}, we have
  $\psi(Z\tilde{w})<0$, a contradiction.
  As a result, $\tilde{\eta}\ge1/2$, and the iteration guarantee follows from
  \cref{eq:warm_start_iter}.

  Next we prove the strong-convexity-style property for $\psi^*$.
  Let $\xi,\xi'$ satisfy
  \begin{align*}
    \psi(\xi),\psi(\xi')\le\ell^{-1}\del{\frac{\ell(0)}{2e^2}},\quad\textrm{or}\quad\cL(\xi),\cL(\xi')\le \frac{\ell(0)}{2e^2}.
  \end{align*}
  Since $\cL(\xi)=\sum_{i=1}^{n}\ell(\xi_i)$, it follows that for all
  $1\le i\le n$, we have $\ell(\xi_i),\ell(\xi'_i)\le\ell(0)/(2e^2)$, and thus
  $\xi_i,\xi'_i\le0$.
  Note that for all $z\le0$, we have $e^z/2\le\ln(1+e^z)\le e^z$, therefore
  \begin{align*}
    \frac{\ell(z+2)}{\ell(z)}=\frac{\ln(1+e^{z+2})}{\ln(1+e^z)}\le \frac{e^{z+2}}{e^z/2}=2e^2.
  \end{align*}
  Consequently, for all $\tilde{\xi}\in\R^n$ such that
  $\|\xi-\tilde{\xi}\|_\infty\le 2$, it holds that $\cL(\tilde{\xi})\le\ell(0)$,
  and thus $\psi(\tilde{\xi})\le0$.
  Now let $\theta=\nabla\psi(\xi)$, and $\theta'=\nabla\psi(\xi')$, and
  \begin{align*}
    \tilde{\xi}_i:=\xi_i-\frac{\|\theta-\theta'\|_1}{2}\cdot\sgn\del{\theta_i-\theta'_i}.
  \end{align*}
  Recall from the proof of \Cref{fact:dual_main} that
  \begin{align}
    D_{\psi^*}(\theta',\theta) & =\psi(\xi)-\psi(\xi')-\langle\theta',\xi-\xi'\rangle \nonumber \\
     & =\psi(\xi)-\psi(\tilde{\xi})+\psi(\tilde{\xi})-\psi(\xi')-\langle \theta',\xi-\xi'\rangle \nonumber \\
     & =\psi(\xi)-\psi(\tilde{\xi})-\langle\theta',\xi-\tilde{\xi}\rangle+\psi(\tilde{\xi})-\psi(\xi')-\langle\theta',\tilde{\xi}-\xi'\rangle. \label{eq:smooth_check_tmp1}
  \end{align}
  Note that $\|\theta\|_1,\|\theta'\|_1\le2$, therefore
  $\|\theta-\theta'\|_1\le4$.
  It then follows that
  $\|\xi-\tilde{\xi}\|_\infty=\|\theta-\theta'\|_1/2\le2$ and
  $\psi(\xi),\psi(\tilde{\xi})\le0$, and since $\psi$ is $2$-smooth on the
  $0$-sublevel set, we have
  \begin{align}\label{eq:smooth_check_tmp2}
    \psi(\xi)-\psi(\tilde{\xi})\ge \langle\theta,\xi-\tilde{\xi}\rangle-\|\xi-\tilde{\xi}\|_\infty^2=\langle\theta,\xi-\tilde{\xi}\rangle-\frac{\|\theta-\theta'\|_1^2}{4}.
  \end{align}
  Then \cref{eq:smooth_check_tmp1,eq:smooth_check_tmp2} and the convexity of
  $\psi$ imply
  \begin{align*}
    D_{\psi^*}(\theta',\theta) & \ge \langle\theta,\xi-\tilde{\xi}\rangle-\frac{\|\theta-\theta'\|_1^2}{4}-\langle\theta',\xi-\tilde{\xi}\rangle+\psi(\tilde{\xi})-\psi(\xi')-\langle\theta',\tilde{\xi}-\xi'\rangle \\
     & \ge \langle\theta,\xi-\tilde{\xi}\rangle-\frac{\|\theta-\theta'\|_1^2}{4}-\langle\theta',\xi-\tilde{\xi}\rangle \\
     & =\langle\theta-\theta',\xi-\tilde{\xi}\rangle-\frac{\|\theta-\theta'\|_1^2}{4}.
  \end{align*}
  By the construction of $\tilde{\xi}$, we have
  $\langle\theta-\theta',\xi-\tilde{\xi}\rangle=\|\theta-\theta'\|_1^2/2$,
  therefore
  \begin{align*}
    D_{\psi^*}(\theta',\theta)\ge \frac{\|\theta-\theta'\|_1^2}{4}.
  \end{align*}
\end{proof}

The preceding analysis requires $\cL(Zw_t)\leq \ell(0) / (2e^2)$.
We now produce a second analysis to handle those initial iterations leading to this condition.

\begin{lemma}
  \label{fact:warm_start:2}
  Consider the logistic loss $\ln(1+e^z)$, with step size
  $\eta_j = 1 / (2\cR(w_j))$.
  Suppose \Cref{cond:sep} holds,
  and let $\gamma$ and $\baru$ denote the corresponding maximum margin
  value and direction.  Then the first iteration $t$
  with $\cL(Zw_t) \leq \ell(0)/(2e^2)$ satisfies  $\psi(Zw_t) \leq 0$ and
  \[
    t \leq  \del{\frac {256 \ln n}{\gamma}}^2
    \qquad\textrm{and}\qquad
    \|w_{t}\| \leq \frac {256 \ln n}{\gamma}.
  \]
\end{lemma}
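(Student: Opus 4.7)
The plan is to apply a standard SGD-style ``descent along the max-margin direction'' argument with the potential $\|w_t-\alpha\baru\|^2$ for $\alpha=\Theta(\ln n/\gamma)$, combined with the per-step risk decrease of \Cref{fact:exp_smooth}. Since $\eta_j\cR(w_j)=1/2\le 1$, \Cref{fact:exp_smooth} applies throughout and $\cR(w_j)$ is nonincreasing, so it suffices to upper bound the first iteration $t_0$ at which $\cR(w_{t_0})\le \ell(0)/(2ne^2)$. Two properties of the logistic loss drive everything: first, $\ell'(z)=1-e^{-\ell(z)}\le\ell(z)$, which implies $\|\nR(w)\|\le \cR(w)$ and hence $\eta_j\|\nR(w_j)\|\le 1/2$; second, the margin condition gives $\cR(\alpha\baru)\le\ln(1+e^{-\alpha\gamma})\le e^{-\alpha\gamma}$.

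Expanding the potential and lower-bounding $\langle w_j-\alpha\baru,\nR(w_j)\rangle \ge \cR(w_j)-\cR(\alpha\baru)$ by convexity of $\cR$, then using $2\eta_j\cR(w_j)=1$, $w_0=0$, and telescoping, I get
\[
  \|w_t-\alpha\baru\|^2 \le \alpha^2 - t + 2e^{-\alpha\gamma}\sum_{j<t}\eta_j + \sum_{j<t}\eta_j^2\|\nR(w_j)\|^2.
\]
For $j<t_0$, the bound $\cR(w_j) \ge \ell(0)/(2ne^2)$ gives $\eta_j \le ne^2/\ell(0)$; choosing $\alpha\gamma \ge \ln n + O(1)$ (still $\alpha = O(\ln n/\gamma)$) makes $2e^{-\alpha\gamma}\sum_{j<t}\eta_j \le t/4$. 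For the quadratic residual, from $\eta_j\|\nR(w_j)\|\le 1/2$ I get $\sum_{j<t}\eta_j^2\|\nR(w_j)\|^2 \le \tfrac12 \sum_{j<t}\eta_j\|\nR(w_j)\|$, and Cauchy--Schwarz against the constant vector self-bounds the RHS by $\tfrac12\sqrt{t\cdot S}$ where $S := \sum_{j<t}\eta_j^2\|\nR(w_j)\|^2$, so $S \le t/4$. Combining, for every $t\le t_0$,
\[
  0 \le \|w_t-\alpha\baru\|^2 \le \alpha^2 - t/2,
\]
so $t_0 \le 2\alpha^2 = O((\ln n/\gamma)^2)$, comfortably under $(256\ln n/\gamma)^2$ once the constants are tracked. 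The triangle inequality at $t_0$ then yields $\|w_{t_0}\| \le \alpha + \|w_{t_0}-\alpha\baru\| \le 2\alpha \le 256\ln n/\gamma$. The remaining claim $\psi(Zw_{t_0}) \le 0$ is immediate: $\cL(Zw_{t_0}) \le \ell(0)/(2e^2) < \ell(0)$, and $\psi = \ell^{-1}\circ\cL$ with $\ell^{-1}(\ell(0))=0$ and $\ell^{-1}$ increasing.

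The main obstacle is the quadratic residual $\sum_{j<t}\eta_j^2\|\nR(w_j)\|^2$: the adaptive step $\eta_j = 1/(2\cR(w_j))$ can swell to $ne^2/\ell(0)$ just before $t_0$, so the naive ``$\sum_j \eta_j^2\|\nR(w_j)\|^2 \le (\max_j \eta_j)\sum_j \eta_j\|\nR(w_j)\|^2$'' estimate is ruinous. The resolution is the two-step device of peeling off one factor of $\eta_j\|\nR(w_j)\|\le 1/2$ to a constant and then using Cauchy--Schwarz to self-bound the remaining sum to $O(t)$, giving an estimate uniform in $\eta_j$. This is precisely what lets the $(\ln n/\gamma)^2$ bound close despite the aggressive adaptive step size.
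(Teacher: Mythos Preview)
Your proposal is correct and follows essentially the same route as the paper: expand the potential $\|w_j-\alpha\baru\|^2$, apply convexity of $\cR$, telescope, and balance the resulting terms with $\alpha=\Theta(\ln n/\gamma)$.

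One remark: you overcomplicate the quadratic residual. You already have $\eta_j\|\nR(w_j)\|\le 1/2$; simply squaring gives $\eta_j^2\|\nR(w_j)\|^2\le 1/4$ for each $j$, so $S\le t/4$ directly---this is exactly what the paper does. Your Cauchy--Schwarz self-bounding trick is valid but unnecessary, and the paragraph framing it as ``the main obstacle'' and ``precisely what lets the bound close'' overstates its role: the whole point of the adaptive step $\eta_j=1/(2\cR(w_j))$ together with $\|\nR(w_j)\|\le\cR(w_j)$ is that $\eta_j\|\nR(w_j)\|$ is uniformly bounded by $1/2$, which immediately controls the square.
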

\begin{proof}
  Let $t$ denote the first iteration with $\cR(w_t) \leq  1 /(32n)$,
  whereby $\cL(Zw_t) \leq \ln(2) / (2e^2)$ and $\psi(Zw_t)\leq 0$.
  Additionally, define $r:= 128\ln n /\gamma$ and $u := r\baru$;
  Expanding
  the square and invoking \Cref{fact:exp_smooth} and convexity of $\cR$,
  for any $j<t$,
  \begin{align*}
    \enVert{w_{j+1} - u}^2
    &=
    \enVert{w_j - u}^2
    - 2\eta_j \ip{\ncR(w_j)}{w_j - u} + \eta_j^2 \|\ncR(w_j)\|^2
    \\
    &\leq
    \enVert{w_j - u}^2
    + 2\eta_j \del{ \cR(u) - \cR(w_j) }
    + \eta_j^2 \|\ncR(w_j)\|^2.
  \end{align*}
  Applying $\sum_{j<t}$ to both sides and telescoping,
  \begin{equation}
    2\sum_{j<t} \eta_j \cR(w_{j})
    + \|w_t - u\|^2
    - \|w_0 - u\|^2
    \leq
    2\sum_{j<t} \eta_j \cR(u)
    + \sum_{j<t} \eta_j^2 \|\ncR(w_j)\|^2.
    \label{eq:magic:log:1}
  \end{equation}
  The various terms in this expression can be simplified as follows.
  \begin{itemize}
    \item
      Since $\eta_j = \frac{1}{2\cR(w_j)}$, then $2\sum_{j<t} \eta_j \cR(w_j) = t$.

    \item
      Since $w_0=0$,
      \[
        \|w_t - u\|^2
        - \|w_0 - u\|^2
        =
        \|w_t\|^2 - 2\ip{w_t}{u}
        =
        \|w_t\|^2 - 2 r \ip{w_t}{\baru}.
      \]

    \item
      Since $\llog \leq \lexp$, by the choice of $r$,
      \[
        \cR(u) \leq \cR_{\exp}(u) \leq \frac 1 n \sum_i \exp(-\ip{z_i}{\baru} r)
        \leq \frac 1 {128n},
      \]
      and using
      $\cR(w_j) \geq \frac {1}{32n}$
      and $\eta_j = \frac{1}{2\cR(w_j)} \leq 16n$
      gives
      \[
        2 \sum_{j<t} \eta_j \cR(u) \leq 2 \sum_{j<t} \frac {16n}{128n}
        = \frac t 4.
      \]

    \item
      Since $\ell'\leq\ell$,
      \[
        \|\ncR(w_j)\|
        =
        \enVert{\frac 1 n \sum_{i=1}^n \ell'(\ip{z_i}{w_j})z_i}
        \leq
        \frac 1 n \sum_{i=1}^n \ell'(\ip{z_i}{w_j})
        \enVert{z_i}
        \leq
        \frac 1 n \sum_{i=1}^n \ell(\ip{z_i}{w_j})
        =
        \cR(w_j),
      \]
      and since $\eta_j = \frac{1}{2\cR(w_j)}$,
      \[
        \sum_{j<t}\eta_j^2 \|\ncR(w_j)\|^2
        \leq \sum_{j<t} \eta_j^2 \cR(w_j)^2
        = \frac t {4}.
      \]
  \end{itemize}
  Combining these inequalities with \cref{eq:magic:log:1} gives
  \begin{align*}
    t + \|w_t\|^2 - 2r \ip{w_t}{\baru}
    &\leq
    2\sum_{j<t} \eta_j \cR(w_{j})
    + \|w_t - u\|^2
    - \|w_0 - u\|^2
    \\
    &\leq
    2\sum_{j<t} \eta_j \cR(u)
    + \sum_{j<t} \eta_j^2 \|\ncR(w_j)\|^2
    \\
    &\leq
    \frac {t}{4} + \frac {t}{4},
  \end{align*}
  which rearranges to give
  \begin{equation}
    \frac {t}{2} + \|w_t\|^2 - 2r \ip{w_t}{\baru} \leq 0.
    \label{eq:magic:log:2}
  \end{equation}
  Since $t\geq 0$, then by Cauchy-Schwarz, $\|w_t\| \leq 2r$.  Similarly,
  Cauchy-Schwarz grants
  \[
    \frac {t}{2} + \|w_t\|^2 - 2r \ip{w_t}{\baru}
    \geq
    \frac {t}{2} + \|w_t\|^2 - 2r\|w_t\|,
  \]
  which is in fact minimized when $\|w_t\| = 2r$, giving
  \[
    \frac t 2  + \|w_t\|^2 - 2r \ip{w_t}{\baru}
    \geq
    \frac t 2 - 2r^2,
  \]
  which combined with \cref{eq:magic:log:2} implies $t \leq 4r^2$.
\end{proof}

\end{document}